\documentclass[acmsmall]{acmart}

\AtBeginDocument{%
  }

\setcopyright{acmcopyright}
\copyrightyear{2023}
\acmYear{2023}
\acmDOI{XXXXXXX.XXXXXXX}

\acmJournal{JACM}
\acmVolume{0}
\acmNumber{0}
\acmArticle{0}
\acmMonth{0}




\usepackage[T1]{fontenc}
\usepackage{microtype}
\usepackage{amsfonts, amsmath, amsthm}
\usepackage{bm, bbm}
\usepackage{nicefrac}
\usepackage{booktabs}
\usepackage{graphicx}
\usepackage[font=small, belowskip=0ex]{caption}
\usepackage[aboveskip=-1ex]{subcaption}
\usepackage{csvsimple}

\newtheorem{lemma}{Lemma}											
\newtheorem{theorem}{Theorem}										
	
	
\theoremstyle{definition}		
\numberwithin{equation}{section}									

\DeclareMathOperator{\clip}{clip}
\DeclareMathOperator{\softmax}{softmax}
\newcommand{\ub}[2]{\underbrace{#1}_{\scriptscriptstyle{#2}}}



\begin{document}

\title{Examining Policy Entropy of Reinforcement Learning Agents for Personalization Tasks}

\author{Anton~Dereventsov}
\email{adereventsov@lirio.com}
\author{Andrew~Starnes}
\email{astarnes@lirio.com}
\affiliation{%
  \institution{Lirio AI Research, Lirio LLC}
  \city{Knoxville}
  \state{Tennessee}
  \country{USA}
  \postcode{37923}
}

\author{Clayton~Webster}
\email{cwebster@lirio.com}
\affiliation{%
  \institution{Behavioral Reinforcement Learning Lab, Lirio LLC}
  \city{Knoxville}
  \state{Tennessee}
  \country{USA}
  \postcode{37923}
}


\begin{abstract}
This effort is focused on examining the behavior of reinforcement learning systems in personalization environments and detailing the differences in policy entropy associated with the type of learning algorithm utilized.
We demonstrate that Policy Optimization agents often possess low-entropy policies during training, which in practice results in agents prioritizing certain actions and avoiding others.
Conversely, we also show that Q-Learning agents are far less susceptible to such behavior and generally maintain high-entropy policies throughout training, which is often preferable in real-world applications.
We provide a wide range of numerical experiments as well as theoretical justification to show that these differences in entropy are due to the type of learning being employed.
\end{abstract}

\begin{CCSXML}
<ccs2012>
   <concept>
       <concept_id>10010147.10010257.10010258.10010261</concept_id>
       <concept_desc>Computing methodologies~Reinforcement learning</concept_desc>
       <concept_significance>500</concept_significance>
       </concept>
   <concept>
       <concept_id>10002951.10003317.10003347.10003350</concept_id>
       <concept_desc>Information systems~Recommender systems</concept_desc>
       <concept_significance>500</concept_significance>
       </concept>
   <concept>
       <concept_id>10003752.10003809.10003716</concept_id>
       <concept_desc>Theory of computation~Mathematical optimization</concept_desc>
       <concept_significance>300</concept_significance>
       </concept>
   <concept>
       <concept_id>10002950.10003648.10003670.10003676</concept_id>
       <concept_desc>Mathematics of computing~Expectation maximization</concept_desc>
       <concept_significance>300</concept_significance>
       </concept>
</ccs2012>
\end{CCSXML}

\ccsdesc[500]{Computing methodologies~Reinforcement learning}
\ccsdesc[500]{Information systems~Recommender systems}
\ccsdesc[300]{Theory of computation~Mathematical optimization}
\ccsdesc[300]{Mathematics of computing~Expectation maximization}

\keywords{Reinforcement Learning, Policy Optimization, Q-Learning, Recommender System, Personalization, Entropy}


\maketitle

\section{Introduction}\label{sec:intro}
Recommendation and personalization are often mistakenly considered interchangeable concepts in the context of providing content suggestions tailored to the interests of a particular individual.
However, though both are complex methods of reaching and retaining customers and patients, there are subtle but important differences between the two.
While executing a recommendation generally involves filtering a collection of data based on the historical preferences, interests, or behaviors of a user, personalization always focuses on one thing: each user's own personal attributes.
In other words, recommendation requires a user's previous interactions with content in order to provide bespoke recommendations, whereas personalization instead uses a user's provided characteristics.

In industries such as retail, e-commerce, media apps, or even healthcare, recommendation system models (see, e.g.,~\cite{10.1007/978-1-4899-7637-6_1,10.1609/aimag.v32i3.2361}) are critical to customer retention.
Corporations like Netflix, Spotify, Amazon, etc., use sophisticated collaborative filtering and content-based recommendation systems for video, song, and/or product recommendations~\cite{10.1145/2843948,10.1007/978-1-4899-7637-6_11,10.1145/2959100.2959120,10.1109/MIC.2017.72,10.1145/2623372}.
For a recent overview of recommender systems in the healthcare domain see, e.g.,~\cite{10.1007/s10844-020-00633-6} and the references therein.

Conventional personalization focuses on personal, transactional, demographic, and possibly health-related information, such as an individual's age, residential location, employment, purchases, medical history, etc.
The most basic example of this task involves the inclusion of the customer's name in the subject line or content of an email.
This technique relies on generalization and profiling to make specific assumptions about the individual based on their characteristics.
Additional applications of personalization include: web content personalization and layout customization~\cite{10.1016/j.jss.2016.02.008, Ricci2011IntroductionTR}; customer-centric interaction with healthcare providers~\cite{lasalvia2020personalization, 10.1145/3318236.3318249, lei2017actor, zhu2018robust, hassouni2018personalization, tan2020adaptive}; personalized medical treatments~\cite{2007_aspinall, gs2001personalized, harrison2023zero}.

One of the major challenges associated with personalization techniques is the time required to adapt and update such approaches to changes in individual behaviors, reactions, and choices.
Recently, reinforcement learning (RL) has been increasingly exploited in personalized recommendation systems that continually interact with users (see, e.g., \cite{10.3233/DS-200028} and the references therein).
As opposed to traditional recommendation techniques, RL is a more complex and transformative approach that considers behavioral and  real-time data produced as the result of user action.
Examples of this technique include online browsing behavior, communication history, in-app choices, and other engagement data.
This allows for more individualized experiences like adding individualized engaging sections to the body of an email or sending push notifications at a time when the customer is typically active, which results in more customized communication and thus, ultimately, greater conversion.

For the purpose of this work we consider the two most commonly employed-in-practice types of RL techniques: namely, Policy Optimization (PO) and Q-Learning (QL).
While both approaches strive to learn the optimal policy, the trajectories they follow are different, which results in different behaviors during the training process.
We numerically and theoretically explore this phenomenon throughout this effort and explain that its source stems from the learning objectives that each approach utilizes.
Our main contributions are:
\begin{itemize}
	\item formalization of the phenomenon of differences in behavior of the PO and QL agents;
	\item empirical demonstration of the phenomenon on a wide variety of personalization tasks;
	\item theoretical explanation of the driving factors behind this phenomenon on contextual bandit environments.
\end{itemize}

\subsection{Related Work}
The motivation of our research is to establish knowledge on the fundamental behavior of reinforcement learning agents, which alignes with the Alberta Plan for AI Research~\cite{sutton2022alberta}.
Specifically, the goal of this paper is to understand the effect that the type of learning has on an agent's policy entropy, with a particular focus on the distinction between PO and QL algorithm families.
In~\cite{nachum2017bridging} the authors explore similar differences and propose an approach that unifies both families, though we are interested in understanding the differences.

The undesirable behavior of the agent's policy is conventionally addressed with entropy regularization~\cite{ahmed2019understanding, yang2019regularized}.
However, we are not considering regularization techniques in the current work as our goal is to understand the underlying behavior of the algorithm.

In~\cite{garg2021alternate} the authors address issues with the action distribution placing too much weight on sub-optimal actions by proposing a new gradient estimator.
In~\cite{mei2020escaping} the authors show that using a softmax normalization for a policy gradient agent causes sensitivity to parameter initialization and propose an alternative policy normalization to remedy these issues.
Unlike the stated works, we are less interested in fixing the issue and more focused on understanding what causes it in the first place.

In our numerical experiments we rewrite image classification tasks as contextual bandit environments.
Such an approach was proposed in~\cite{dudik2011doubly} and was utilized in~\cite{swaminathan2015counterfactual, chen2019surrogate, garg2021alternate}.
In our theoretical results we investigate the behavior of the policy dynamics under PO and QL update rules.
This question has been explored in various settings, see e.g.~\cite{agarwal2021theory, wang2019neural}.

As publicly available personalization data is rather scarce, we additionally employ simulated environments that are designed to replicate real-world personalization applications.
Examples of such simulators are presented in~\cite{rohde2018recogym, ie2019recsim, dereventsov2021unreasonable}, where the authors consider a similar problem setting and address related issues.

\section{Preliminaries}\label{sec:preliminaries}

In this paper we consider a contextual bandit setting, defined in~\cite{langford2007epoch}, with discrete action space, which is the standard setting for recommendation and personalization tasks, see e.g.\cite{li2010contextual, tang2015personalized}.
Throughout this paper we use the notational standard MDPNv1~\cite{thomas2015notation}.
Namely, $\mathcal{S} \subset \mathcal{R}^d$ denotes the state (context) space, $\mathcal{A} = \{1, 2, \ldots, K\}$ denotes the action space consisting of $K$ available actions, and $r : \mathcal{S} \times \mathcal{A} \to \mathcal{R}$ denotes the reward function.

The main metric of performance of a reinforcement learning agent is the value $V$ of its policy $\pi$, which is defined as the expected return under the policy, i.e.
\[
	V(\pi) = \mathbb{E} \Big[ r(s,a) \ \big|\ s \sim \mathcal{S}, a \sim \pi(s) \Big].
\]

In addition to the agent's performance, the distribution of the agent's policy $\pi$ is often critical in practical applications as it directly translates to the actions the agent is taking throughout the training process.
A conventional way to quantify the policy distribution is by computing its entropy $\mathcal{H}(\pi)$ given by
\[
	\mathcal{H}(\pi) = \mathbb{E} \Big[ -\sum_{a \in \mathcal{A}} \pi(a|s) \log\pi(a|s) \ \big|\ s \sim \mathcal{S} \Big].
\]
Entropy indicates how distributed the policy is, with more localized policies having lower entropy values, which is known to lead to undesirable results, discussed in e.g.~\cite{dou2008evaluating}.

Note that often the optimal policies are deterministic and therefore have a low entropy, thus $\mathcal{H}(\pi)$ might not actually represent the true distribution of the policy $\pi$, especially in the later stages of the training process.
To address this issue, we introduce the \textit{batch-entropy} $\mathcal{H}_b(\pi;\mathcal{S}_0)$, which is computed on the evaluation set $\mathcal{S}_0 \subset \mathcal{S}$ as
\[
	\mathcal{H}_b(\pi;\mathcal{S}_0)
	= -\sum_{a \in \mathcal{A}} \mathbb{E}\big[ \pi(a|s) \,\big|\, s \sim \mathcal{S}_0 \big]
		\log \mathbb{E}\big[ \pi(a|s) \,\big|\, s \sim \mathcal{S}_0 \big].
\]
Unlike the regular entropy $\mathcal{H}(\pi)$, the batch-entropy $\mathcal{H}_b(\pi;\mathcal{S}_0)$ is computed on the average distribution over the evaluation set $\mathcal{S}_0$ and thus serves as a more reliable representation of the distribution of the policy $\pi$.
In particular, low values of $\mathcal{H}_b(\pi;\mathcal{S}_0)$ indicate that the distribution of $\pi$ is localized on the whole evaluation set $\mathcal{S}_0$ rather than its individual elements.

We also note that in practice it is typically unfeasible to calculate the expectation over the state space, see e.g.~\cite{dereventsov2021offline}, and instead it is conventional to employ some form of Monte Carlo estimate over the available data.
In our numerical examples, the value and the entropy are computed on the fixed evaluation sets to guarantee that the policy's value and entropy are estimated consistently throughout the training process.

In this work we focus on the distinction between Policy Optimization (PO) and Q-Learning (QL) approaches.
PO methods, see, e.g.,~\cite{grondman2012survey}, attempt to directly maximize the expected return of the policy, QL methods, see, e.g.~\cite{jang2019q}, attempt to learn the reward signal, which implicitly leads to higher returns.

Despite being commonly used in practical RL environments, see, e.g., \cite{srivihok2005commerce, wang2017interactive, pan2019policy, li2020sample, jose2021supervised}, PO and QL methods behave very differently when learning an environment involving personalized communication.
While the goal of both approaches is to learn the optimal policy, they typically employ fundamentally different solution strategies which result in drastically different behaviors in terms of the policy entropy.
In this work we provide a detailed explanation of this phenomenon, showing that the primary cause comes from the learning objective that each approach utilizes.
In the next section we showcase this point on a simple example.

\section{Sample Experiment}\label{sec:sample}
We demonstrate a  difference in PO and QL learning styles in the following toy problem on MNIST dataset.
Consider the contextual bandit $(\mathcal{S,A},r)$, where $\mathcal{S} \subset \mathbb{R}^{784}$ is the set of flattened input images, $\mathcal{A} = \{0, 1, \ldots, 9\}$ is the set of available labels, and $r : \mathcal{S \times A} \to \mathbb{R}$ is the binary reward function indicating the correctness of prediction, i.e.
\[
	r(s,a) = \left\{\begin{array}{cl}
		1 & \text{if image } s \text{ has label } a,
		\\
		0 & \text{otherwise}.
	\end{array}\right.
\]

We deploy two basic agents on this environment~--- policy gradient agent (PG) and q-learning agent (QL).
Both agents are parameterized by linear network $\mathcal{Z} : \mathcal{S} \to \mathbb{R}^{10}$, which is represented by the matrix of trainable weights $W \in \mathbb{R}^{10 \times 784}$, i.e. for any state $s \in \mathcal{S}$ we have
\[
	\mathcal{Z}(s) = \ub{W}{10 \times 784} \times \ub{s}{784} \in \mathbb{R}^{10}.
\]
Then the policies $\pi_{pg}$ and $\pi_{ql}$ are derived as
\begin{align*}
	\pi_{pg}(s) &= \softmax(\mathcal{Z}^{pg}(s)),
	\\
	\pi_{ql}(s) &= [0, \ldots, 1, \ldots, 0],
\end{align*}
where the value $1$ is at the position of the maximal coordinate of the output vector $\mathcal{Z}^{ql}(s) \in \mathbb{R}^{10}$.

Both agents are being trained via gradient descent optimization with a batch size of $1$.
The learning curves of the agents are presented in Figure~\ref{fig:sample_reward}.
We observe that both agents solve the environment within a comparable time.

\begin{figure}[t]
	\centering
	\begin{subfigure}{.49\linewidth}
		\includegraphics[width=\linewidth]{./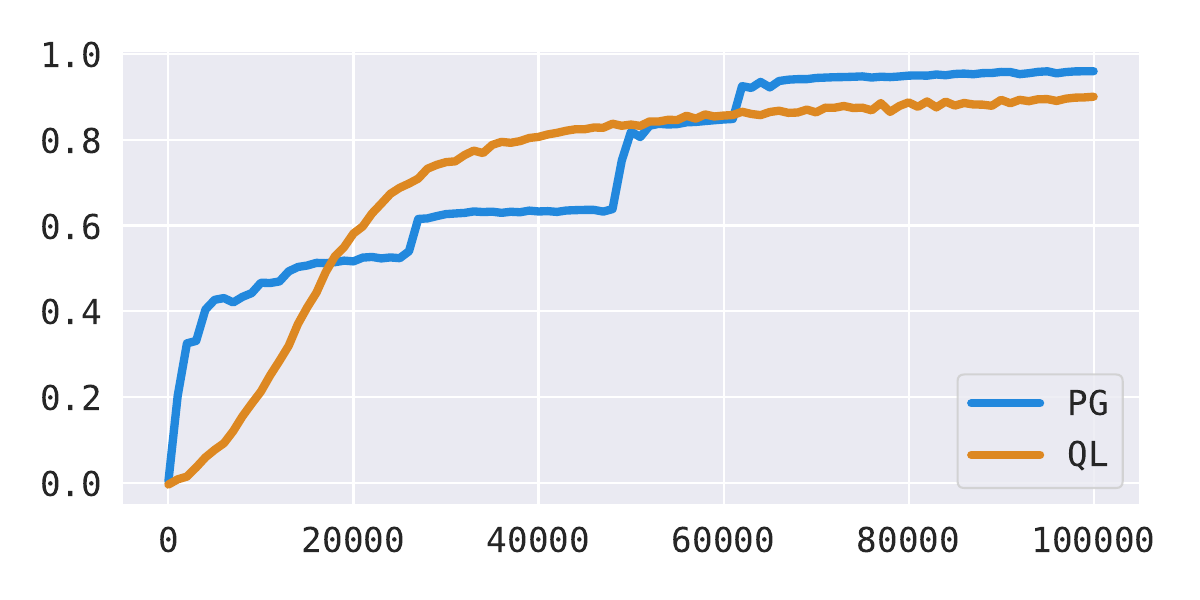}
		\caption{Policy value}
		\label{fig:sample_reward}
	\end{subfigure}
	\begin{subfigure}{.49\linewidth}
		\includegraphics[width=\linewidth]{./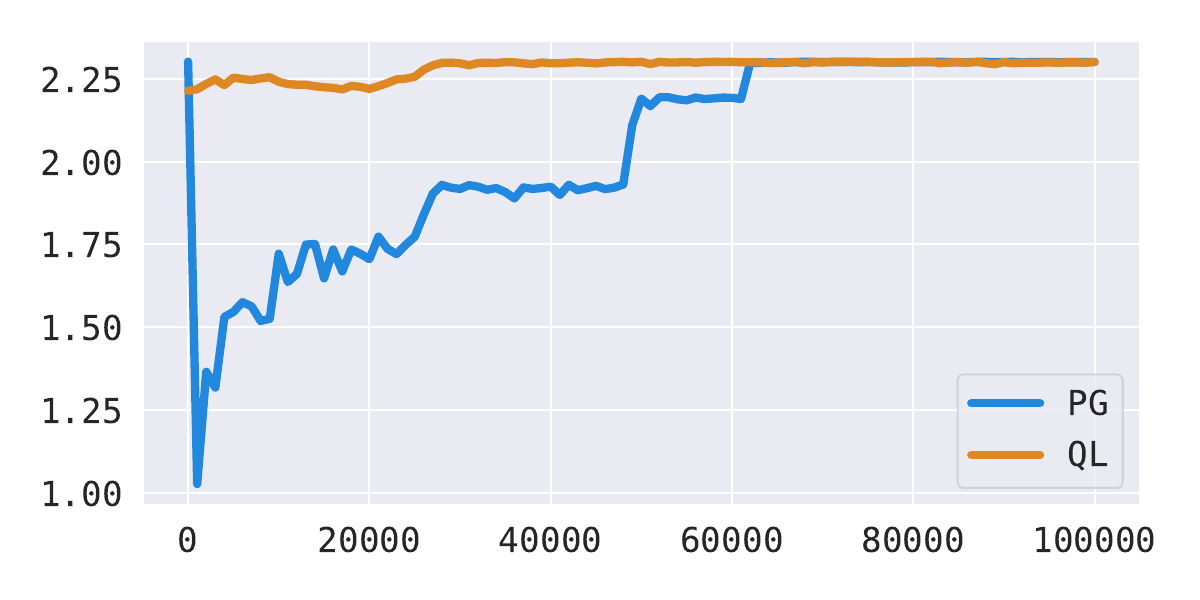}
		\caption{Policy batch-entropy}
		\label{fig:sample_entropy}
	\end{subfigure}
	\caption{Results of the Sample Experiment.}
	\label{fig:sample_reward_entropy}
\end{figure}

However, the learning curves alone do not show the whole process of policy learning and thus are not enough to highlight the difference between the agents.
To illustrate this point, we measure and plot the batch-entropy of the agents' policies throughout the training, presented in Figure~\ref{fig:sample_entropy}.
We observe that, unlike the reward values, the entropies of the policies $\pi_{pg}$ and $\pi_{ql}$ behave very differently.
Namely, the entropy of PG-agent drops early in the training process and only recovers closer to the end of the training, which is an undesirable behavior in practice.

To explore the issue further, we measure and plot action selection histograms, presented in Figure~\ref{fig:sample_dist}, that are obtained as follows: at any evaluation step we compute the label predictions on the test set for each agent.

\begin{figure}[h]
	\centering
	\includegraphics[width=.75\linewidth]{./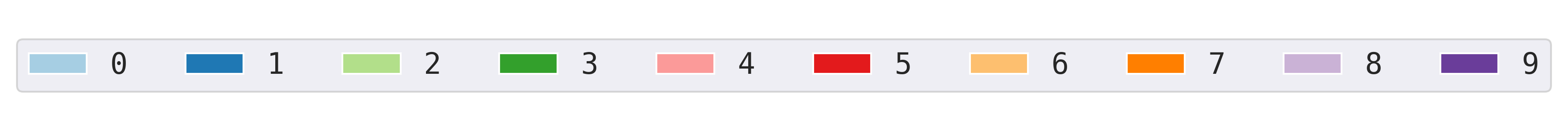}
	\\[-2ex]
	\begin{subfigure}{.49\linewidth}
		\includegraphics[width=\linewidth]{./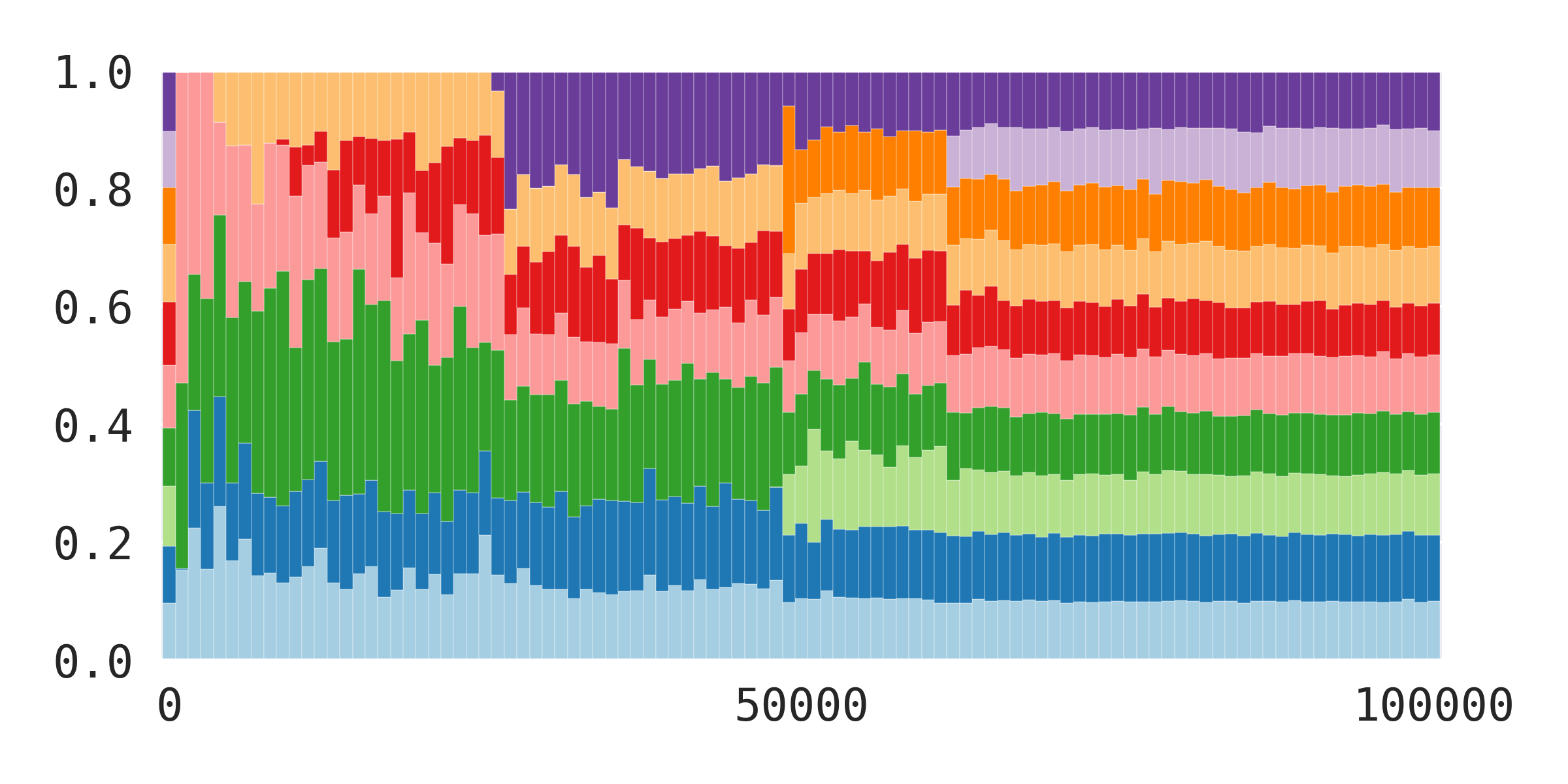}
		\caption{PG-agent}
		\label{fig:sample_dist_pg}
	\end{subfigure}
	\begin{subfigure}{.49\linewidth}
		\includegraphics[width=\linewidth]{./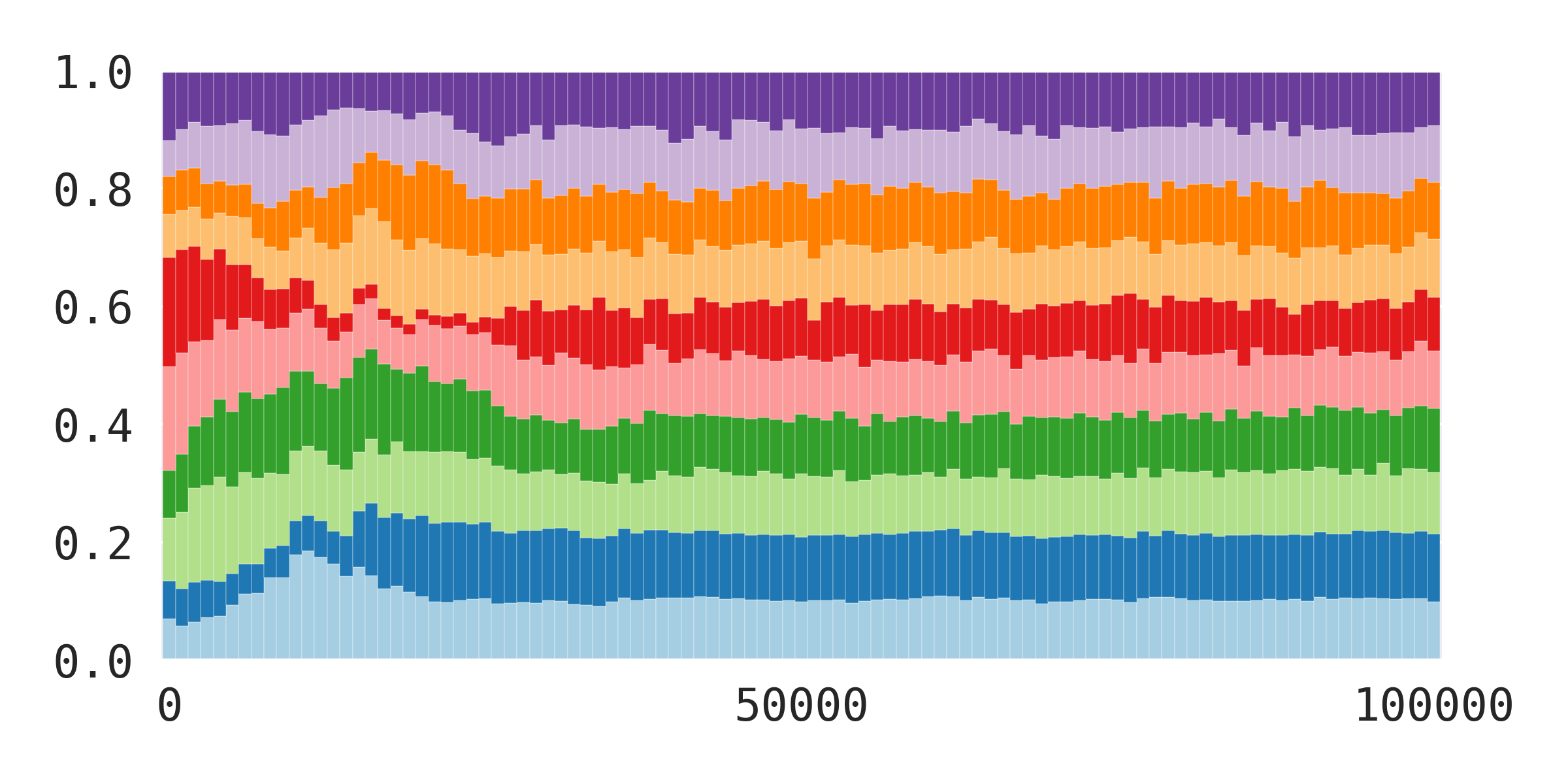}
		\caption{QL-agent}
		\label{fig:sample_dist_ql}
	\end{subfigure}
	\caption{Action selection histograms in the Sample Experiment.}
	\label{fig:sample_dist}
\end{figure}

The histograms indicate that PG-agent does not predict labels $2,7,8$ throughout most of the learning process.
While such an issue is forgivable in simulated environments or offline-trained systems, it is unacceptable due to ethical considerations in online personalization applications where the agent interacts with users in real time, see e.g.~\cite{abel2016reinforcement, whittlestone2021societal}.

Even though this issue can potentially be addressed with a practical solution such as entropy regularization, the goal of this paper is to demonstrate this phenomenon and understand why it happens in the first place.
To that end we state the update rules for the network outputs $\mathcal{Z}$, given in a general form in Theorem~\ref{thm:outputs_linear} in Section~\ref{sec:theory}.

The change $W \leftarrow W^\prime$ of the network weights $W \in \mathbb{R}^{10 \times 784}$ on the interaction $(s,a,r)$ caused by the step of gradient descent with the learning rate $\lambda > 0$ is given by the following update rule:
\[
	W^\prime = W - \lambda \nabla\mathcal{L}(s,a,r),
\]
where $\nabla\mathcal{L}(s,a,r) \in \mathbb{R}^{10 \times 784}$ is the value of the loss gradient on the interaction $(s,a,r)$.
The gradient loss for each agent is given as (see Lemma~\ref{thm:grad_pi} in Appendix~\ref{sec:theory_appendix})
\begin{align*}
	\nabla\mathcal{L}_{pg}(s,a,r) &= -r \nabla\log\pi_{pg}(a|s)
	= -r \big[\mathbbm{1}(a=k) - \pi(k|s)\big]_{k=1}^K \times \nabla\mathcal{Z}^{pg}(s),
	\\
	\nabla\mathcal{L}_{ql}(s,a,r) &= -2 (r - \mathcal{Z}^{ql}_a(s)) \nabla\mathcal{Z}^{ql}_a(s),
\end{align*}
where $\mathcal{Z}_a(s)$ is the network output coordinate corresponding to the action $a \in \mathcal{A}$.

In this setting the outputs $z_k : \mathcal{S} \to \mathbb{R}$ for each label $k \in \mathcal{A} = \{0, 1, \ldots, 9\}$ change as follows:
\begin{align*}
	z^{{pg}^\prime}_k(x) &= z^{pg}_k(x) + \lambda r \big( \mathbbm{1}(a=k) - \pi_{pg}(k|s) \big) \langle s, x \rangle,
	\\
	z^{{ql}^\prime}_k(x) &= z^{ql}_k(x) + 2 \lambda \mathbbm{1}(a=k) \big( r - \langle W_a, s \rangle \big) \langle s, x \rangle,
\end{align*}
where $x \in \mathcal{S}$ denotes the network input.

Taking into account that the reward values are binary, i.e. $r(s,a) \in \{0,1\}$, we arrive at
\begin{equation}\label{eq:pg_z_update}
	z^{{pg}^\prime}_k(x) = \left\{\begin{array}{ll}
		z_k^{pg}(x) + \lambda (1 - \pi_{pg}(k|s)) \langle s, x \rangle
		&\text{if label } a \text{ is correct and } a = k,
		\\
		z_k^{pg}(x) - \lambda \pi_{pg}(k|s) \langle s, x \rangle
		&\text{if label } a \text{ is correct and } a \ne k,
		\\
		z_k^{pg}(x)
		&\text{if label } a \text{ is incorrect}.
		\end{array}\right.
\end{equation}
Taking into account that pixels from the input images take values from $[0,1]$, we deduce that $\langle s, x \rangle \ge 0$ for all $s,x \in \mathcal{S}$.
Therefore the update~\eqref{eq:pg_z_update} implies that when the prediction $a \in \mathcal{A}$ is correct, the selected output $z_a^{pg}$ is reinforced for all the inputs, while all other outputs $z_k^{pg}$ with $k \ne a$ are penalized.
If the prediction $a \in \mathcal{A}$ is incorrect, no changes are made to the network.
It is evident how such an update scheme leads to a situation where the agent has a set of ``good'' actions that the agent will select consistently and ``bad'' actions that the agent will avoid if possible~--- the behavior seen in Figure~\ref{fig:sample_dist_pg}.

Conversely, we do not observe a similar entropy drop for QL-agent since that corresponding update rule for the outputs $\mathcal{Z}^{ql}$ is given as
\begin{equation}\label{eq:ql_z_update}
	z^{{ql}^\prime}_k(x) = \left\{\begin{array}{ll}
		z_k^{ql}(x) + 2\lambda (1 - z_k^{ql}(s)) \langle s, x \rangle
		&\text{if } a = k \text{ and label } a \text{ is correct},
		\\
		z_k^{ql}(x) - 2\lambda z_k^{ql}(s) \langle s, x \rangle
		&\text{if } a = k \text{ and label } a \text{ is incorrect},
		\\
		z_k^{ql}(x)
		&\text{if } a \ne k.
		\end{array}\right.
\end{equation}
Note that, unlike the PG-agent updates~\eqref{eq:pg_z_update}, after each interaction $(s,a,r)$ the QL-agent changes only the output $z_a^{ql}$ that was selected.
Moreover, the direction of the output change (i.e. reinforcing/penalizing) is determined not only by the correctness of the prediction $a$, but by the accuracy of the output value $z_a^{ql}(s)$ in relation to the reward value $r = r(s,a)$.

In this toy example we demonstrate that the agents trained with policy optimization and q-learning methods behave fundamentally differently, despite having the same policy values.
The underlying reason for this phenomenon is encompassed in the network output update rules~\eqref{eq:pg_z_update} and~\eqref{eq:ql_z_update}, that are stated for a more general setting in Theorem~\ref{thm:outputs_linear}.
We note that even though the toy example considered in this section is not practical, it helps build an intuition for the source of the distinct behavior of the different learning styles.
Moreover, despite the fact that in practice the environments are more intricate and the agents are more complicated and use additional regularization, the underlying issue still persists and is caused by the same factors, as we explore in the next sections.

\section{Numerical Experiments}\label{sec:numerics}
In this section we deploy reinforcement learning algorithms on a variety of personalization tasks.
The presented experiments are performed in Python3 with the use of publicly available libraries.
The datasets are either publicly available or included in the repository so that the results can be recreated.
Our experiments do not require GPU usage and can be executed on personal laptops.
Both the source code and the necessary datasets are available at~\url{https://github.com/sukiboo/policy_entropy}.

The reinforcement learning agents utilized in our examples are implemented via Stable-Baselines3\footnote{\url{https://github.com/DLR-RM/stable-baselines3}} library~\cite{raffin2021stable}.
Specifically, we deploy the following algorithms:
\begin{itemize}\setlength{\itemsep}{0ex}
	\item Advantage Actor Critic~\cite{mnih2016asynchronous}
	\item Deep Q Network~\cite{mnih2013playing}
	\item Proximal Policy Optimization~\cite{schulman2017proximal}
\end{itemize}
These algorithms were selected because they are widely used in practice and support discrete action spaces and both continuous and discrete state spaces, as such environments best model our setting.

The performance of the agents is measured in terms of policy value and policy batch-entropy, which are computed over the evaluation set that is fixed throughout the training.
Additionally, for each agent we provide the stochastic action selection histogram that represents the agent's policy distribution over the evaluation set.
For the simplicity of presentation the histograms are sorted, though the unmodified histograms for each experiment are provided in Appendix~\ref{sec:numerics_appendix}.
In all the plots presented in Figures~\ref{fig:mnist_exp}--\ref{fig:personalization_exp} the x-axis shows the number of unique agent-environment interactions.

The main point of our numerical experiments is to substantiate the claim that, with all else equal, policy optimization agents tend to have policies of low entropy, while q-learning agents maintain a higher entropy on a wide range of tasks.

\subsection{Image Classification Experiment}
In this experiment we train reinforcement learning agents to predict labels of the given images from the supervised learning datasets.
Specifically, we rewrite the image classification task as a contextual problem, the same way it is proposed in~\cite{dudik2011doubly, swaminathan2015counterfactual, chen2019surrogate}.
For the purpose of this experiment we use MNIST\footnote{\url{http://yann.lecun.com/exdb/mnist/}} and CIFAR10\footnote{\url{https://www.cs.toronto.edu/~kriz/cifar.html}} datasets.

The state space $\mathcal{S}$ consists of the train/test images of corresponding sizes: $28 \times 28$ for MNIST and $32 \times 32 \times 3$ for CIFAR10.
The action space $\mathcal{A} = \{0, 1, \ldots, 9\}$ consists of the available labels: $\{0, 1, 2, 3, 4, 5, 6, 7, 8, 9\}$ for MNIST and $\{$airplane, automobile, bird, cat, deer, dog, frog, horse, ship, truck$\}$ for CIFAR10.
On each interaction an agent observes a state (a training image) $s \in \mathcal{S}$ and takes an action (makes a prediction) $a \in \mathcal{A}$, and receives a reward $r(s,a)$.
The reward function $r : \mathcal{S \times A} \to \mathbb{R}$ is given as
\[
	r(s,a) = \left\{\begin{array}{cl}
				1 &\text{ if } a \text{ is the correct label for } s,
				\\
				\nicefrac{-1}{9} &\text{ if } a \text{ is an incorrect label for } s.
			\end{array}\right.
\]
In this setting a random policy has the value of $0$ and the optimal policy has the value of $1$.

The policy values, policy batch-entropy, and the action selection histograms for MNIST and CIFAR10 datasets are presented in Figures~\ref{fig:mnist_exp} and~\ref{fig:cifar10_exp} respectively.
The hyperparameter choices for each agent are listed in Appendix~\ref{sec:numerics_appendix} in Tables~\ref{tab:mnist_params} and~\ref{tab:cifar10_params} for MNIST and CIFAR10 datasets respectively, and any unspecified parameters are kept at their default values.
The unsorted action selection histograms for MNIST and CIFAR10 datasets are presented in Figures~\ref{fig:mnist_dist_raw} and~\ref{fig:cifar10_dist_raw} respectively in Appendix~\ref{sec:numerics_appendix}.

\begin{figure}[h]
	\centering
	\begin{subfigure}{.49\linewidth}
		\includegraphics[width=\linewidth]{./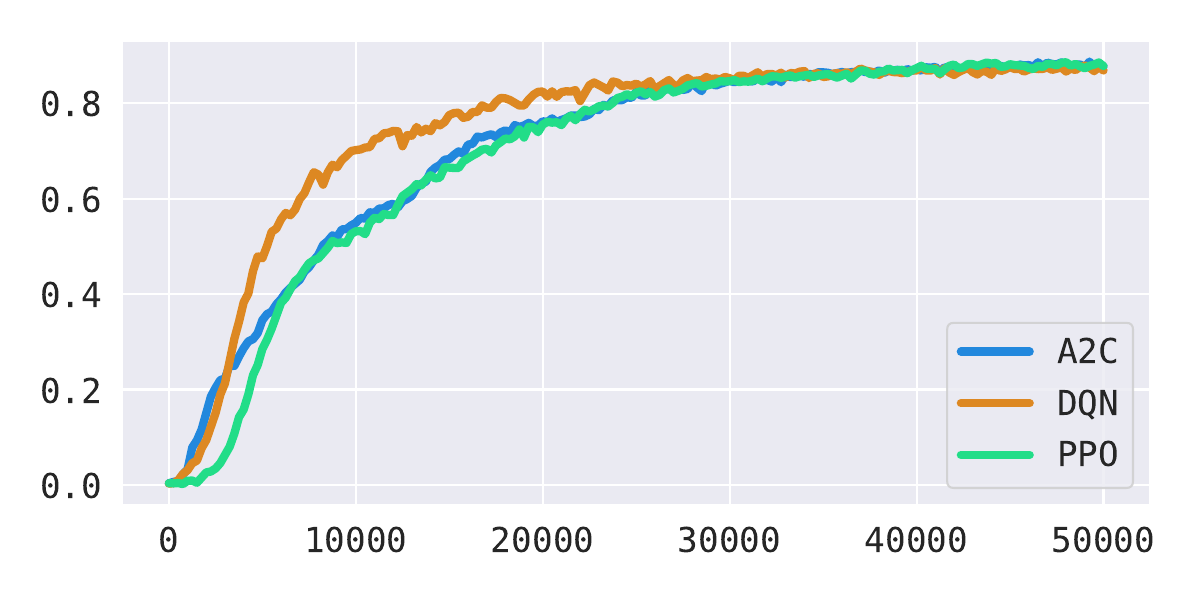}
		\caption{Policy value}
	\end{subfigure}
	\begin{subfigure}{.49\linewidth}
		\includegraphics[width=\linewidth]{./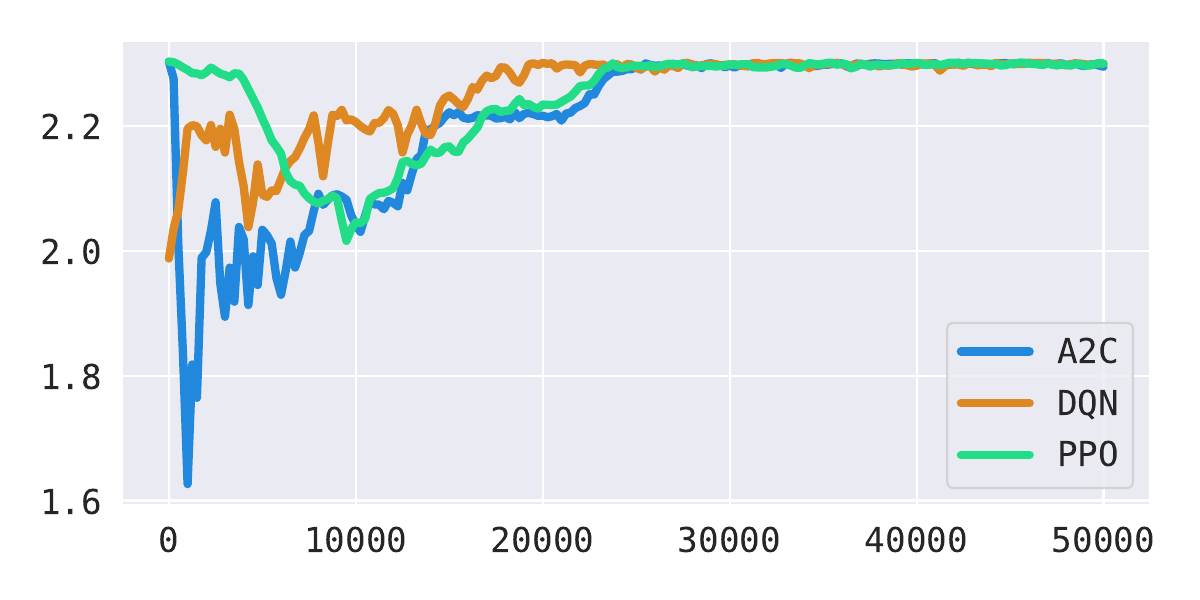}
		\caption{Policy batch-entropy}
	\end{subfigure}
	\\
	\begin{subfigure}{.32\linewidth}
		\includegraphics[width=\linewidth]{./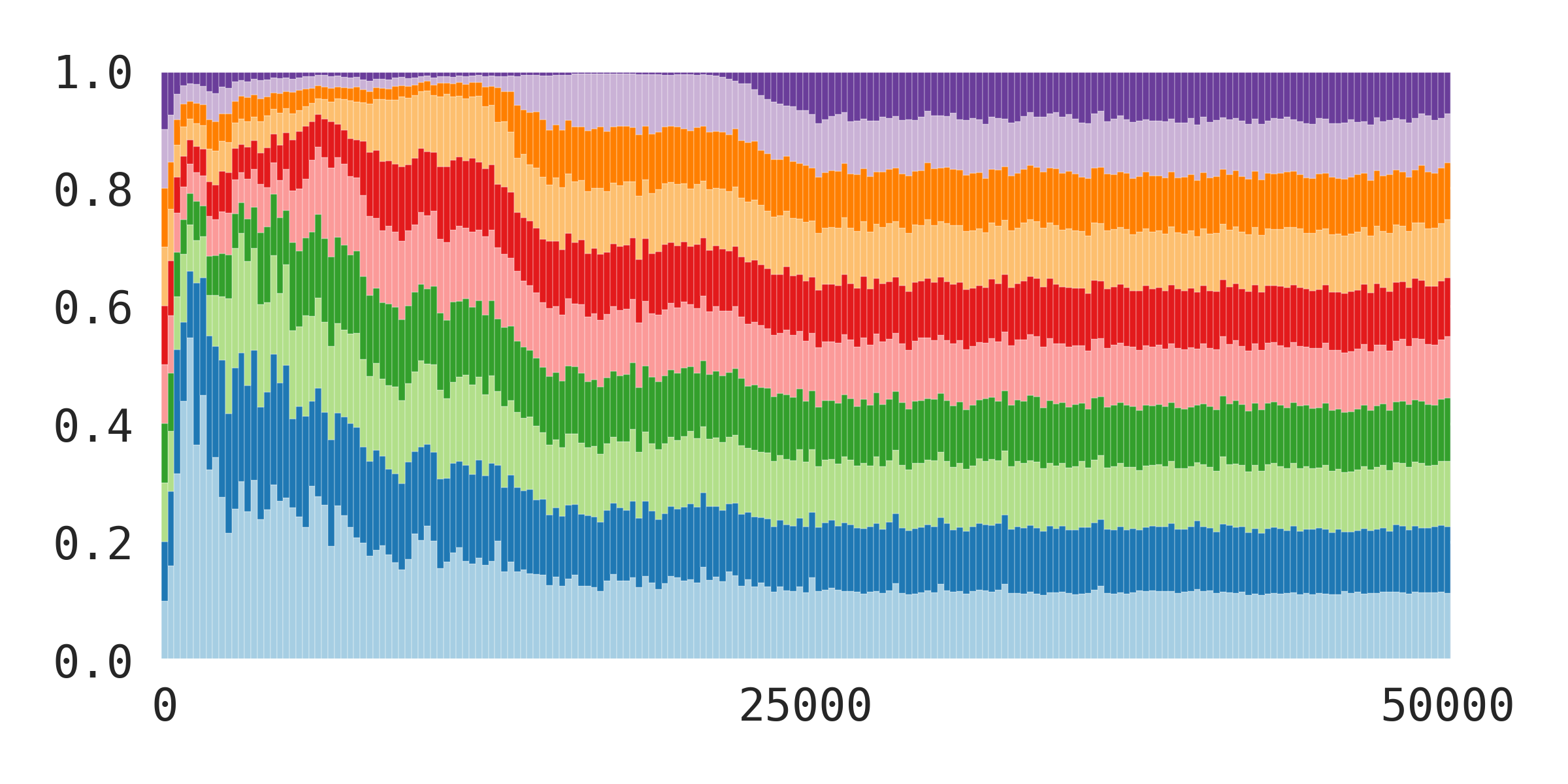}
		\caption{A2C histogram}
	\end{subfigure}
	\begin{subfigure}{.32\linewidth}
		\includegraphics[width=\linewidth]{./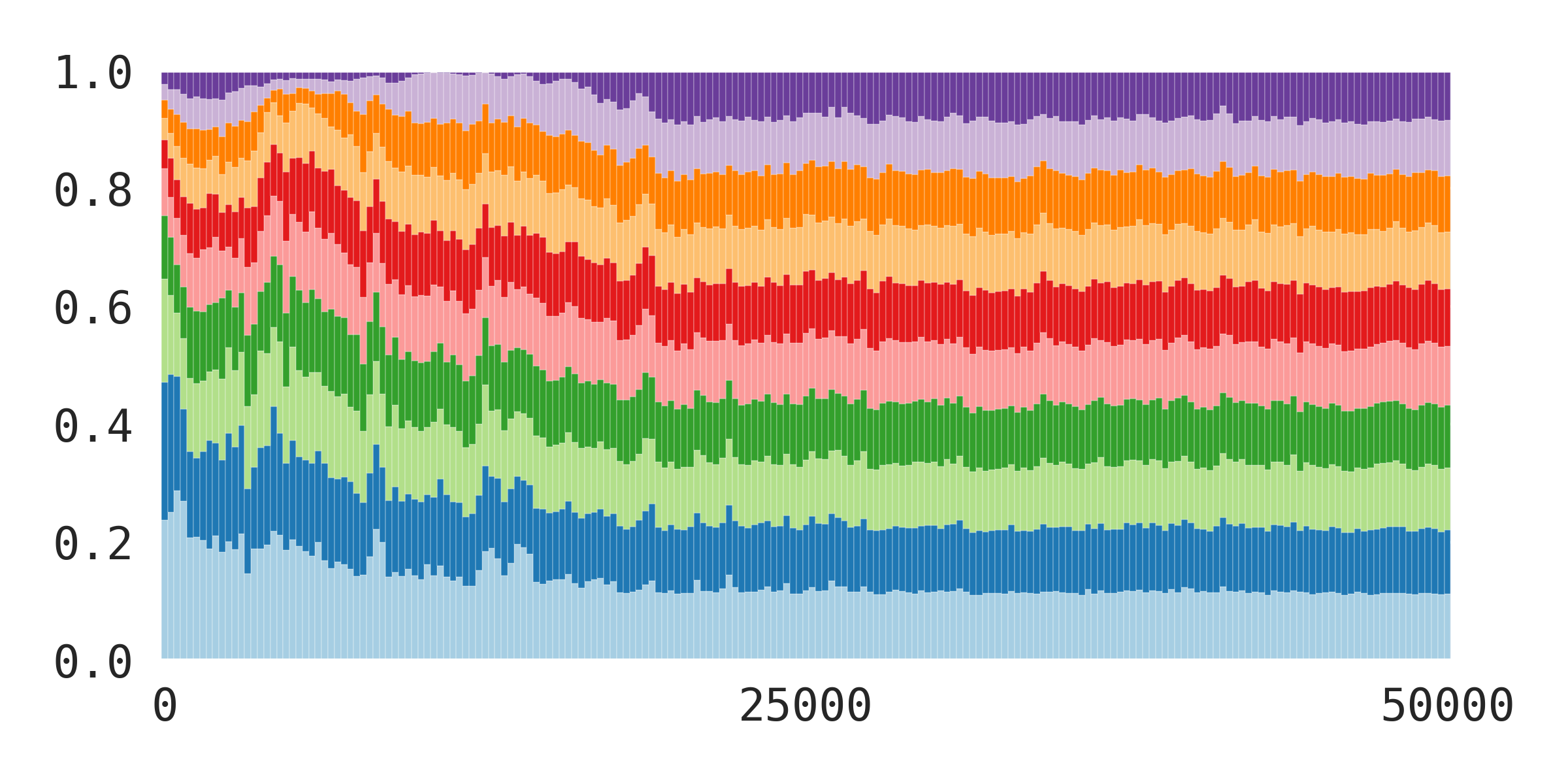}
		\caption{DQN histogram}
	\end{subfigure}
	\begin{subfigure}{.32\linewidth}
		\includegraphics[width=\linewidth]{./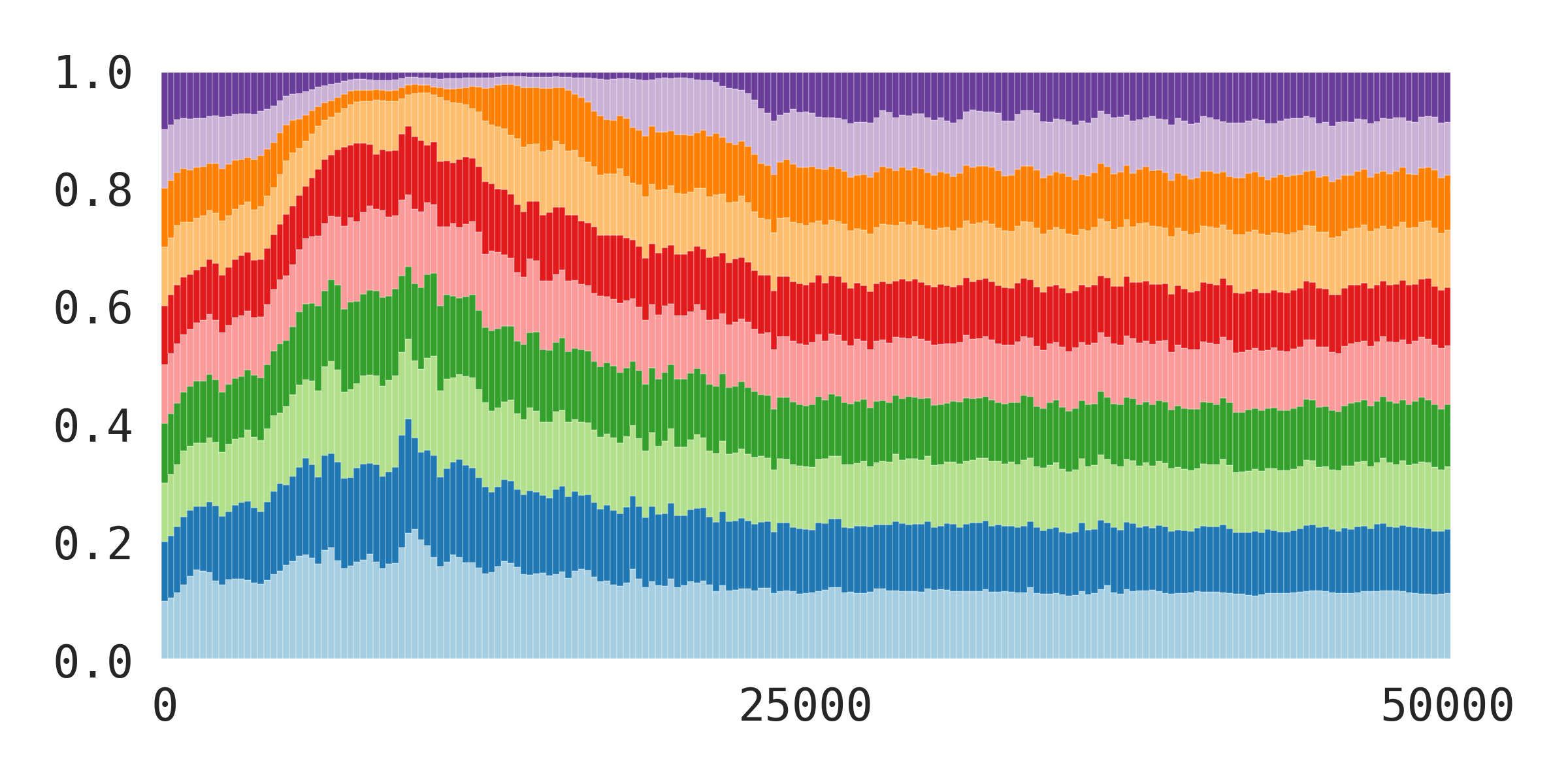}
		\caption{PPO histogram}
	\end{subfigure}
	\caption{Results of the Image Classification Experiment on MNIST dataset.}
	\label{fig:mnist_exp}
\end{figure}

\begin{figure}[h]
	\centering
	\begin{subfigure}{.49\linewidth}
		\includegraphics[width=\linewidth]{./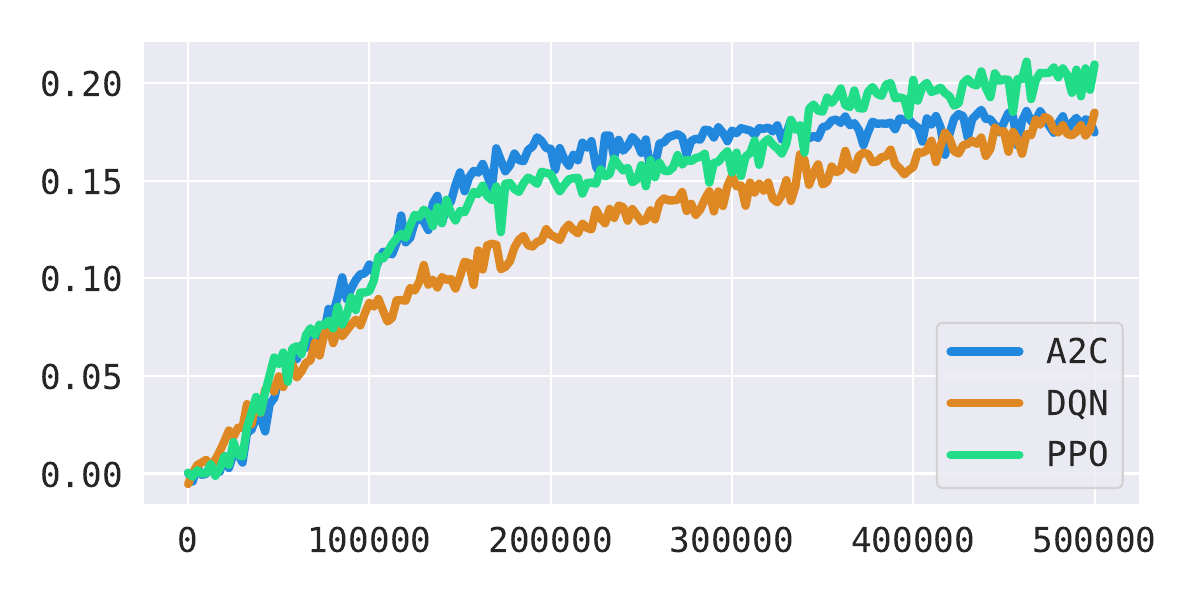}
		\caption{Policy value}
	\end{subfigure}
	\begin{subfigure}{.49\linewidth}
		\includegraphics[width=\linewidth]{./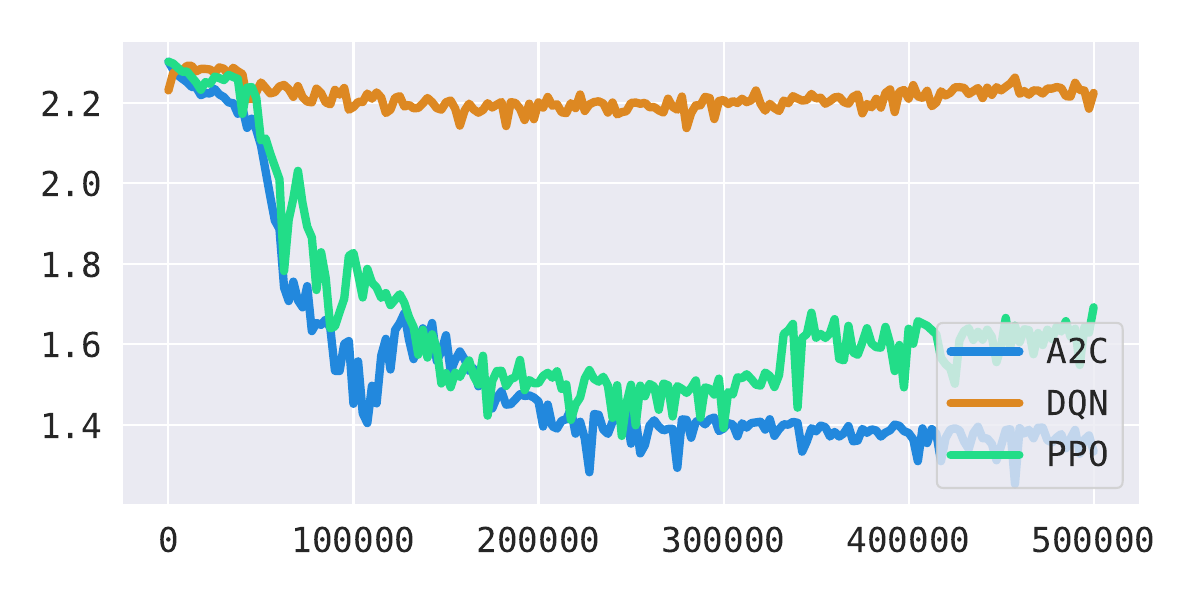}
		\caption{Policy batch-entropy}
	\end{subfigure}
	\\
	\begin{subfigure}{.32\linewidth}
		\includegraphics[width=\linewidth]{./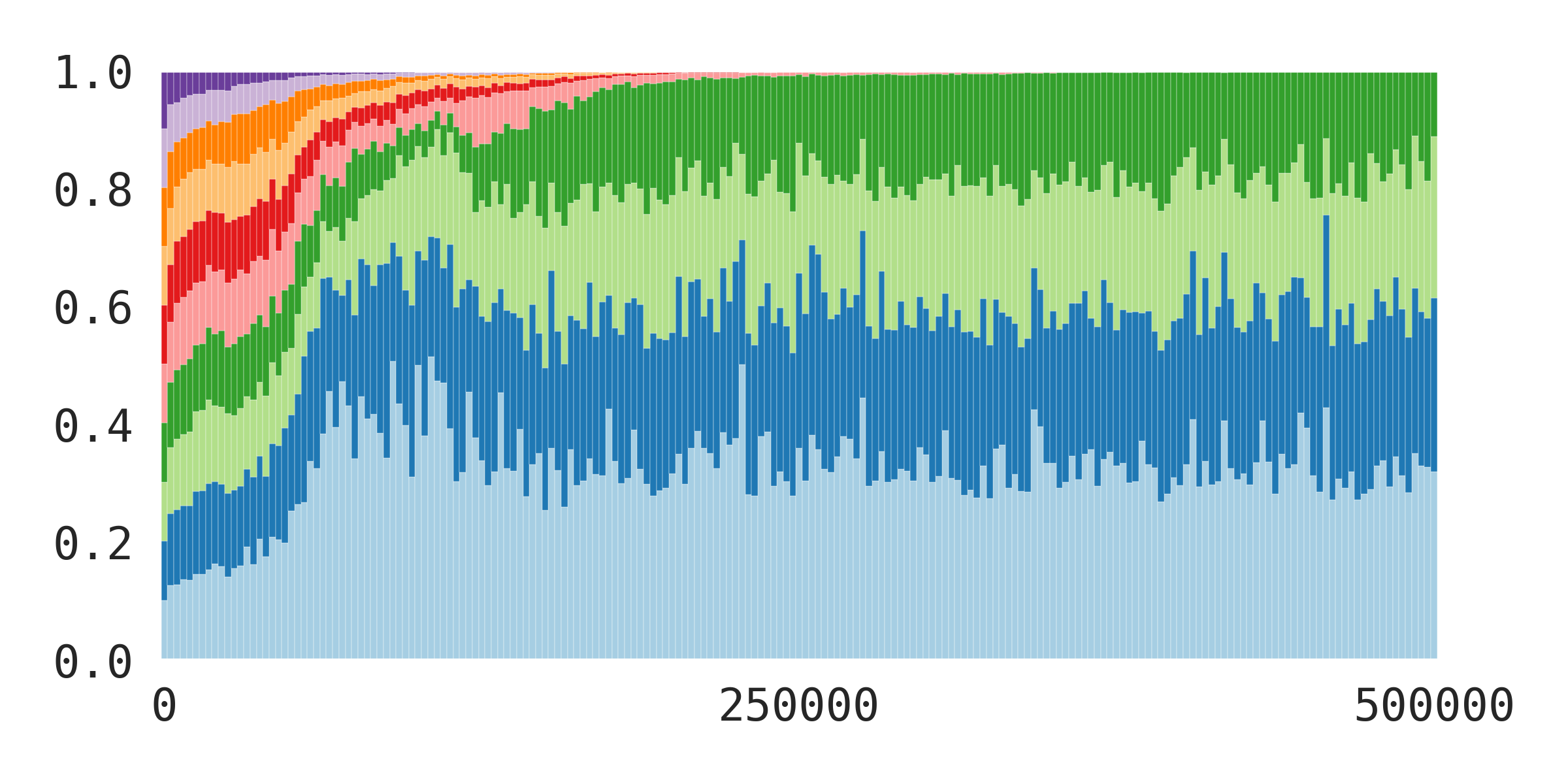}
		\caption{A2C histogram}
	\end{subfigure}
	\begin{subfigure}{.32\linewidth}
		\includegraphics[width=\linewidth]{./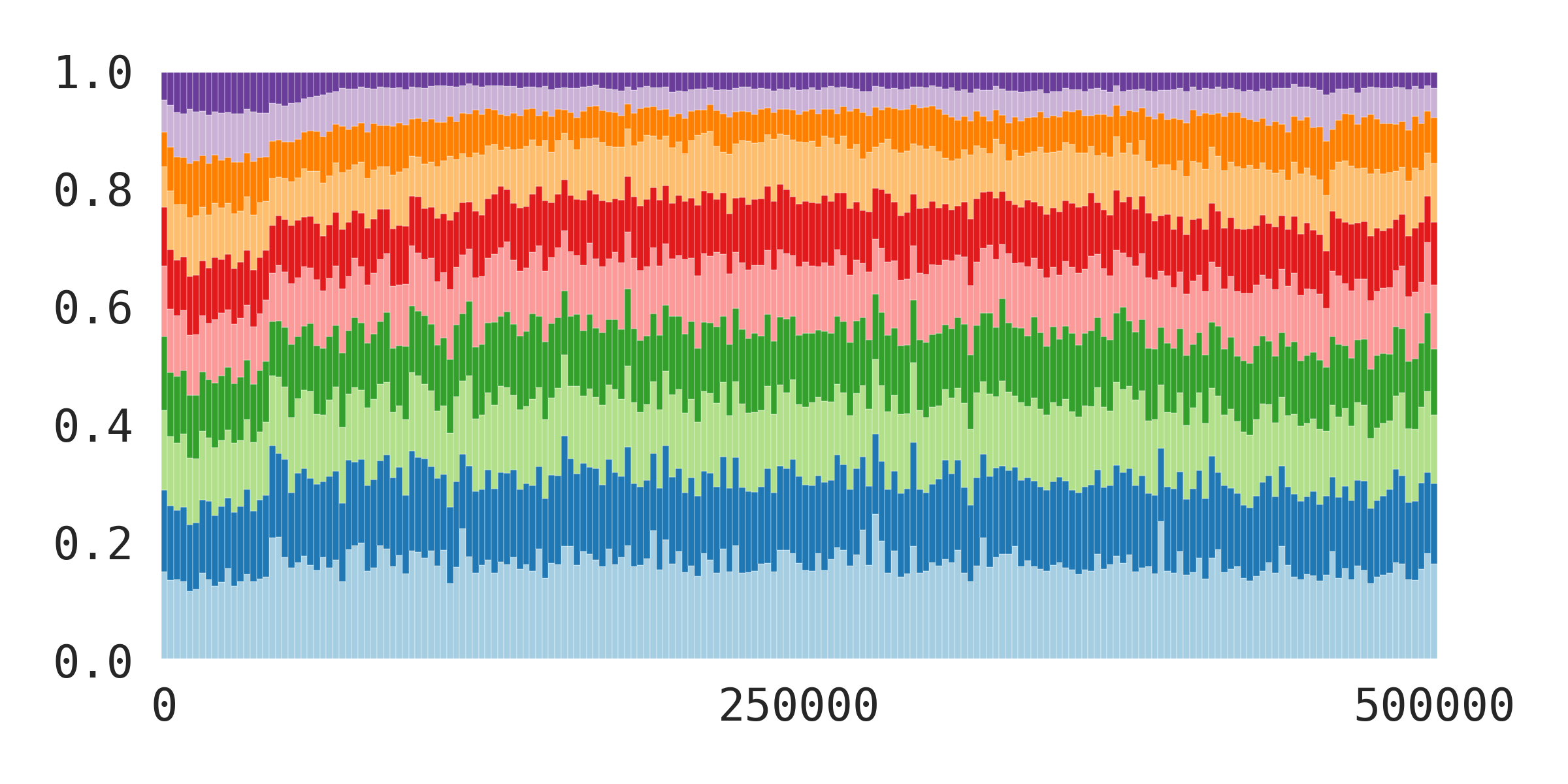}
		\caption{DQN histogram}
	\end{subfigure}
	\begin{subfigure}{.32\linewidth}
		\includegraphics[width=\linewidth]{./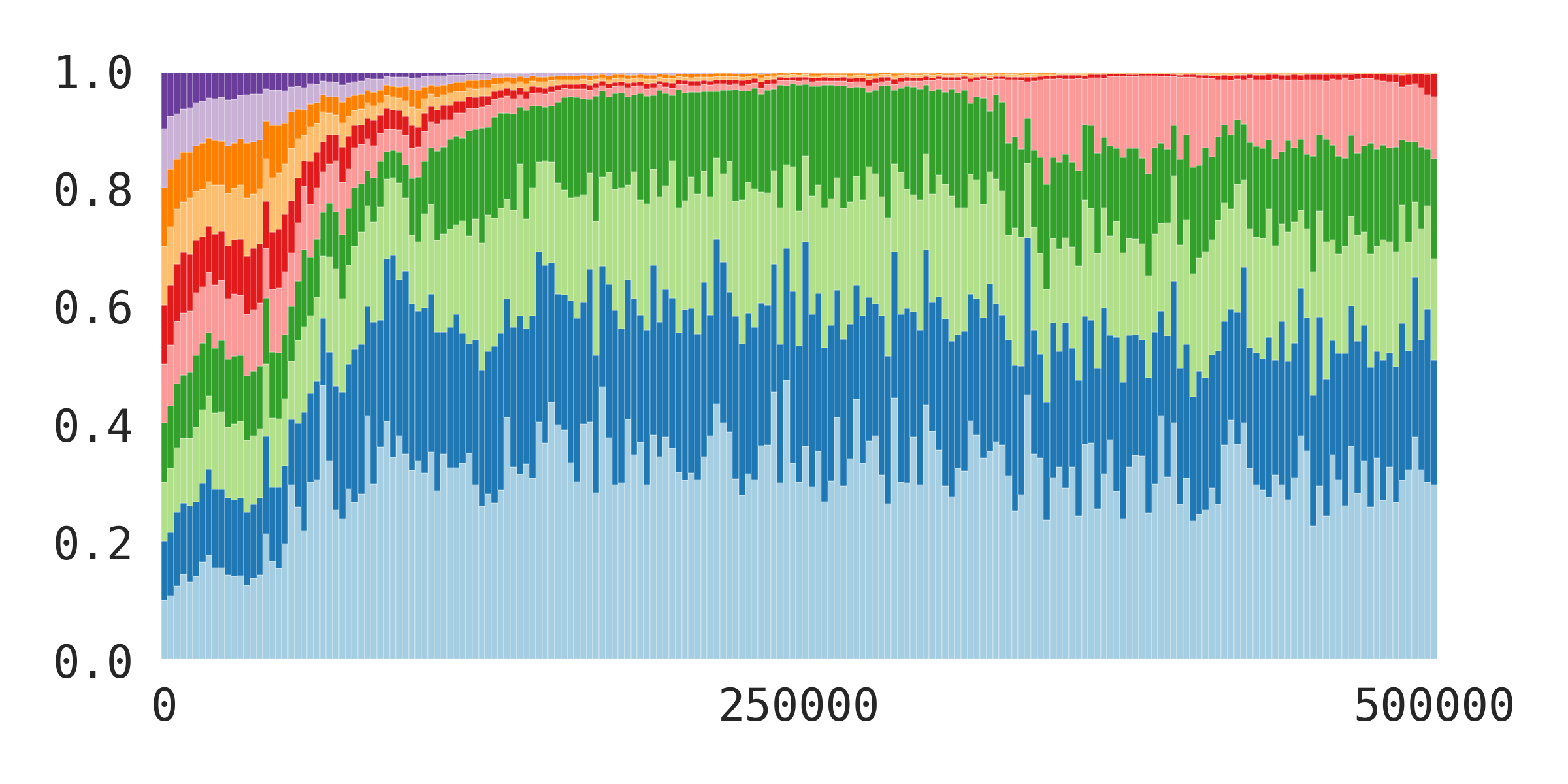}
		\caption{PPO histogram}
	\end{subfigure}
	\caption{Results of the Image Classification Experiment on CIFAR10 dataset.}
	\label{fig:cifar10_exp}
\end{figure}

\subsection{Music Recommendation Experiment}
In this experiment we train a music recommendation agent on the Spotify\footnote{\url{https://www.spotify.com/}} platform.
We obtain data using Spotify Web API\footnote{\url{https://developer.spotify.com/documentation/web-api/}} that we use to train agents to provide personalized recommendations from `Top 50 - Global' playlist based on the user's preferences for musical genres.

The state space $\mathcal{S} = \{0,1\}^{20} \subset \mathbb{R}^{20}$ consists of vectors of user preferences and each user is represented by a vector $s \in \mathcal{S}$ of preferences for musical genres from Table~\ref{tab:spotify_genres} in Appendix~\ref{sec:numerics_appendix}.
Numerically, each coordinate of $s$ is either $0$ or $1$, indicating whether the user likes a particular musical genre or not.
The sparsity of user's preference vectors is between $5\%$ and $25\%$, i.e. each user has a preference to $1$--$5$ out of $20$ available musical genres.

The action space $\mathcal{A} = \{0, 1, \ldots, 49\}$ is a discrete set of tracks to recommend to users.
The available tracks are taken from `Top 50 - Global' playlist presented in Table~\ref{tab:spotify_actions} in Appendix~\ref{sec:numerics_appendix}.
Each track is represented by the vector of $10$ audio features obtained with Spotify Web API, see Figure~\ref{fig:spotify_genres}.
The available features are the following: acousticness, danceability, energy, instrumentalness, liveness, loudness, mode, speechiness, tempo, valence.
Each audio feature is represented by a number from the interval $[0,1]$, indicating how prominent it is in a given track.

For a user $s \in \mathcal{S}$ the suitability of a track $a \in \mathcal{A}$ is determined by its audio features and their relevance to the audio features associated with the user's preferred musical genres displayed in Table~\ref{tab:spotify_features}.
Namely, we compute a user's $s$ preference $p$ for a given track $a$ as
\[
	p(s,a) = \ub{s}{1 \times 10} \times \ub{\mathcal{F_S}}{20 \times 10} \times \ub{\mathcal{F}_a}{10 \times 1} \in \mathbb{R},
\]
where $\mathcal{F_S} \in \mathbb{R}^{20 \times 10}$ and $\mathcal{F}_a \in \mathbb{R}^{10}$ are the mean-normalized audio features of the musical genres and the track $a$ respectively.

The reward for each interaction is determined by how relevant the audio features of the recommended track are to the audio features corresponding to the user's preferred musical genres.
Specifically, in the reward function $r: \mathcal{S \times A} \to \mathbb{R}$ is given as
\[
	r(s,a) = \left\{\begin{array}{rl}
		-1 & \text{if}\quad p(s,a) < -\varepsilon,
		\\
		0 & \text{if}\quad -\varepsilon \le p(s,a) \le \varepsilon,
		\\
		1 & \text{if}\quad p(s,a) > \varepsilon,
		\end{array}\right.
\]
where parameter $\varepsilon > 0$ indicates a user's threshold for providing feedback.
The values $r(s,a) \in \{-1,1\}$ indicate that the user $s$ liked/disliked the track $a$, whereas the reward value of $0$ indicates the absence of user feedback.
In our experiment we use the value $\varepsilon = 0.1$, which results in about $50\%$ average feedback sparsity.

Each agent's policy value, policy batch-entropy, and action selection histogram for this experiment are presented in Figure~\ref{fig:spotify_exp}.
The evaluations are performed on the set of $10,000$ users sampled before the training.
The unsorted action selection histograms for this experiment are presented in Figure~\ref{fig:spotify_dist_raw} in Appendix~\ref{sec:numerics_appendix}.
We note that the data used in this experiment, presented in Tables~\ref{tab:spotify_genres}--\ref{tab:spotify_actions}, was obtained on 16 August 2023 and, since Spotify playlists are not stationary, will likely differ if obtained at a later date.

\begin{figure}
	\centering
	\begin{subfigure}{.54\linewidth}
		\includegraphics[width=\linewidth]{./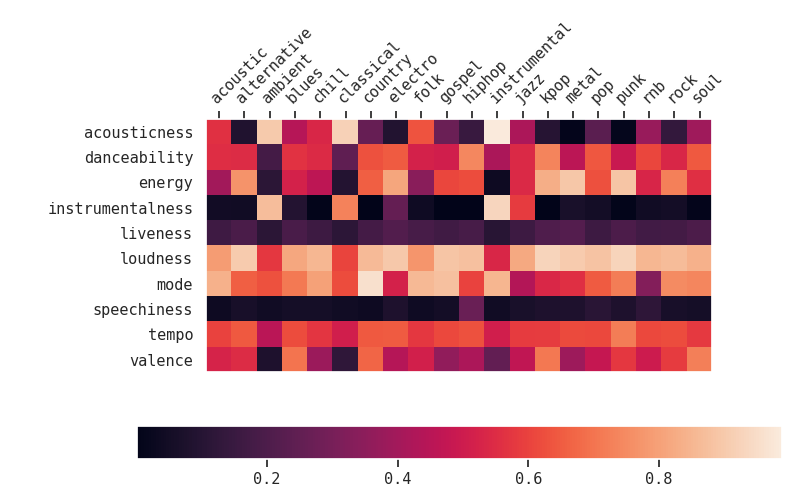}
		\caption{Parameterization of audio features}
	\end{subfigure}
	\begin{subfigure}{.45\linewidth}
		\includegraphics[width=\linewidth]{./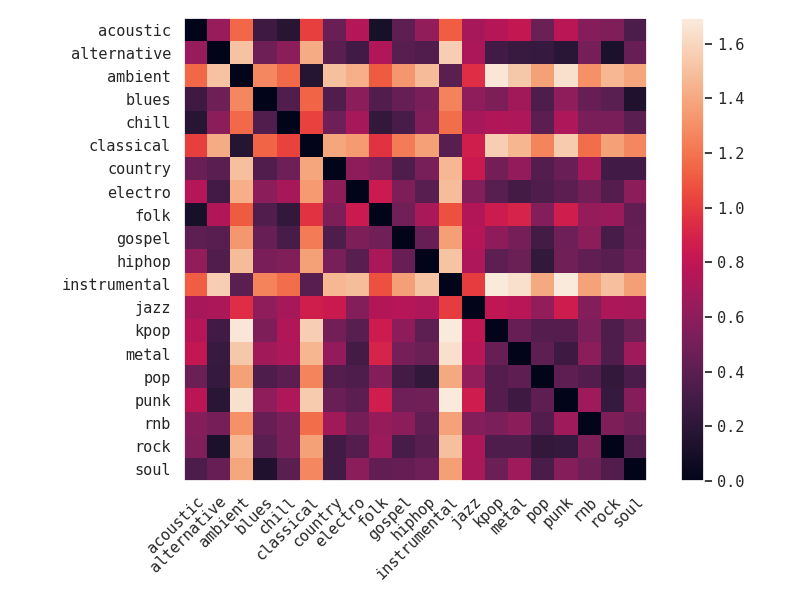}
		\caption{Pair-wise differences of audio features}
	\end{subfigure}
	\caption{Audio features of the musical genres used in the Music Recommendation Experiment.}
	\label{fig:spotify_genres}
\end{figure}

\begin{figure}[h]
	\centering
	\begin{subfigure}{.49\linewidth}
		\includegraphics[width=\linewidth]{./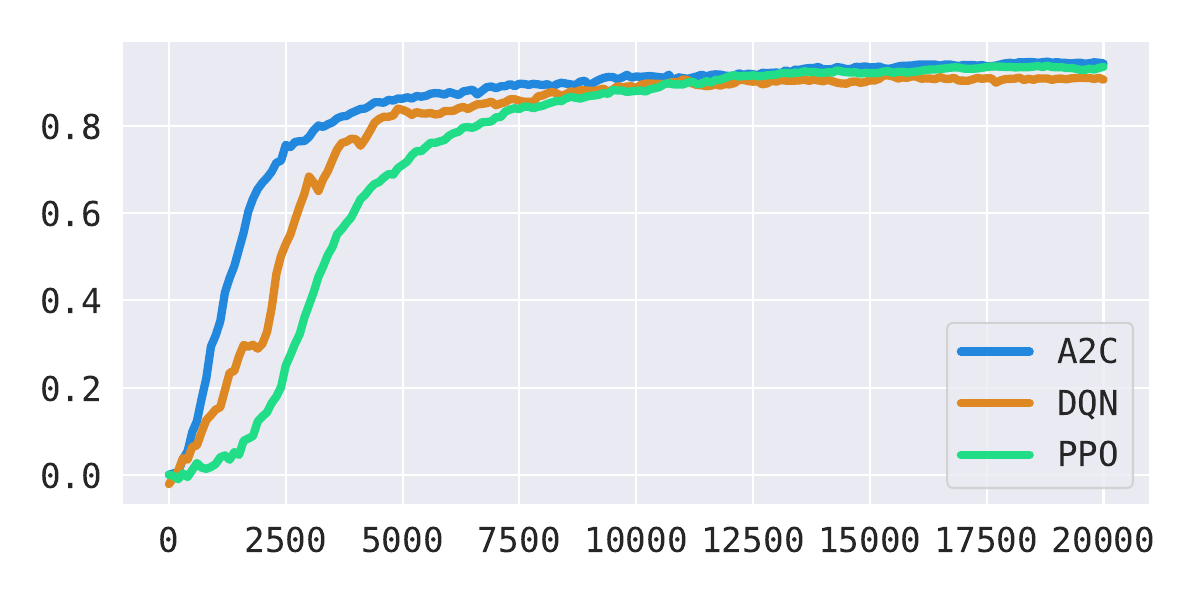}
		\caption{Policy value}
	\end{subfigure}
	\begin{subfigure}{.49\linewidth}
		\includegraphics[width=\linewidth]{./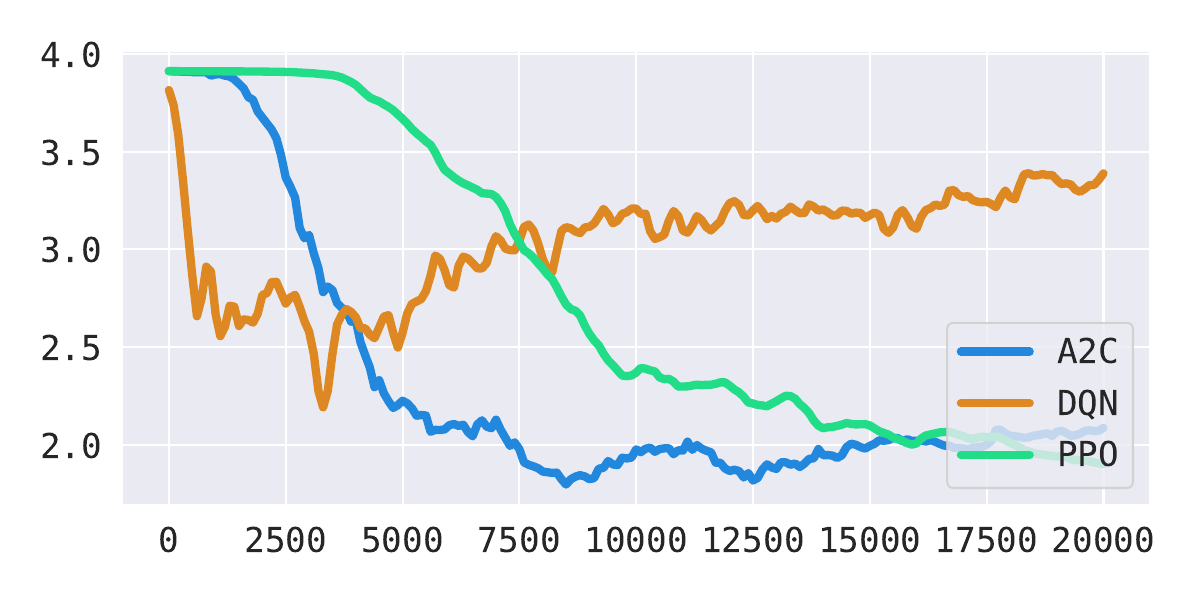}
		\caption{Policy batch-entropy}
	\end{subfigure}
	\\
	\begin{subfigure}{.32\linewidth}
		\includegraphics[width=\linewidth]{./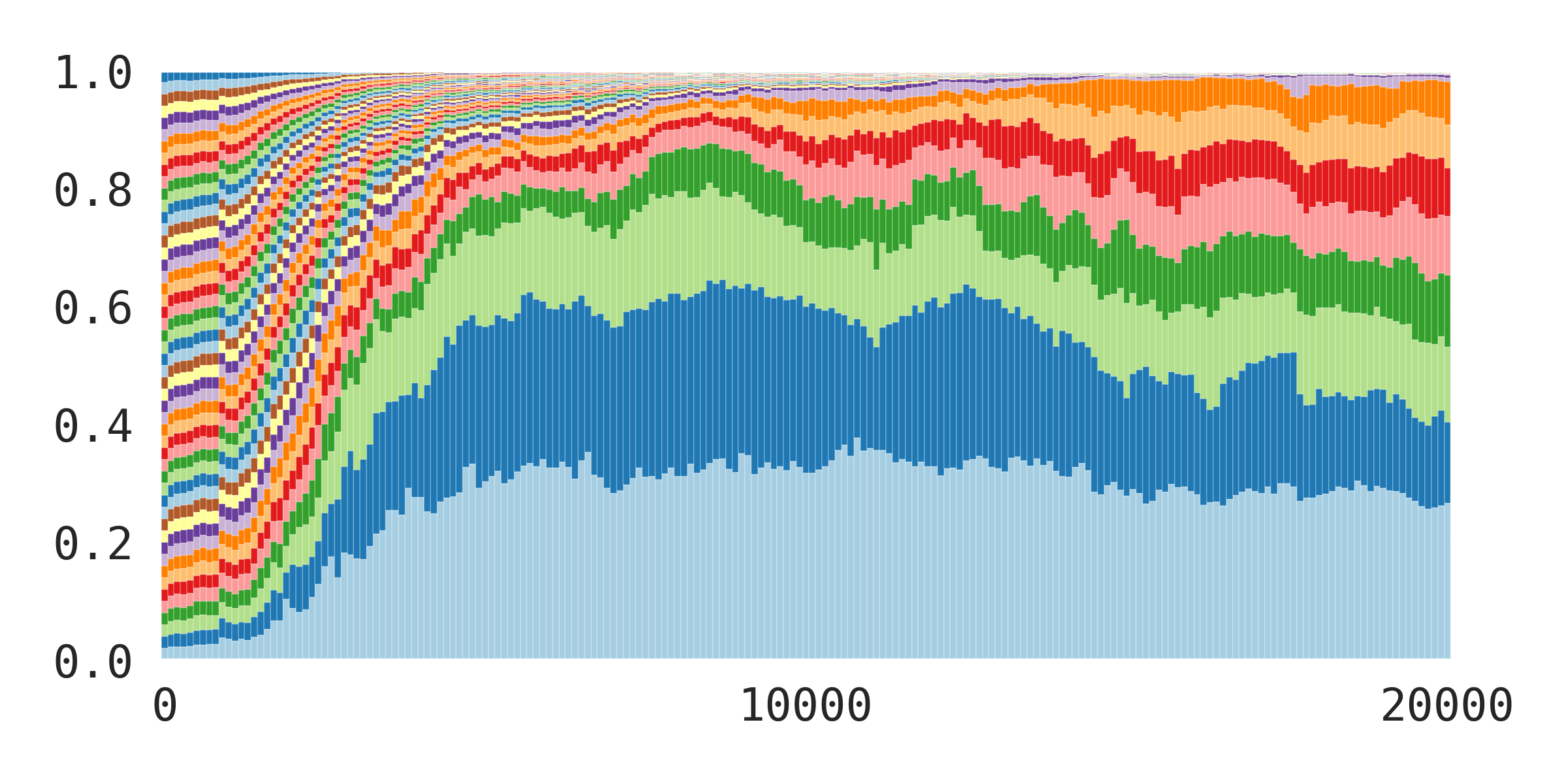}
		\caption{A2C histogram}
	\end{subfigure}
	\begin{subfigure}{.32\linewidth}
		\includegraphics[width=\linewidth]{./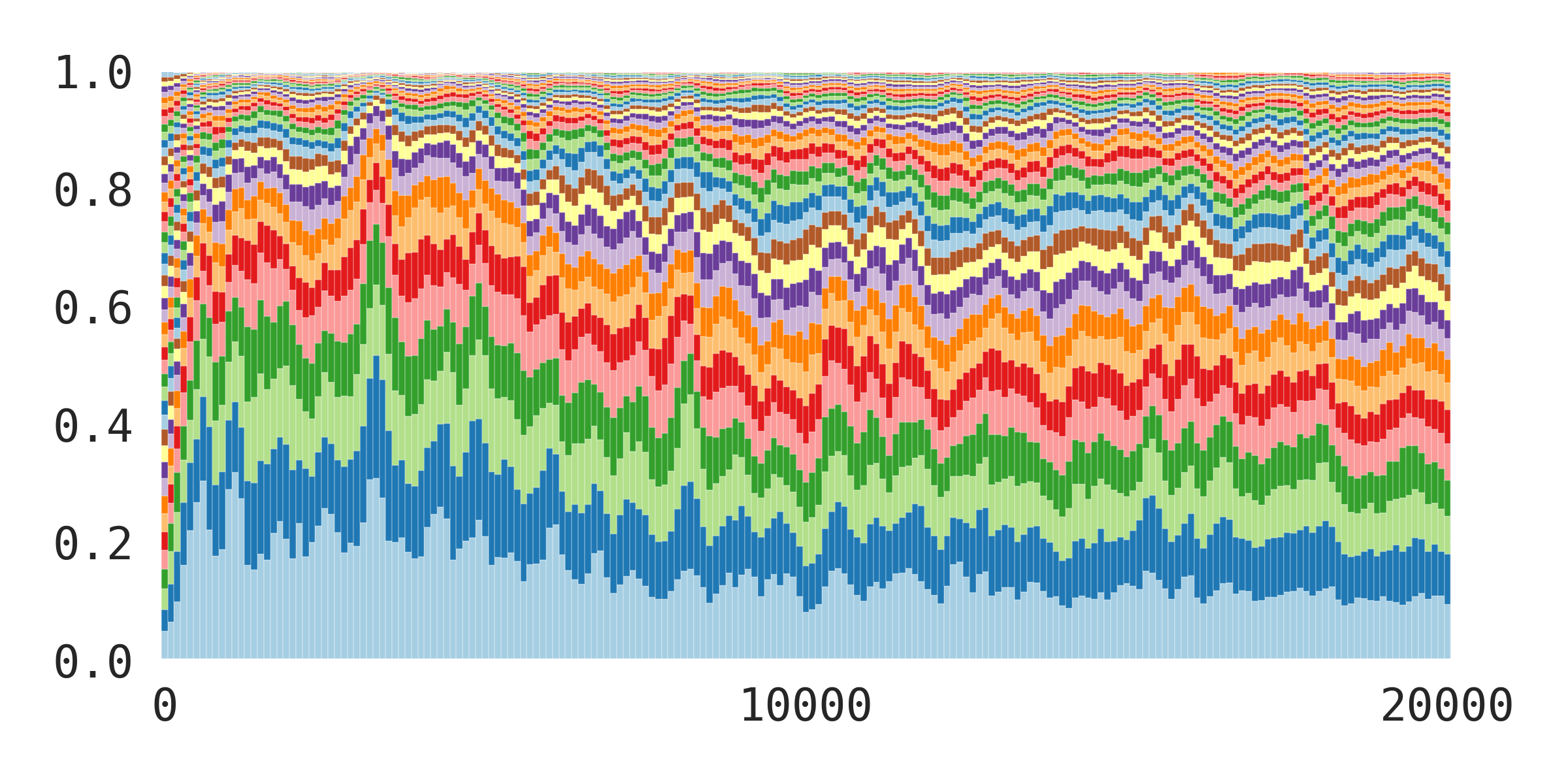}
		\caption{DQN histogram}
	\end{subfigure}
	\begin{subfigure}{.32\linewidth}
		\includegraphics[width=\linewidth]{./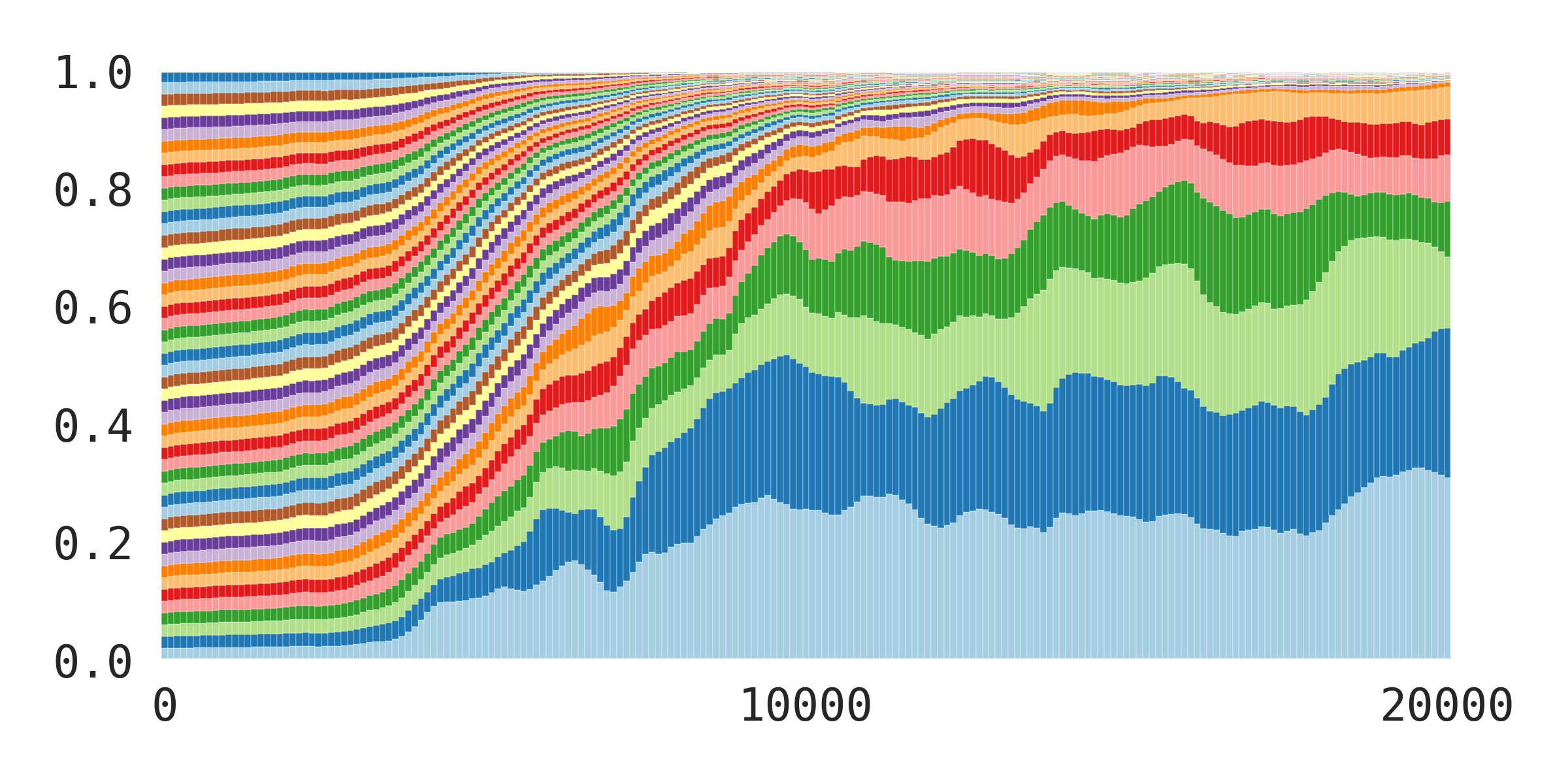}
		\caption{PPO histogram}
	\end{subfigure}
	\caption{Results of the Music Recommendation Experiment.}
	\label{fig:spotify_exp}
\end{figure}

\subsection{Online Advertisement Experiment}
In this experiment we train reinforcement learning agents to provide product recommendations based on users' preferences.
We use the RecoGym\footnote{\url{https://github.com/criteo-research/reco-gym}} environment~\cite{rohde2018recogym} to simulate the interactions between users and the recommended products.

We set the numbers of products to $50$ and add a user preference simulation to form a proper contextual bandit environment, and otherwise use the default parameters set by the authors.
Specifically, we set the state space $\mathcal{S} = \mathbb{R}^{50}$ to represent users' shopping preferences which determines the likelihood of clicking on an advertised product.
The simulated preferences are sampled from the normal distribution, i.e.
\[
	s = [s_1, \ldots, s_{50}] \in \mathcal{S}
	\text{ is such that }
	s_i \sim \mathcal{N}(0,1)
	\text{ for any }
	1 \le i \le 50.
\]

The action space $\mathcal{A} = \{0, 1, \ldots, 49\}$ consists of $50$ products that are being shown to users.
The order of the available products $\mathcal{A}$ is then permuted in accordance with the user's preferences to ensure variety in users' feedback to different products.

The reward function $r : \mathcal{S \times A} \to \{0, 1\}$ represents the user's $s \in \mathcal{S}$ response to observing an advertisement for the product $a \in \mathcal{A}$, where the reward value of $1$ indicates that the user clicked on the advertisement, and the reward value of $0$ indicates the absence of the user's reaction.
The reward function $r : \mathcal{S \times A} \to \mathbb{R}$ is given as
\[
	r(s,a) = \left\{\begin{array}{cl}
				1 &\text{ if the user } s \text{ clicked the product } a,
				\\
				0 &\text{ if the user } s \text{ didn't click the product } a.
			\end{array}\right.
\]
We note that the exact mechanics by which the reward is assigned is established by the RecoGym's developers and is outside the scope of this work.

Each agent's policy value, policy batch-entropy, and action selection histogram for this experiment are presented in Figure~\ref{fig:recogym_exp}.
The evaluations are performed on the set of $10,000$ users sampled before the training.
The hyperparameter choices for each agent are listed in Appendix~\ref{sec:numerics_appendix} in Table~\ref{tab:recogym_params} and any unspecified parameters are kept at their default values.
The unsorted action selection histograms for each agent are presented in Figure~\ref{fig:recogym_dist_raw} in Appendix~\ref{sec:numerics_appendix}.

\begin{figure}[h]
	\centering
	\begin{subfigure}{.49\linewidth}
		\includegraphics[width=\linewidth]{./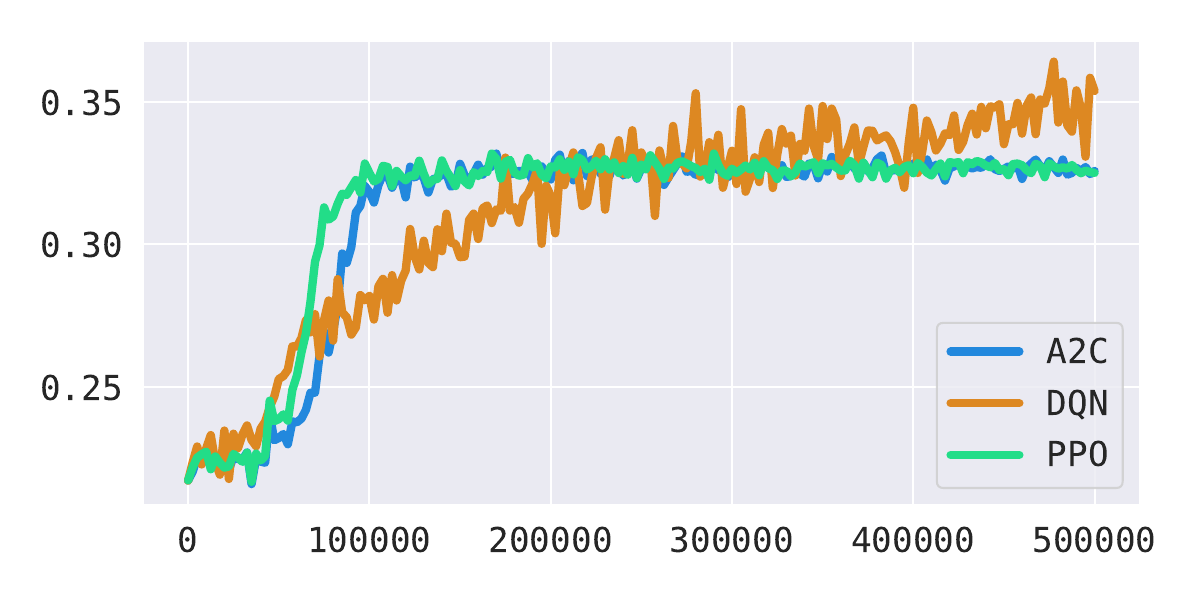}
		\caption{Policy value}
	\end{subfigure}
	\begin{subfigure}{.49\linewidth}
		\includegraphics[width=\linewidth]{./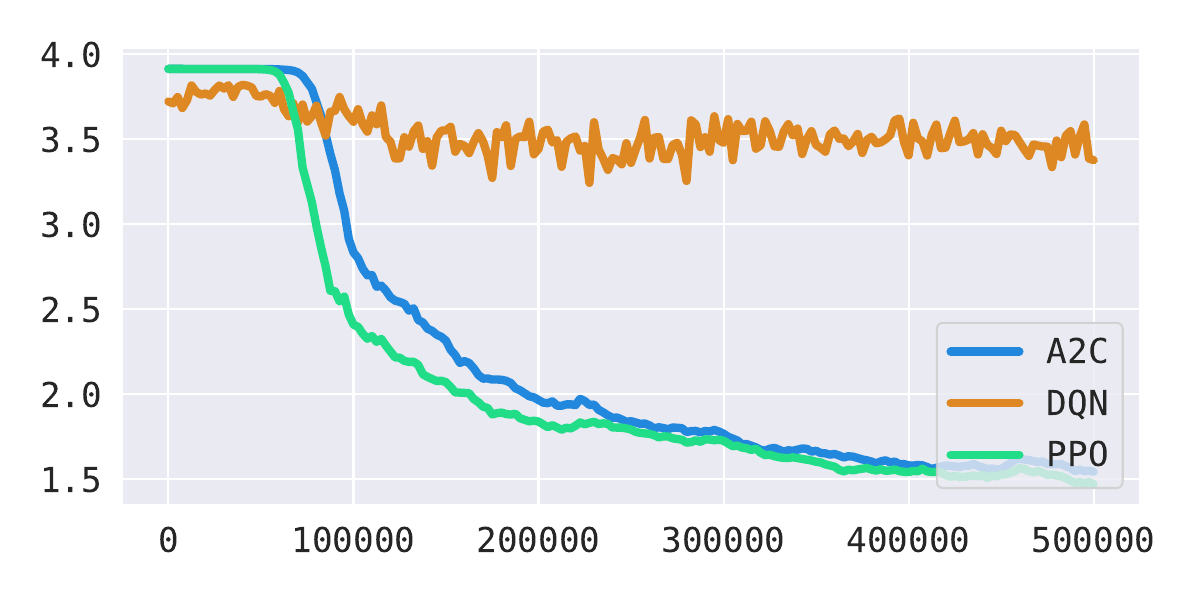}
		\caption{Policy batch-entropy}
	\end{subfigure}
	\\
	\begin{subfigure}{.32\linewidth}
		\includegraphics[width=\linewidth]{./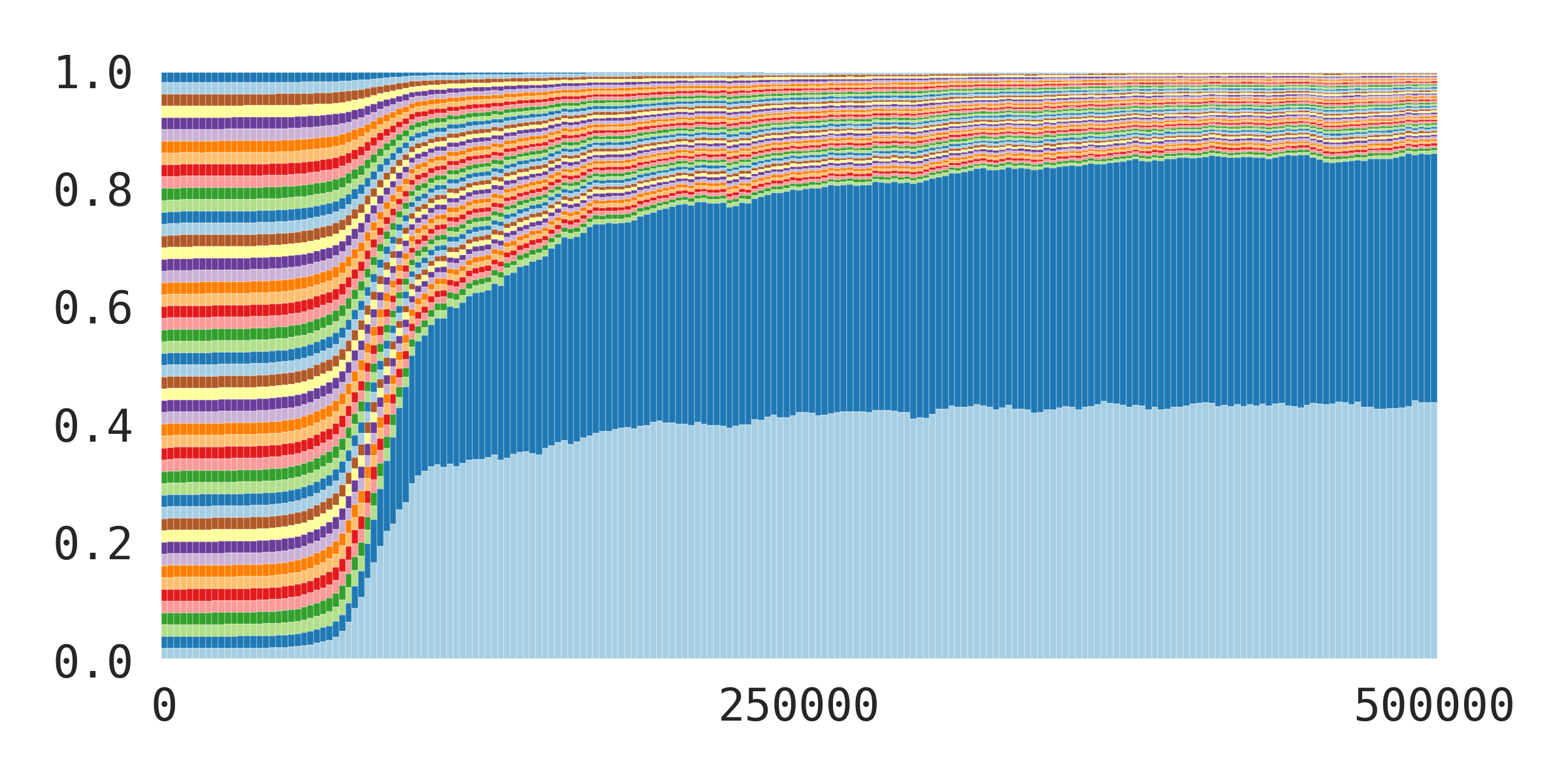}
		\caption{A2C histogram}
	\end{subfigure}
	\begin{subfigure}{.32\linewidth}
		\includegraphics[width=\linewidth]{./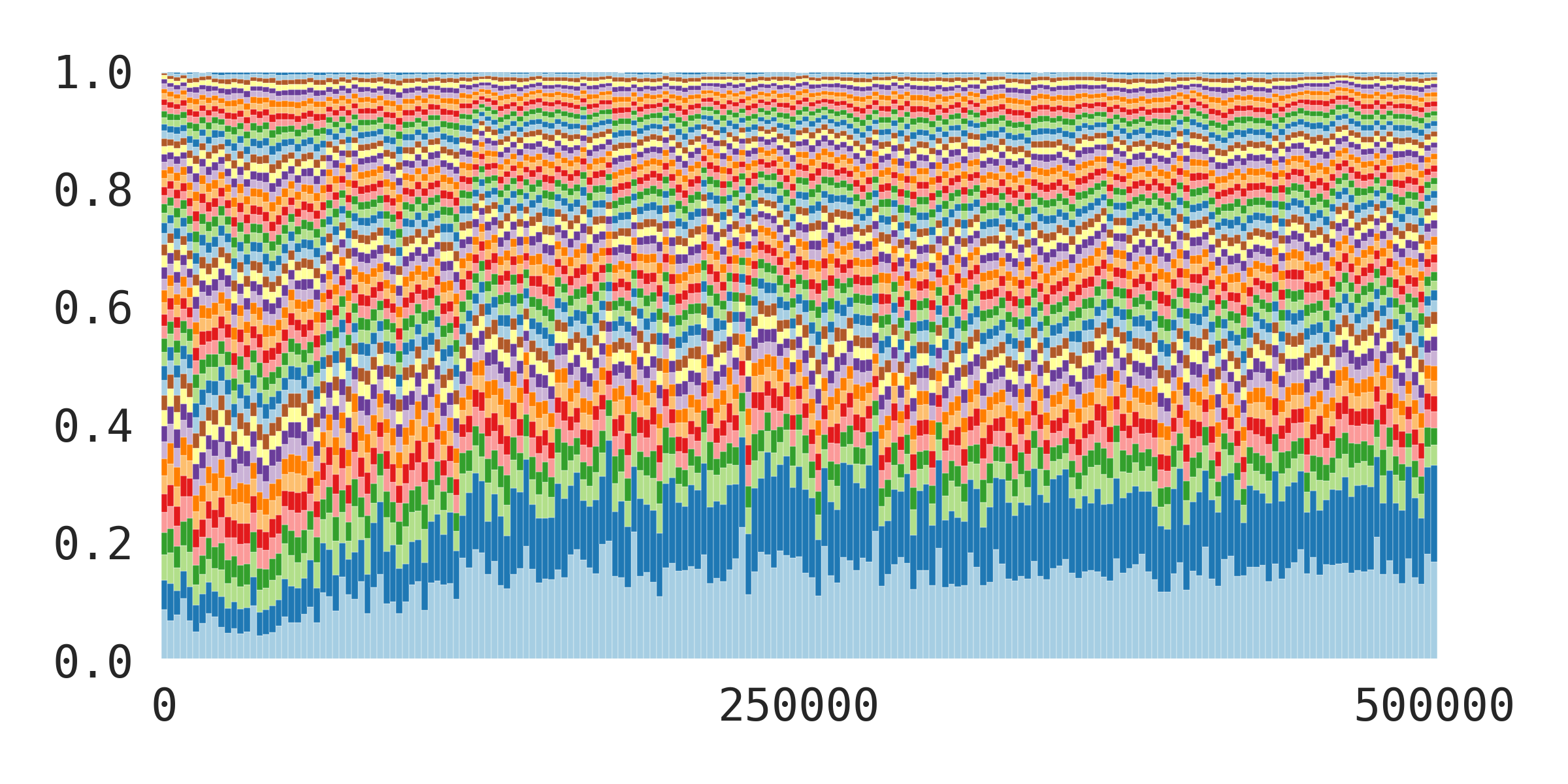}
		\caption{DQN histogram}
	\end{subfigure}
	\begin{subfigure}{.32\linewidth}
		\includegraphics[width=\linewidth]{./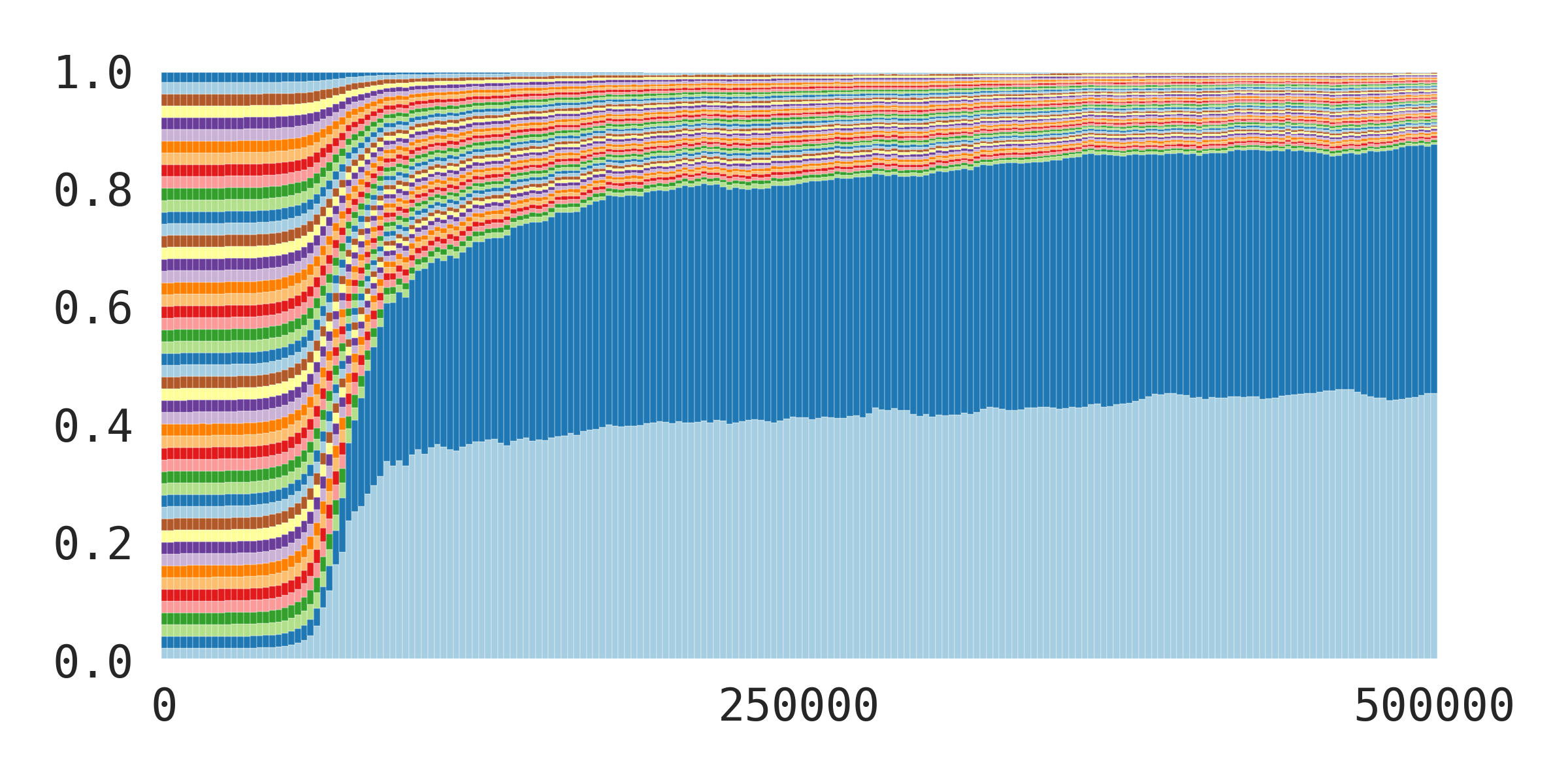}
		\caption{PPO histogram}
	\end{subfigure}
	\caption{Results of the Online Advertisement Experiment.}
	\label{fig:recogym_exp}
\end{figure}

\subsection{Behavioral Preference Experiment}
In this experiment we deploy reinforcement learning agents on Synthetic Personalization Environment\footnote{\url{https://github.com/sukiboo/personalization_wain21}} to learn users' behavioral preferences from the available data.
Synthetic Personalization Environment is introduced in~\cite{dereventsov2021unreasonable} as a human behavioral preference simulator.
We use the default environment parameters set by the authors and train agents on an unmodified reward signal.

The state space $\mathcal{S} = \mathbb{R}^{100}$ describes the user's behavior, which determines how the user will respond to the provided recommendation.
The action space $\mathcal{A} = \{0, 1, \ldots, 99\}$ consists of $100$ behavioral recommendations that can be given to users.
The reward function $r : \mathcal{S \times A} \to \mathbb{R}$ represents the user's $s \in \mathcal{S}$ response to the behavioral recommendation $a \in \mathcal{A}$, where the higher reward indicates the higher chance of the behavioral adoption.
The calculation of the reward signal is established by the Synthetic Personalization Environment's developers and schematically demonstrated in Figure~\ref{fig:personalization_reward}, though the exact formulation is outside of the scope of this work.

Each agent's policy value, policy batch-entropy, and action selection histogram for this experiment are presented in Figure~\ref{fig:personalization_exp}.
The evaluations are performed on the set of $10,000$ users sampled before the training.
The hyperparameter choices for each agent are listed  in Appendix~\ref{sec:numerics_appendix} in Table~\ref{tab:personalization_params} and any unspecified parameters are kept at their default values.
The unsorted action selection histograms for each agent are presented in Figure~\ref{fig:personalization_dist_raw} in Appendix~\ref{sec:numerics_appendix}.

\begin{figure}
	\centering
	\includegraphics[width=.8\linewidth]{./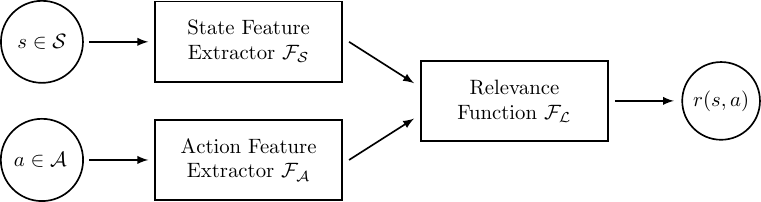}
	\caption{Schematic structure of the reward function used in the Behavioral Preference Experiment.}
	\label{fig:personalization_reward}
\end{figure}

\begin{figure}[h]
	\centering
	\begin{subfigure}{.49\linewidth}
		\includegraphics[width=\linewidth]{./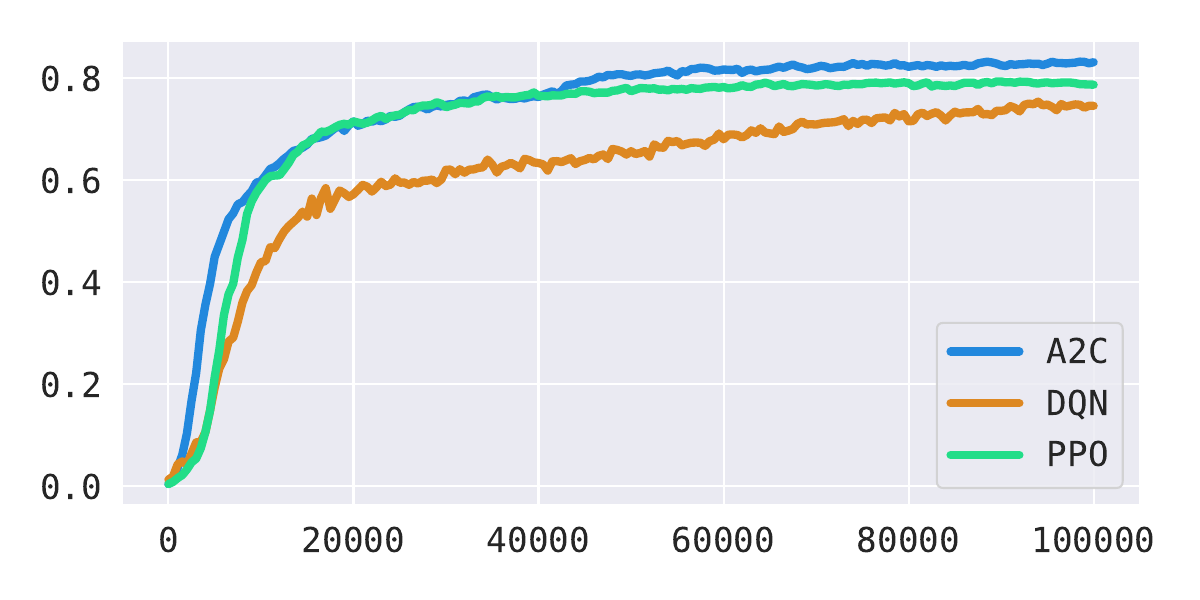}
		\caption{Policy value}
	\end{subfigure}
	\begin{subfigure}{.49\linewidth}
		\includegraphics[width=\linewidth]{./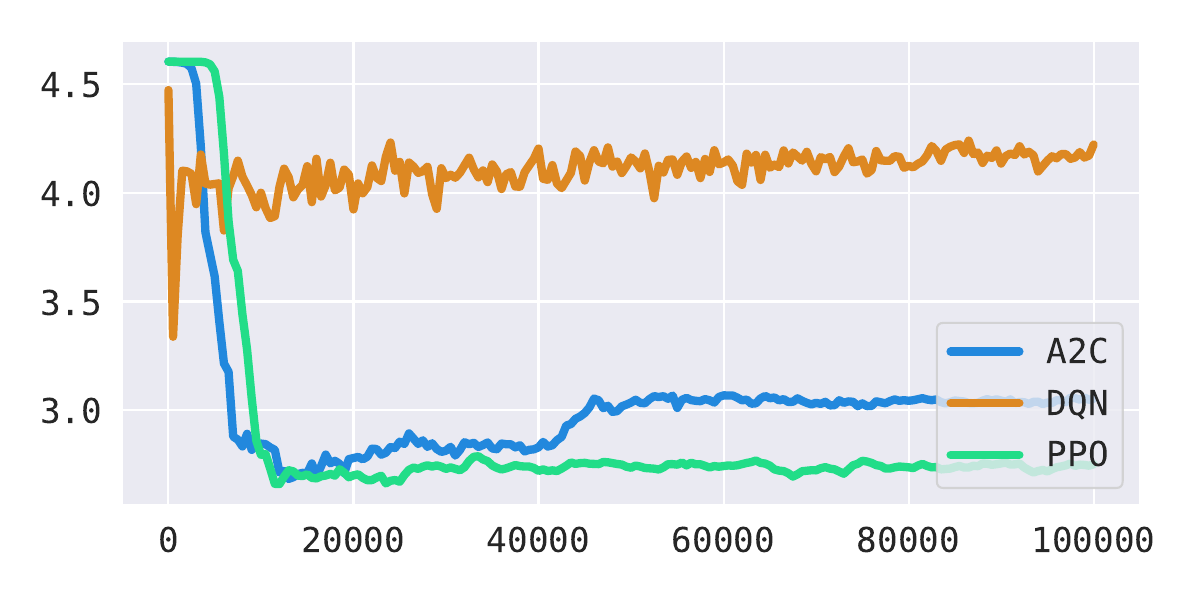}
		\caption{Policy batch-entropy}
	\end{subfigure}
	\\
	\begin{subfigure}{.32\linewidth}
		\includegraphics[width=\linewidth]{./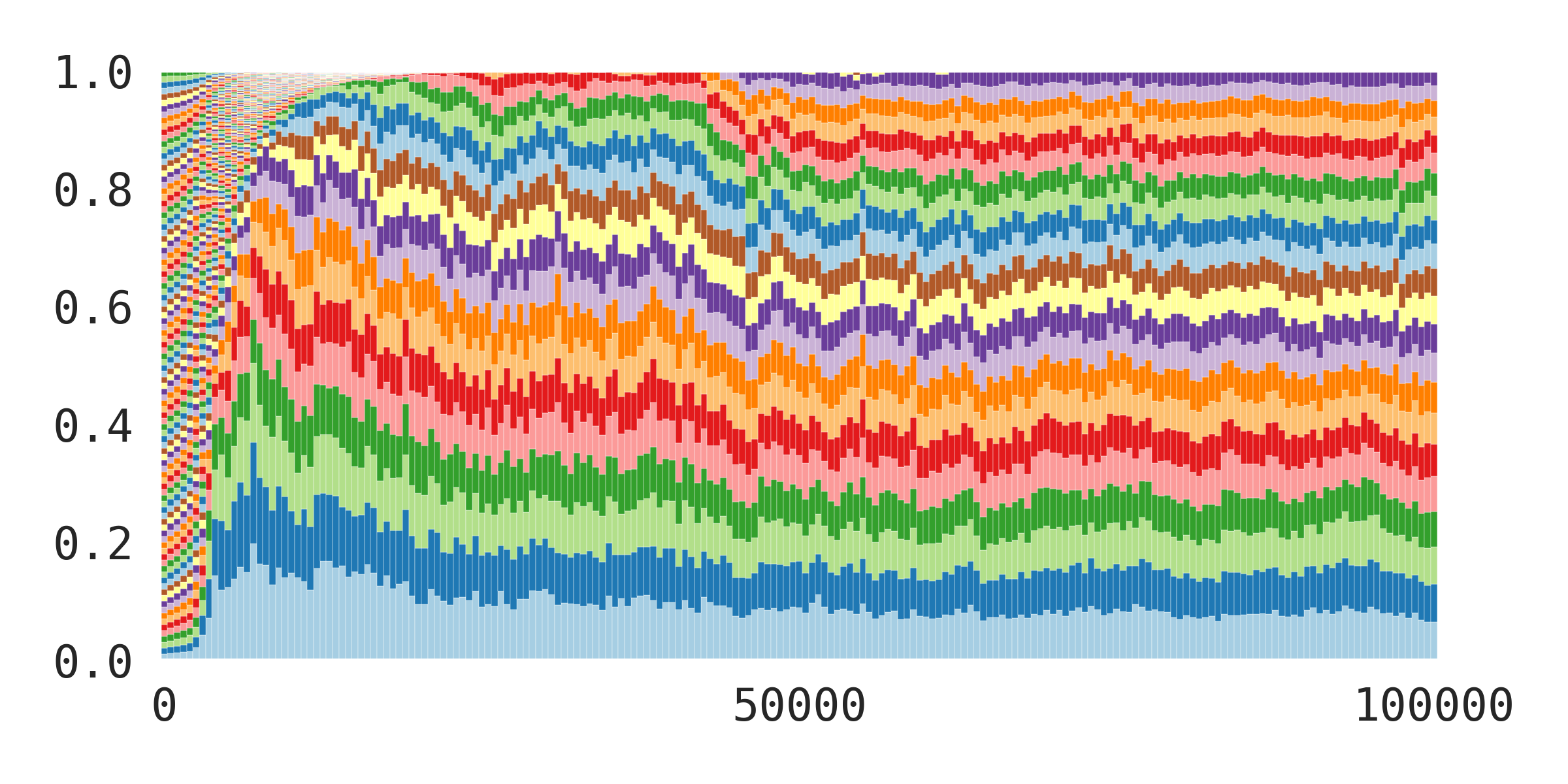}
		\caption{A2C histogram}
	\end{subfigure}
	\begin{subfigure}{.32\linewidth}
		\includegraphics[width=\linewidth]{./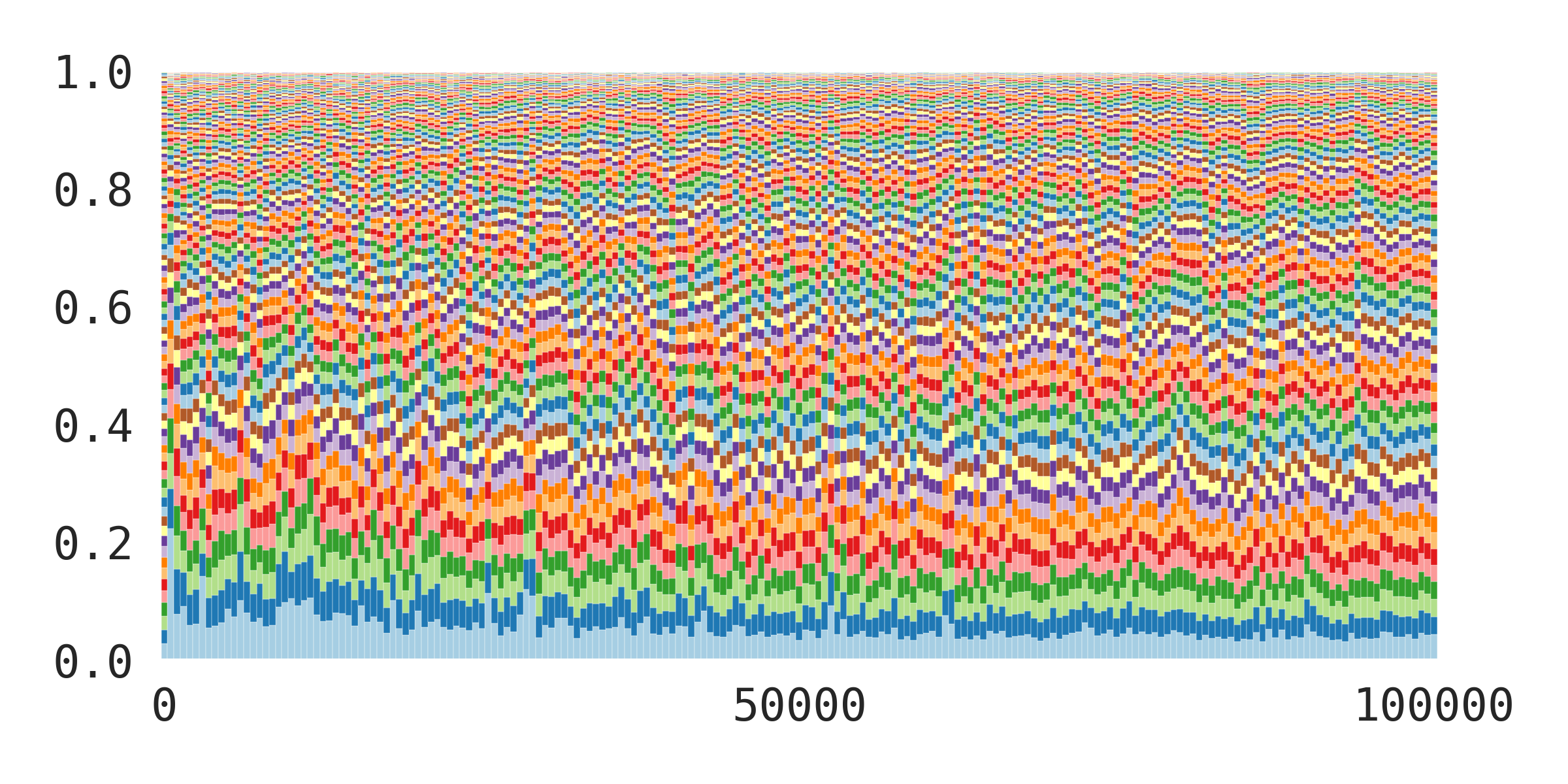}
		\caption{DQN histogram}
	\end{subfigure}
	\begin{subfigure}{.32\linewidth}
		\includegraphics[width=\linewidth]{./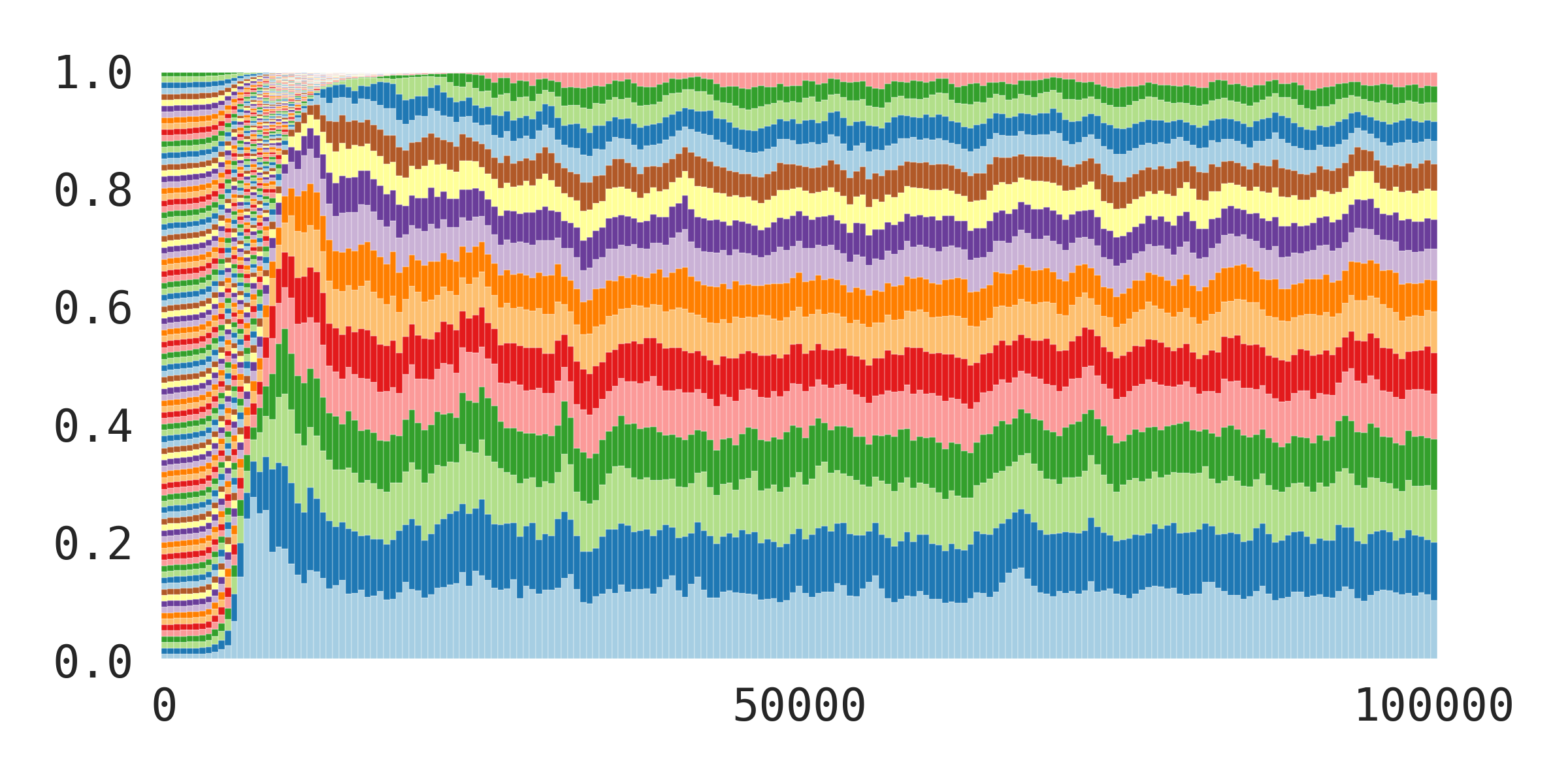}
		\caption{PPO histogram}
	\end{subfigure}
	\caption{Results of the Behavioral Preference Experiment.}
	\label{fig:personalization_exp}
\end{figure}

\section{Theoretical Analysis}\label{sec:theory}
In this section we state our main theoretical results regarding the behavior of the Policy Optimization and Q-learning agents on the contextual bandit environments.
The proofs for the stated results are provided in Appendix~\ref{sec:theory_appendix}.

We begin by establishing the network output changes for the linear policy agents, which compliments the sample experiment presented in Section~\ref{sec:sample}.
Note that this result showcases that the updates of the PG-agent's network depend on the action selection probability $\pi(k|s)$, whereas the QL-agent's network does not.
Even though this is a simplified setting, this explicit dependence on the action selection probability ultimately leads to the drop in policy entropy.
thus
\begin{theorem}\label{thm:outputs_linear}
Let a PG or QL agent with a linear network be trained on a contextual bandit task $\{\mathcal{S,A},r\}$ with the gradient descend algorithm with a batch size $N$ and a learning rate $\lambda$.
Then with each update of the trainable weights $W \leftarrow W^\prime$ the policy output mappings $\mathcal{Z} : \mathcal{S} \to \mathbb{R}^K$ change as follows:
\[
	\mathcal{Z}(x;W^\prime) = \mathcal{Z}(x;W) 
	+ \frac{\lambda}{N} \sum_{n=1}^N \ub{\Omega(s_n,a_n,r_n)}{K \times d} \times x,
\]
where
\begin{align*}
	\Omega_{pg}(s,a,r)
	&= r \Big[ \big(\mathbbm{1}(a=k) - \pi(k|s)\big) \, s \Big]_{k=1}^K,
	\\
	\Omega_{ql}(s,a,r)
	&= 2\big( r - z_a(s) \big) \Big[ \mathbbm{1}(a=k) \, \langle s, x \rangle \Big]_{k=1}^K,
\end{align*}
and $\{(s_n,a_n,r_n)\}_{n=1}^N$ is the batch of agent-environment interactions on which the the update $\Theta \leftarrow \Theta^\prime$ is performed.
\end{theorem}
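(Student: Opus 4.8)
The plan is to reduce everything to the structure of the gradient-descent update $W^\prime = W - \frac{\lambda}{N}\sum_{n=1}^N \nabla\mathcal{L}(s_n,a_n,r_n)$ together with the explicit per-sample loss gradients already recorded in the excerpt (Lemma~\ref{thm:grad_pi}), and then to evaluate the updated network on an arbitrary input $x$ and simply read off $\Omega$. The key observation is that, for a linear network, the map $W \mapsto \mathcal{Z}(x;W) = Wx$ is itself linear in $W$, so $\mathcal{Z}(x;W^\prime) = \mathcal{Z}(x;W) - \frac{\lambda}{N}\sum_{n=1}^N \nabla\mathcal{L}(s_n,a_n,r_n)\,x$. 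Hence the whole statement collapses to verifying the single-interaction identity $-\nabla\mathcal{L}(s,a,r)\,x = \Omega(s,a,r)\,x$ for each of the two agent types; the batch average and the factor $\lambda/N$ then pass through unchanged. I would handle the PG and QL cases separately.

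For the PG agent I would start from $\nabla\mathcal{L}_{pg}(s,a,r) = -r\,\nabla\log\pi_{pg}(a|s)$ and apply the chain rule through the linear layer: since $\mathcal{Z}(s;W) = Ws$, the Jacobian $\nabla_W \mathcal{Z}_k(s)$ is the $K\times d$ matrix whose $k$-th row is $s^\top$ and whose remaining rows vanish, so composing with the softmax log-likelihood vector $[\mathbbm{1}(a=k)-\pi(k|s)]_{k=1}^K$ shows that $\nabla_W\log\pi_{pg}(a|s)$ is the matrix whose $k$-th row equals $\big(\mathbbm{1}(a=k)-\pi(k|s)\big)s^\top$. Right-multiplying $-\nabla\mathcal{L}_{pg}(s,a,r)$ by $x$ then produces the length-$K$ vector whose $k$-th coordinate is $r\big(\mathbbm{1}(a=k)-\pi(k|s)\big)\langle s,x\rangle$, which is exactly $\big(\Omega_{pg}(s,a,r)\,x\big)_k$, recovering the first formula and, with binary/signed rewards substituted, the special cases~\eqref{eq:pg_z_update}.

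For the QL agent the computation is shorter because the loss touches only one output coordinate. From $\nabla\mathcal{L}_{ql}(s,a,r) = -2\big(r-\mathcal{Z}_a(s)\big)\nabla\mathcal{Z}_a(s)$ and the fact that $\nabla_W\mathcal{Z}_a(s)$ has $s^\top$ in row $a$ and zeros elsewhere, the matrix $-\nabla\mathcal{L}_{ql}(s,a,r)$ equals $2\big(r-z_a(s)\big)s^\top$ in row $a$ and zero in every other row; right-multiplying by $x$ gives the vector supported on coordinate $a$ with value $2\big(r-z_a(s)\big)\langle s,x\rangle$, i.e. $\big(\Omega_{ql}(s,a,r)\,x\big)_k = 2\big(r-z_a(s)\big)\mathbbm{1}(a=k)\langle s,x\rangle$, which reduces to~\eqref{eq:ql_z_update} after specializing the reward. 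Summing over the batch $\{(s_n,a_n,r_n)\}_{n=1}^N$ and inserting $\lambda/N$ finishes both cases.

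I do not expect a genuine obstacle: once Lemma~\ref{thm:grad_pi} is granted, the argument is pure matrix-calculus bookkeeping. The only points that require a little care are keeping the chain rule through the softmax organized in the PG case — cleanly separating the length-$K$ ``outer'' factor $[\mathbbm{1}(a=k)-\pi(k|s)]_k$ from the per-coordinate ``inner'' Jacobian of the linear map — and noticing in the QL case that only the $a$-th row of $W$ is modified, which is precisely the structural asymmetry that drives the entropy phenomenon discussed in Section~\ref{sec:sample}.
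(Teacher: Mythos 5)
Your proposal is correct and follows essentially the same route as the paper's proof: exploit the linearity of $W \mapsto Wx$ to push the gradient-descent update through to the outputs, invoke Lemma~\ref{thm:grad_pi} for the softmax derivative in the PG case, and use the row-structure of $\nabla_W z_k(s)$ (row $k$ equal to $s^\top$, zeros elsewhere) to read off $\Omega_{pg}$ and $\Omega_{ql}$. The only difference is organizational — you phrase the reduction as a per-interaction identity before summing, while the paper sums first — which changes nothing of substance.
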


Next, we establish analogous update rules in general contextual bandit environments for the agents utilized in Section~\ref{sec:numerics}.
Evidently the complexity of the considered algorithms makes the stated result far less interpretable, though the main idea remains: the explicit dependency on the action selection probability in the policy optimization agents motivates the optimizer to ``optimize out'' some of the available actions, which leads to lower entropy policies.

\begin{theorem}\label{thm:outputs}
Let an A2C, DQN, or PPO agent be trained on a contextual bandit task $\{\mathcal{S,A},r\}$ with the gradient descent algorithm with a batch size $N$ and a learning rate $\lambda$.
Then with each update of the trainable weights $\Theta \leftarrow \Theta^\prime$ the policy output mappings $\mathcal{Z} : \mathcal{S} \to \mathbb{R}^K$ change as follows:
\[
	\mathcal{Z}(\,\cdot\,;\Theta^\prime)
	= \mathcal{Z}(\,\cdot\,;\Theta)
	+ \nabla\mathcal{Z}(\,\cdot\,;\Theta) \times \frac{\lambda}{N} \sum_{n=1}^N \Omega(s_n,a_n,r_n)^T
	+ \mathcal{O}(\|\Theta - \Theta^\prime\|^2),
\]
where
\begin{align*}
	\Omega_{a2c}(s,a,r)
	&= \big(r - v(s;\Theta) \big)
	\Big(\nabla v(s;\Theta)
	+ \big[\mathbbm{1}(a=k) - \pi(k|s)\big]_{k=1}^K \times \nabla\mathcal{Z}(s;\Theta) \Big),
	\\
	\Omega_{dqn}(s,a,r)
	&= 2\big( r - z_a(s;\Theta) \big) \nabla z_a(s;\Theta),
	\\
	\Omega_{ppo}(s,a,r)
	&= \big( r - v(s) \big)
	\cdot \Bigg( \nabla v(s;\Theta)
	+ \big[\mathbbm{1}(a=k) - \pi(k|s)\big]_{k=1}^K
	\times \nabla\mathcal{Z}(s;\Theta)
	\cdot \rho(a,s) \Bigg),
\end{align*}
where
\[
	\rho(a,s) = \left\{\begin{array}{ll}
		\frac{\pi(a|s)}{\widetilde{\pi}(a|s)}
		& \text{ if } \left| 1 - \frac{\pi(a|s)}{\widetilde{\pi}(a|s)} \right| < \epsilon
		\\
		0 & \text{ otherwise}
	\end{array}\right.
\]
and $\{(s_n,a_n,r_n)\}_{n=1}^N$ is the batch of agent-environment interactions on which the the update $\Theta \leftarrow \Theta^\prime$ is performed.
\end{theorem}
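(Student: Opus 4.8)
\textbf{Proof proposal for Theorem~\ref{thm:outputs}.}

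The plan is to treat all three algorithms uniformly as instances of stochastic gradient descent on an algorithm-specific loss $\mathcal{L}(\Theta)$ evaluated on a minibatch $\{(s_n,a_n,r_n)\}_{n=1}^N$, and then to propagate the parameter update $\Theta^\prime = \Theta - \tfrac{\lambda}{N}\sum_n \nabla_\Theta \mathcal{L}(s_n,a_n,r_n)$ through the output map $\mathcal{Z}(\,\cdot\,;\Theta)$ via a first-order Taylor expansion, which is exactly the source of the $\mathcal{O}(\|\Theta-\Theta^\prime\|^2)$ remainder. Concretely, I would write $\mathcal{Z}(x;\Theta^\prime) = \mathcal{Z}(x;\Theta) + \nabla_\Theta\mathcal{Z}(x;\Theta)(\Theta^\prime-\Theta) + \mathcal{O}(\|\Theta^\prime-\Theta\|^2)$ and substitute the SGD step; the factor $\nabla\mathcal{Z}(\,\cdot\,;\Theta)$ in the statement is this Jacobian, and the sum $\sum_n \Omega(s_n,a_n,r_n)^T$ must therefore be identified with $-\sum_n \nabla_\Theta\mathcal{L}(s_n,a_n,r_n)$ (transposed to match the stated index conventions). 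So the whole theorem reduces to: for each of A2C, DQN, PPO, compute the per-sample loss gradient $\nabla_\Theta\mathcal{L}$ and read off $\Omega$.

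First I would handle DQN, which is the cleanest: the loss on an interaction in a (one-step, discount-free) contextual bandit is the squared temporal-difference error $\big(r - z_a(s;\Theta)\big)^2$, whose gradient is $-2\big(r - z_a(s;\Theta)\big)\nabla_\Theta z_a(s;\Theta)$, giving $\Omega_{dqn}$ immediately; this also recovers the $\Omega_{ql}$ formula of Theorem~\ref{thm:outputs_linear} in the linear special case, which serves as a consistency check. Next I would do A2C: the actor-critic objective combines a policy-gradient term $-\big(r - v(s;\Theta)\big)\log\pi(a|s;\Theta)$ with a value (critic) term $\tfrac12\big(r - v(s;\Theta)\big)^2$ (treating the advantage $r - v(s)$ as a constant baseline when differentiating the actor term, per the standard A2C gradient). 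Using Lemma~\ref{thm:grad_pi}, $\nabla_\Theta\log\pi(a|s) = \big[\mathbbm{1}(a=k)-\pi(k|s)\big]_{k=1}^K \times \nabla_\Theta\mathcal{Z}(s;\Theta)$, and the critic term contributes $-\big(r-v(s;\Theta)\big)\nabla_\Theta v(s;\Theta)$; collecting terms and factoring out the common advantage $\big(r - v(s;\Theta)\big)$ yields exactly $\Omega_{a2c}$. Finally PPO: its clipped surrogate objective is $\min\!\big(\rho\, A,\ \clip(\rho,1-\epsilon,1+\epsilon)\,A\big)$ with importance ratio $\rho = \pi(a|s)/\widetilde\pi(a|s)$ and $A = r - v(s)$; differentiating, the gradient equals $A\,\rho\,\nabla_\Theta\log\pi(a|s)$ on the interior of the clipping region $|1-\rho|<\epsilon$ (where the unclipped term is active and $\nabla\rho = \rho\,\nabla\log\pi$), and vanishes outside it, plus the same critic term $-A\,\nabla_\Theta v(s;\Theta)$; this reproduces the case split in $\Omega_{ppo}$. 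In each case, combining the output-layer Jacobian $\nabla\mathcal{Z}(s;\Theta)$ that appears when the gradient flows through the logits with the head-specific Jacobians $\nabla v$, $\nabla z_a$ gives the displayed $\Omega$'s, and I would note that "$\mathcal{Z}$" collects the logit outputs while $v$, $z_a$ are separate heads, so the Jacobian $\nabla\mathcal{Z}(\,\cdot\,;\Theta)$ multiplying the whole correction is understood to act on the logit block.

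The main obstacle I anticipate is bookkeeping rather than depth: getting the transpose/index conventions consistent between the $K\times d$ shape of $\Omega$ in the linear case and the general $\nabla\mathcal{Z}(\,\cdot\,;\Theta)\times\Omega^T$ form, and being careful about which quantities are held fixed when differentiating (the stop-gradient on the advantage/baseline in A2C and PPO, and the behavior-policy ratio denominator $\widetilde\pi$ in PPO, which is treated as a constant from the old parameters). A secondary subtlety is justifying the $\mathcal{O}(\|\Theta-\Theta^\prime\|^2)$ term uniformly: it requires $\mathcal{Z}(x;\cdot)$ to be twice continuously differentiable in $\Theta$ with locally bounded Hessian, which holds for the standard smooth (e.g.\ $\tanh$/ReLU-with-measure-zero-exceptions) architectures used in the experiments, so I would simply state this regularity assumption. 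Everything else is a direct application of the chain rule together with Lemma~\ref{thm:grad_pi} and the already-established Theorem~\ref{thm:outputs_linear}.
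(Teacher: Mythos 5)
Your proposal is correct and follows essentially the same route as the paper's proof: a first-order Taylor expansion of $\mathcal{Z}(\,\cdot\,;\Theta)$ around the SGD step, identification of $\Omega$ with $-\nabla\mathcal{L}$, per-algorithm computation of the loss gradients (squared TD error for DQN, actor plus $\tfrac12$-weighted critic loss for A2C, clipped surrogate plus critic loss for PPO), and Lemma~\ref{thm:grad_pi} to expand $\nabla\log\pi(a|s)$ through the softmax. The only cosmetic difference is that you start from the $\min$-form of the PPO surrogate while the paper differentiates the clipped ratio directly, but both yield the same case split on $\left|1-\pi(a|s)/\widetilde{\pi}(a|s)\right|<\epsilon$; your explicit remark on the smoothness needed for the $\mathcal{O}(\|\Theta-\Theta^\prime\|^2)$ remainder is a detail the paper leaves implicit.
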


\section{Conclusion}\label{sec:conclusion}
In this paper, we explore the behavior of reinforcement learning agents employing different learning paradigms in personalization contextual bandit tasks.
Our findings reveal that the trajectory of the learned policy is significantly influenced by the type of learning algorithm used.
Specifically, we observe that Policy Optimization agents tend to develop low-entropy policies during training, leading them to prioritize certain actions over others.
In contrast, Q-Learning agents maintain higher-entropy policies throughout training, which may be more beneficial in diverse real-world applications. 
These observations are supported by numerical experiments and are further analyzed theoretically in Appendix~\ref{sec:theory}, providing new insights into the dynamics of policy entropy in reinforcement learning systems.

\subsection{Restrictions and Limitations}
While our study provides both practical and analytical insights into the behavior of reinforcement learning agents, it does have certain limitations. 
One significant limitation is the absence of entropy regularization or other optimization relaxation techniques in our experimental setup.
We selected this approach to focus on understanding the fundamental behaviors of RL agents without the influence of these techniques; however, this choice may limit the direct applicability of our findings to real-world scenarios where such techniques are commonly employed to enhance performance and ensure robustness.

\subsection{Future Work}
The findings from this study open several avenues for future research.
One immediate area of interest is the integration of entropy regularization techniques into the training of RL agents.
Investigating how these techniques affect policy entropy and overall agent performance could provide deeper insights and potentially lead to more effective RL applications.
Additionally, expanding this research to include other types of reinforcement learning algorithms and comparing their behavior in similar tasks could further enrich our understanding of the dynamics at play.
We intend to pursue these research directions to not only address the limitations of the current study but also to expand the scope of our understanding of reinforcement learning in complex environments.

\bibliographystyle{abbrv}
\bibliography{references}

\clearpage
\appendix
\section*{Appendix}

\section{Numerical Experiments}\label{sec:numerics_appendix}
In this section we provide additional details for the numerical experiments presented in Section~\ref{sec:numerics}.
The source code and the required datasets are available at~\url{https://github.com/sukiboo/policy_entropy}.

\begin{table}[h]
	\centering
	\begin{tabular}{cccc}
		\toprule
		 & \multicolumn{3}{c}{Parameters}
		\\\cmidrule{2-4}
		Agent & Architecture & Batch size & Learning rate
		\\\midrule
		A2C & [64,64] & 32 & 1e-3
		\\
		DQN & [64,64] & 32 & 3e-4
		\\
		PPO & [64,64] & 32 & 1e-3
		\\\bottomrule
	\end{tabular}
	\caption{Hyperparameter selection for the Image Classification Experiment on MNIST dataset.}
	\label{tab:mnist_params}
\end{table}

\begin{table}[h]
	\centering
	\begin{tabular}{cccc}
		\toprule
		 & \multicolumn{3}{c}{Parameters}
		\\\cmidrule{2-4}
		Agent & Architecture & Batch size & Learning rate
		\\\midrule
		A2C & [128,128] & 32 & 1e-4
		\\
		DQN & [128,128] & 32 & 5e-6
		\\
		PPO & [128,128] & 32 & 1e-4
		\\\bottomrule
	\end{tabular}
	\caption{Hyperparameter selection for the Image Classification Experiment on CIFAR10 dataset.}
	\label{tab:cifar10_params}
\end{table}

\begin{figure}[h]
	\centering
	\begin{subfigure}{.32\linewidth}
		\includegraphics[width=\linewidth]{./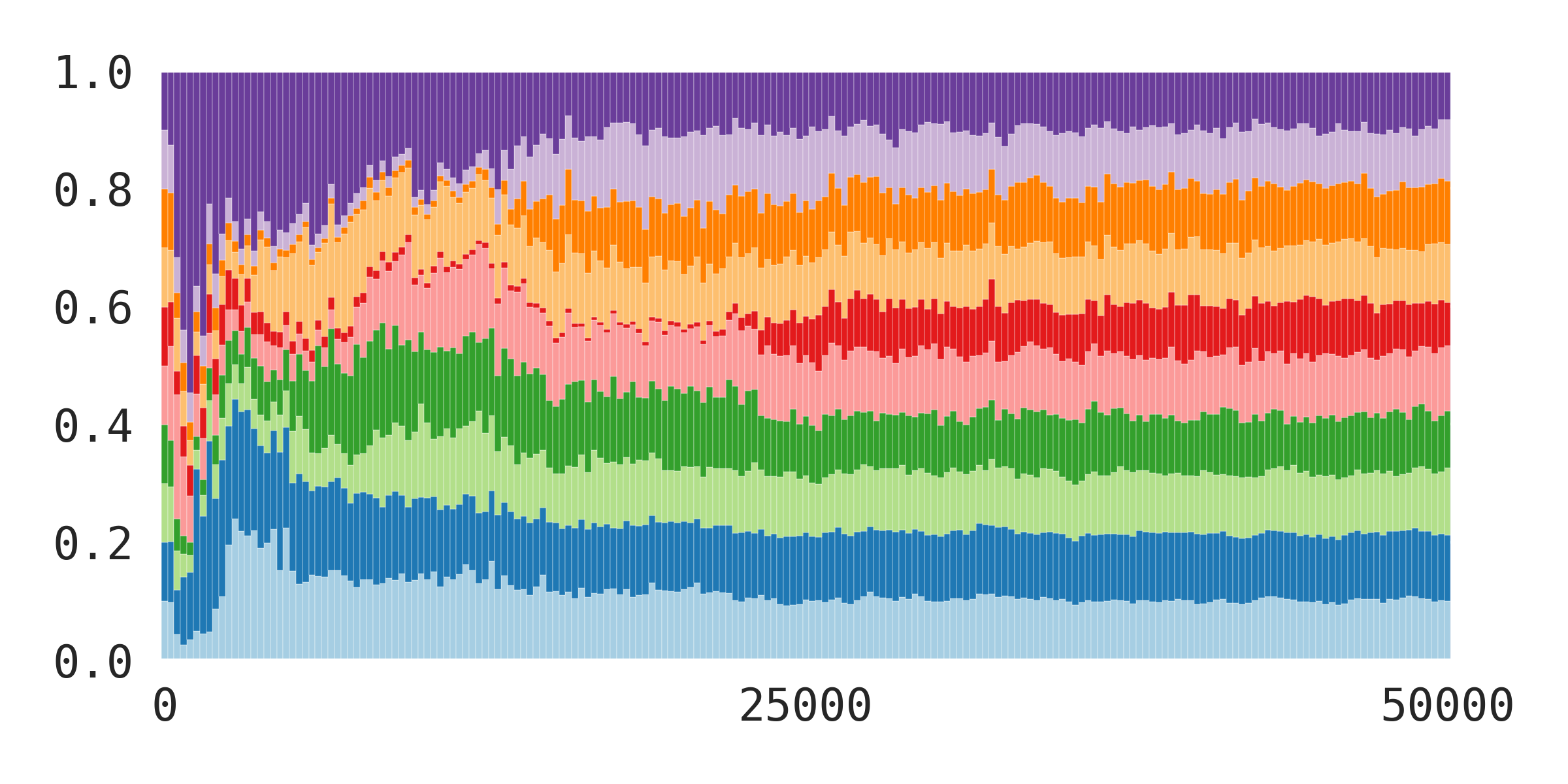}
		\caption{A2C histogram}
	\end{subfigure}
	\begin{subfigure}{.32\linewidth}
		\includegraphics[width=\linewidth]{./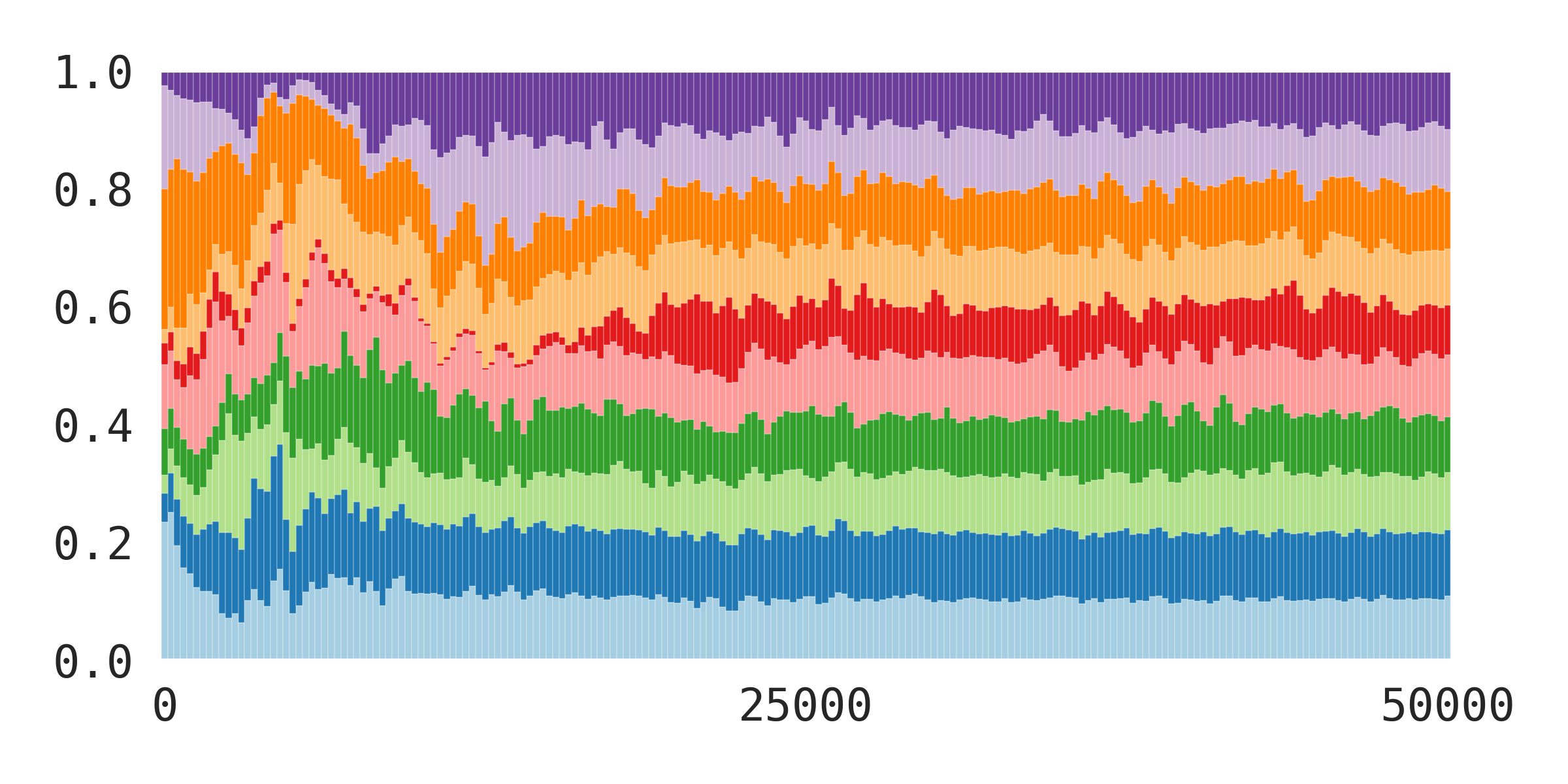}
		\caption{DQN histogram}
	\end{subfigure}
	\begin{subfigure}{.32\linewidth}
		\includegraphics[width=\linewidth]{./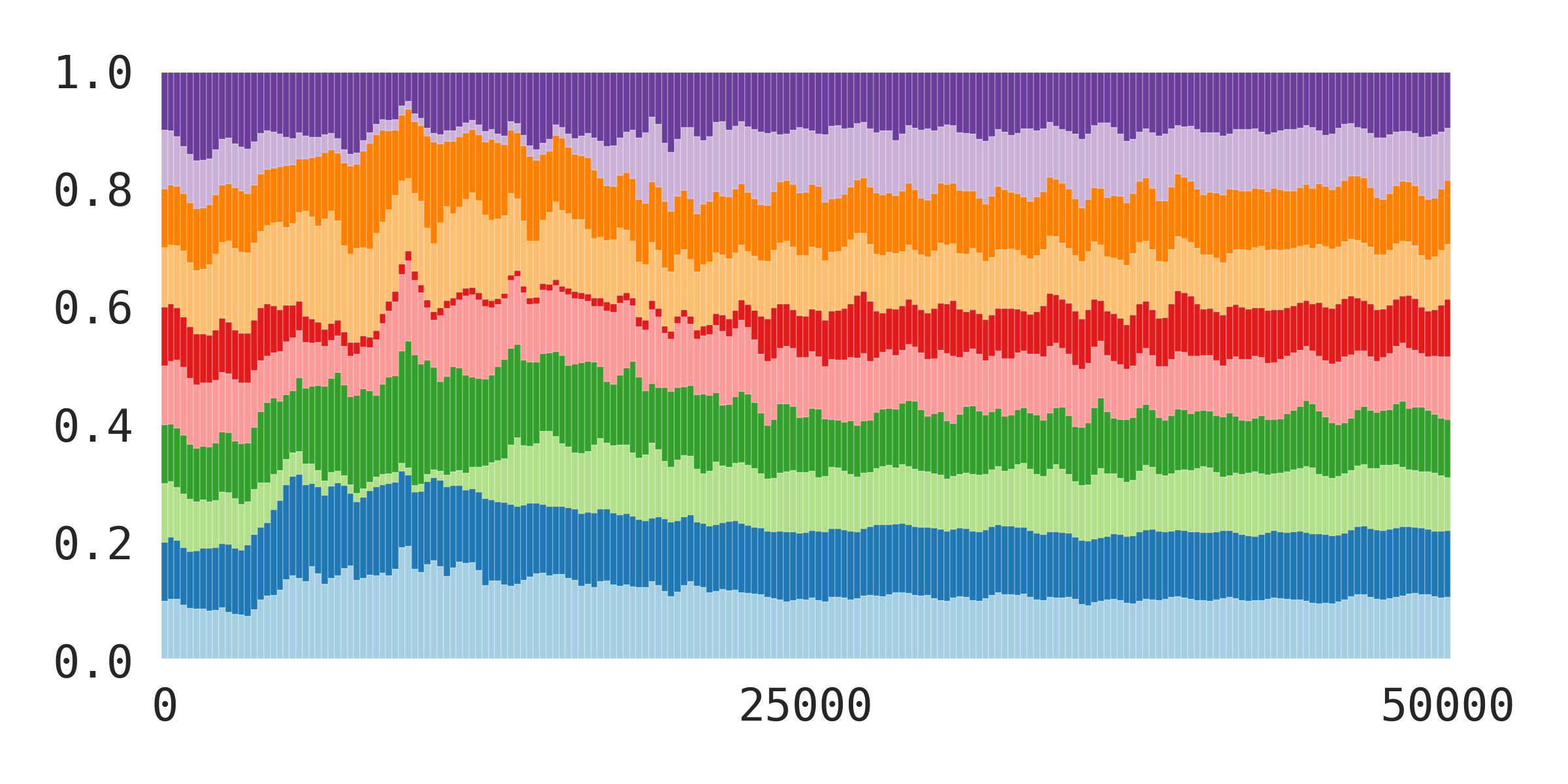}
		\caption{PPO histogram}
	\end{subfigure}
	\caption{Unsorted action selection histograms from the Image Classification Experiment on MNIST dataset.}
	\label{fig:mnist_dist_raw}
\end{figure}

\begin{figure}[h]
	\centering
	\begin{subfigure}{.32\linewidth}
		\includegraphics[width=\linewidth]{./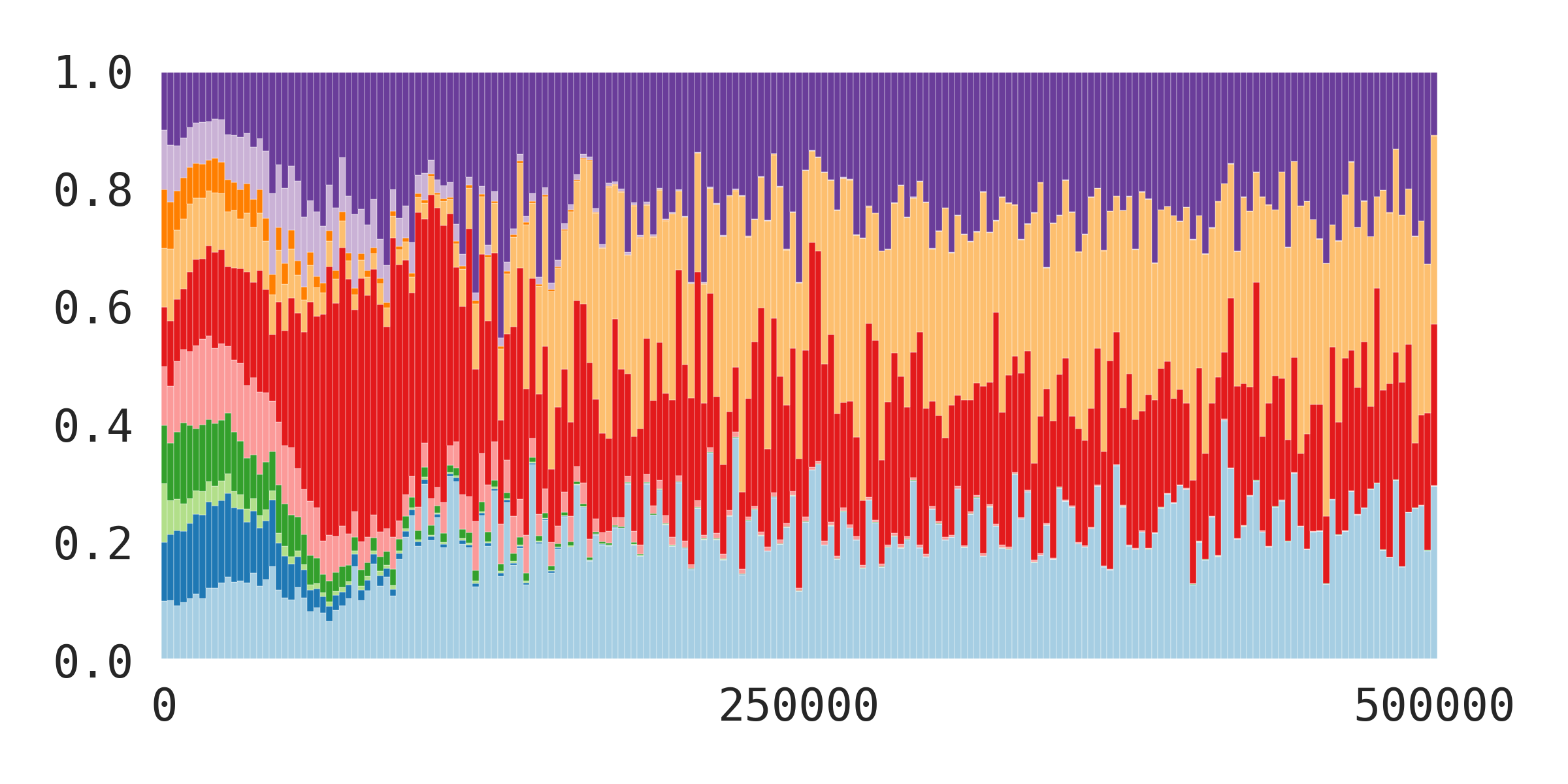}
		\caption{A2C histogram}
	\end{subfigure}
	\begin{subfigure}{.32\linewidth}
		\includegraphics[width=\linewidth]{./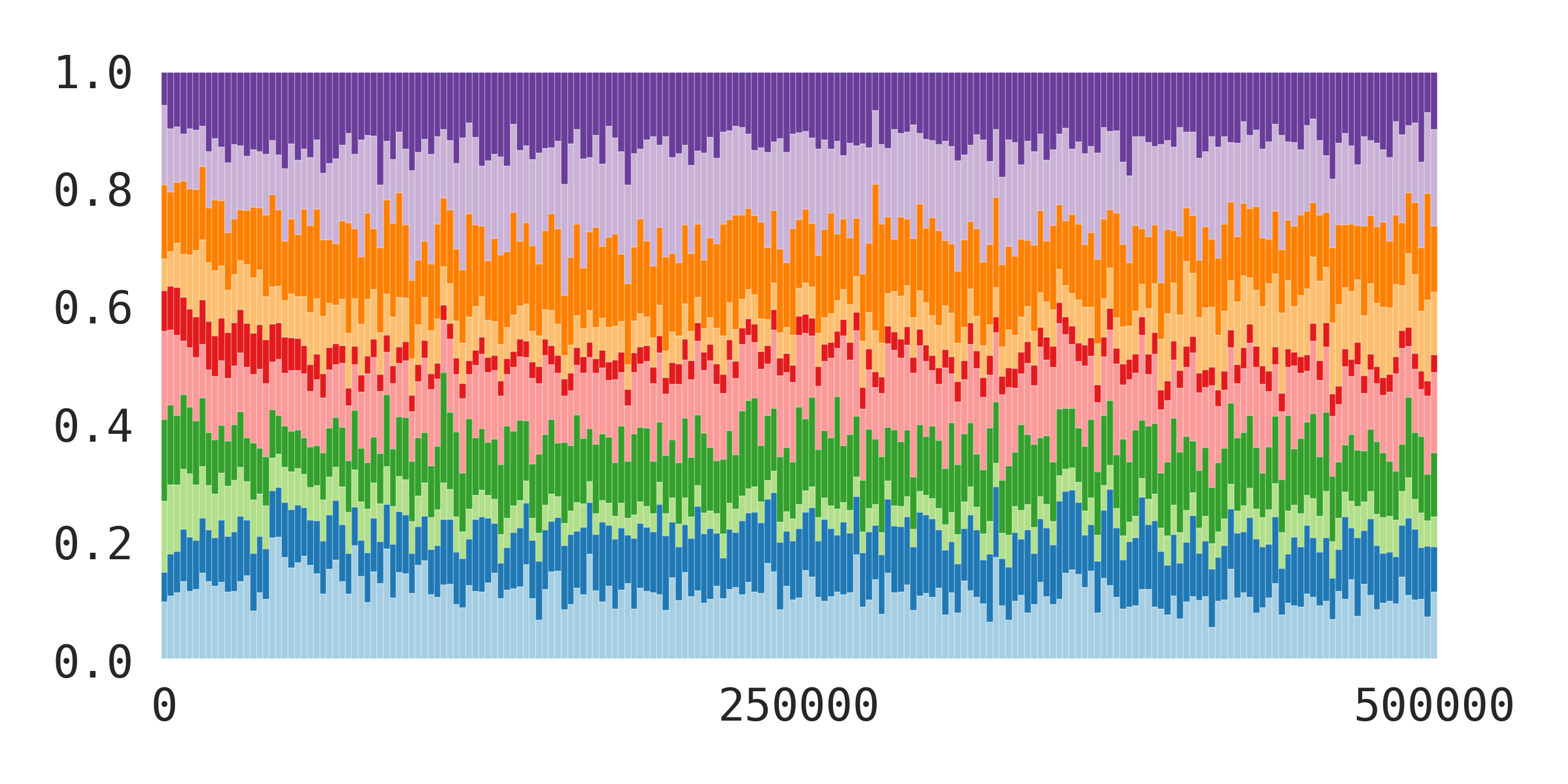}
		\caption{DQN histogram}
	\end{subfigure}
	\begin{subfigure}{.32\linewidth}
		\includegraphics[width=\linewidth]{./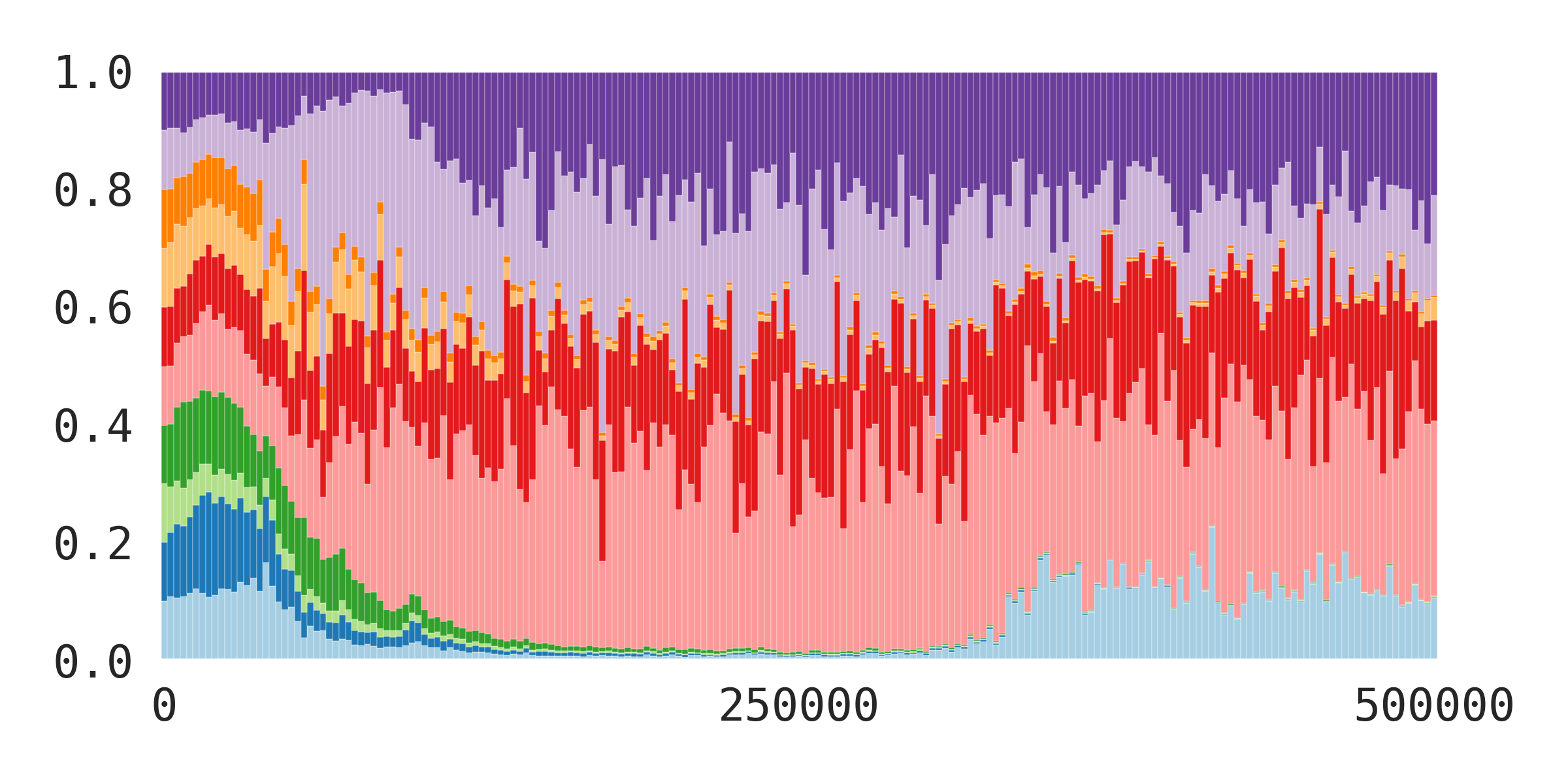}
		\caption{PPO histogram}
	\end{subfigure}
	\caption{Unsorted action selection histograms from the Image Classification Experiment on CIFAR10 dataset.}
	\label{fig:cifar10_dist_raw}
\end{figure}

\begin{table}[h]
	\centering
	\begin{tabular}{cccc}
		\toprule
		 & \multicolumn{3}{c}{Parameters}
		\\\cmidrule{2-4}
		Agent & Architecture & Batch size & Learning rate
		\\\midrule
		A2C & [256,256] & 32 & 1e-3
		\\
		DQN & [256,256] & 32 & 1e-3
		\\
		PPO & [256,256] & 32 & 1e-3
		\\\bottomrule
	\end{tabular}
	\caption{Hyperparameter selection for the Music Recommendation Experiment.}
	\label{tab:spotify_params}
\end{table}

\begin{table}
	\centering\small
	\csvautobooktabular{./images/spotify_genres.csv}
	\caption{Playlists of various musical genres on Spotify, as of 08/16/2023.}
	\label{tab:spotify_genres}
\end{table}

\begin{table}
	\centering\fontsize{8}{9.6}\selectfont
	\csvautobooktabular{./images/spotify_features.csv}
	\caption{Audio features of musical genres for the Music Recommendation Experiment, as of 08/16/2023.}
	\label{tab:spotify_features}
\end{table}

\begin{table}
	\centering\fontsize{8}{9.6}\selectfont
	\csvautobooktabular{./images/spotify_top50.csv}
	\caption{Tracks from `Top 50 - Global' playlist on Spotify, as of 08/16/2023.}
	\label{tab:spotify_actions}
\end{table}

\begin{figure}[h]
	\centering
	\begin{subfigure}{.32\linewidth}
		\includegraphics[width=\linewidth]{./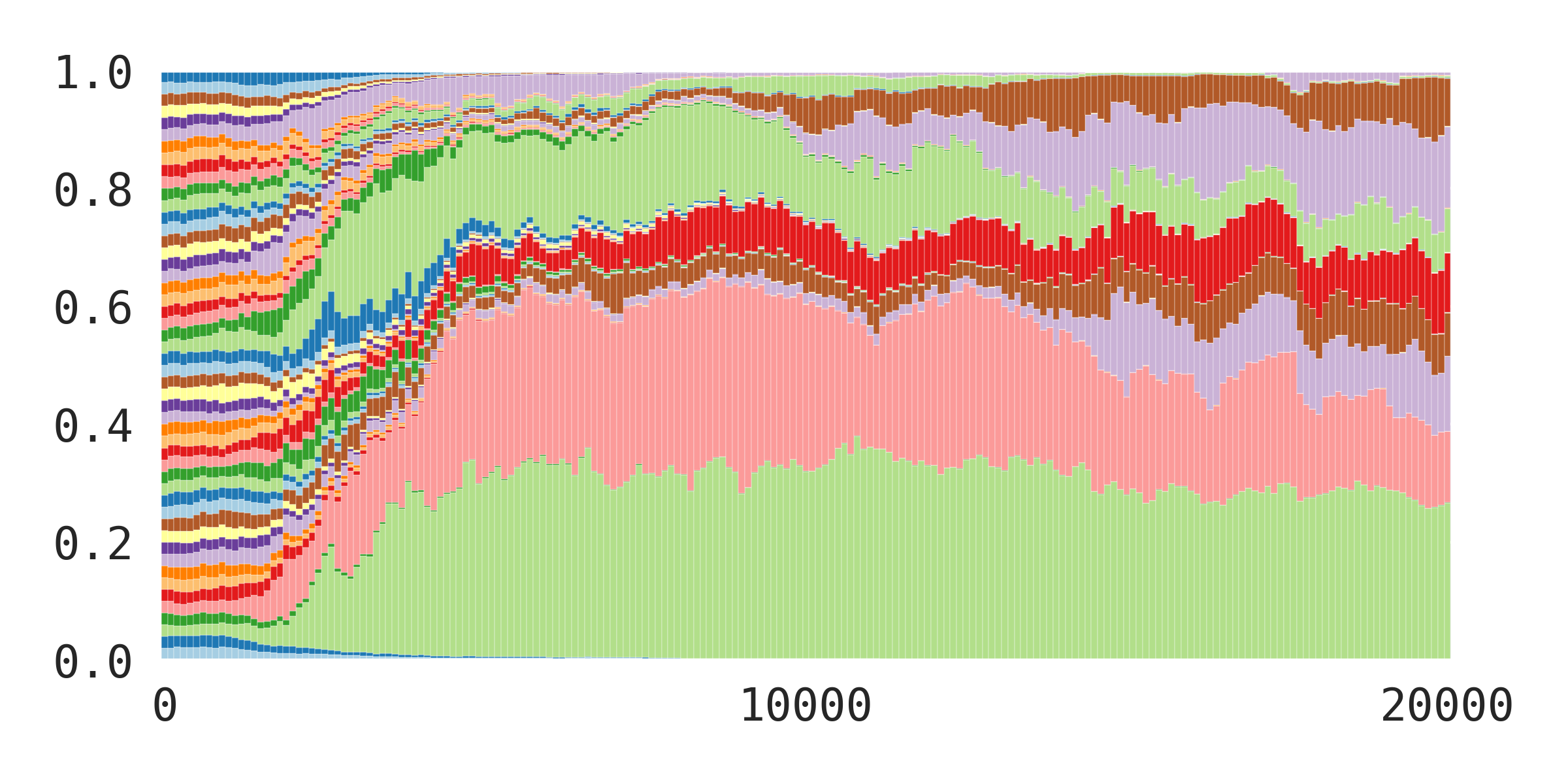}
		\caption{A2C histogram}
	\end{subfigure}
	\begin{subfigure}{.32\linewidth}
		\includegraphics[width=\linewidth]{./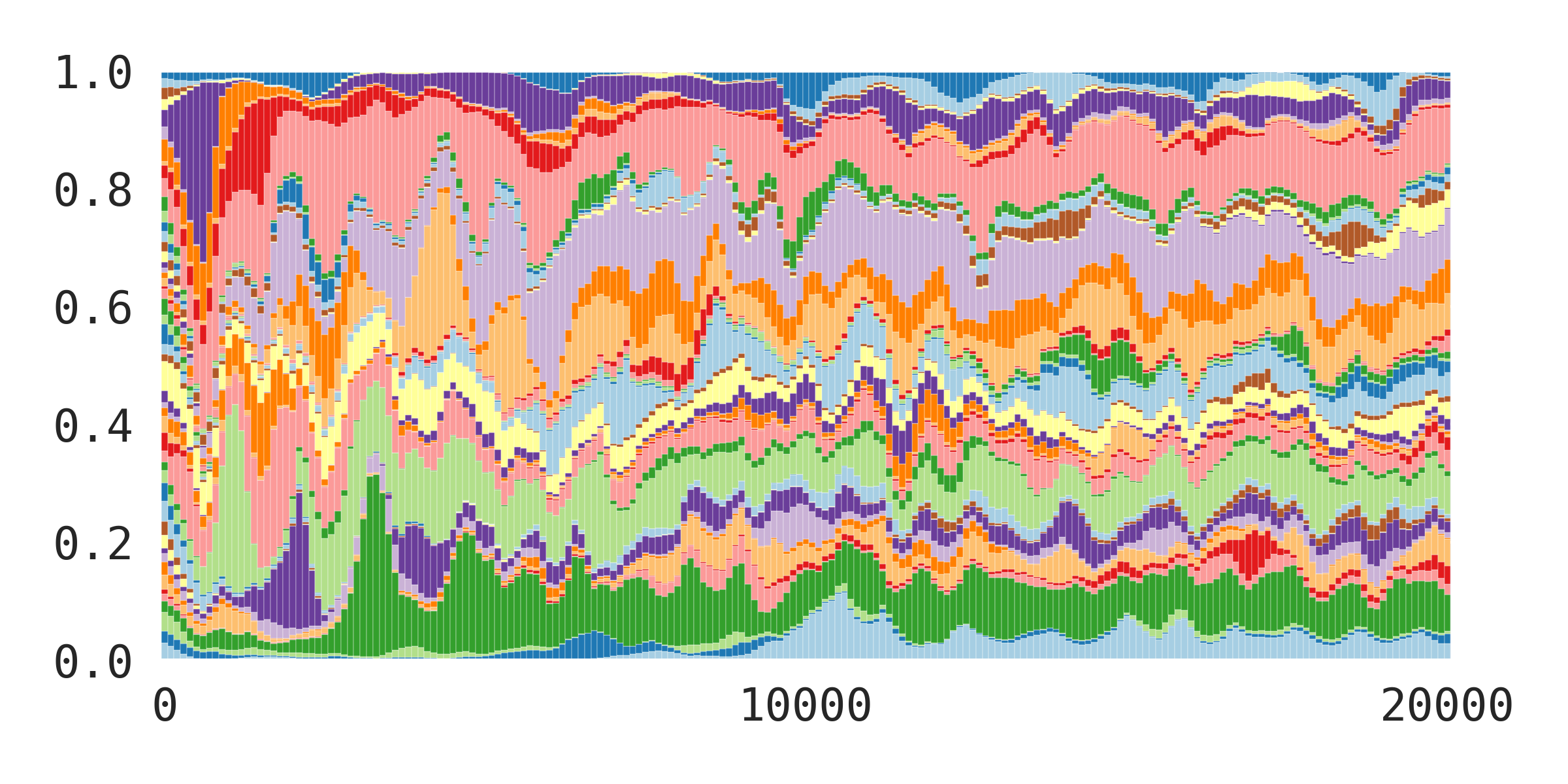}
		\caption{DQN histogram}
	\end{subfigure}
	\begin{subfigure}{.32\linewidth}
		\includegraphics[width=\linewidth]{./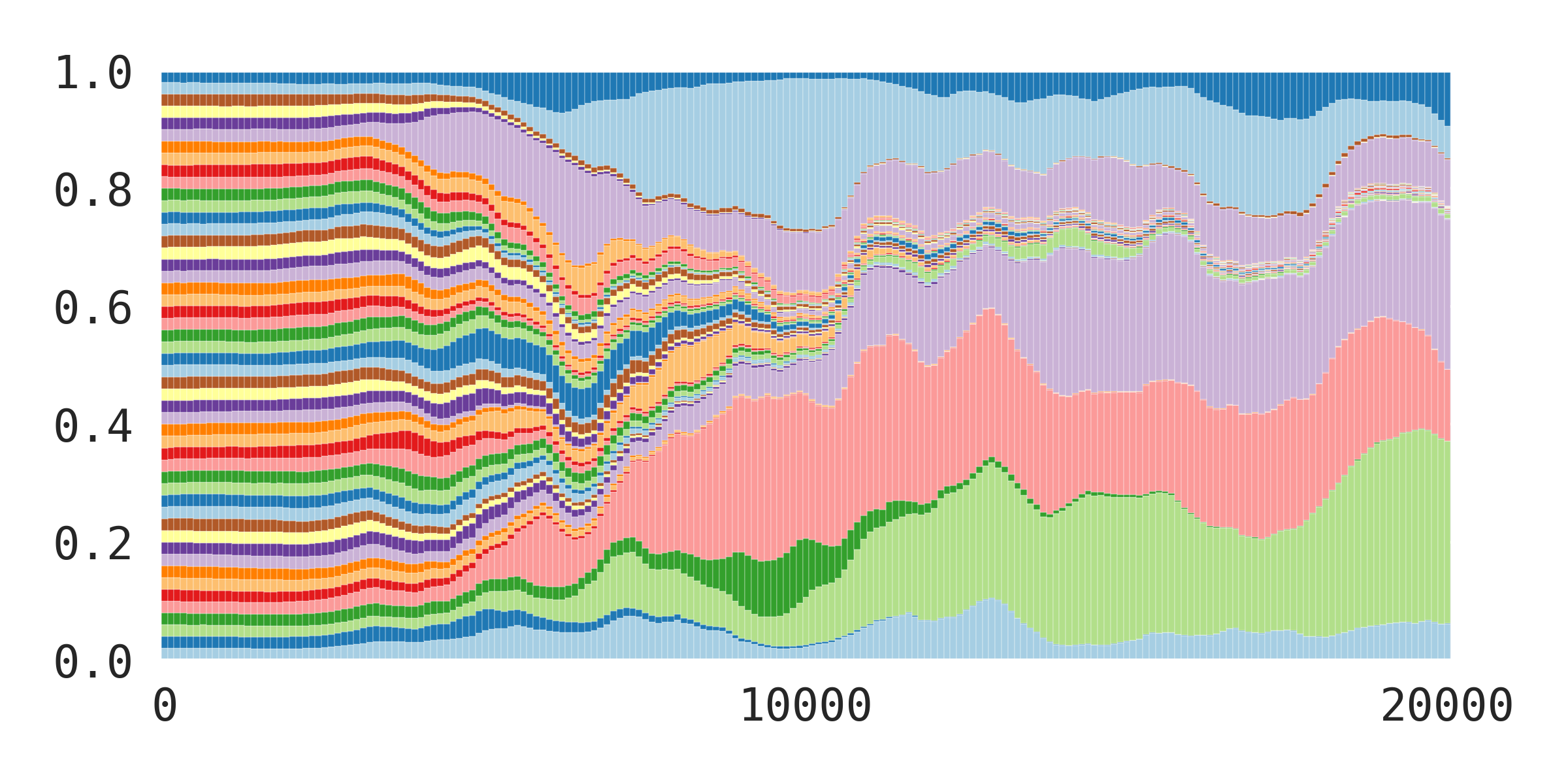}
		\caption{PPO histogram}
	\end{subfigure}
	\caption{Unsorted action selection histograms from the Music Recommendation Experiment.}
	\label{fig:spotify_dist_raw}
\end{figure}

\begin{table}[h]
	\centering
	\begin{tabular}{cccc}
		\toprule
		 & \multicolumn{3}{c}{Parameters}
		\\\cmidrule{2-4}
		Agent & Architecture & Batch size & Learning rate
		\\\midrule
		A2C & [512,512] & 32 & 1e-4
		\\
		DQN & [512,512] & 32 & 1e-4
		\\
		PPO & [512,512] & 32 & 1e-4
		\\\bottomrule
	\end{tabular}
	\caption{Hyperparameter selection for the Online Advertisement Experiment.}
	\label{tab:recogym_params}
\end{table}

\begin{figure}[h]
	\centering
	\begin{subfigure}{.32\linewidth}
		\includegraphics[width=\linewidth]{./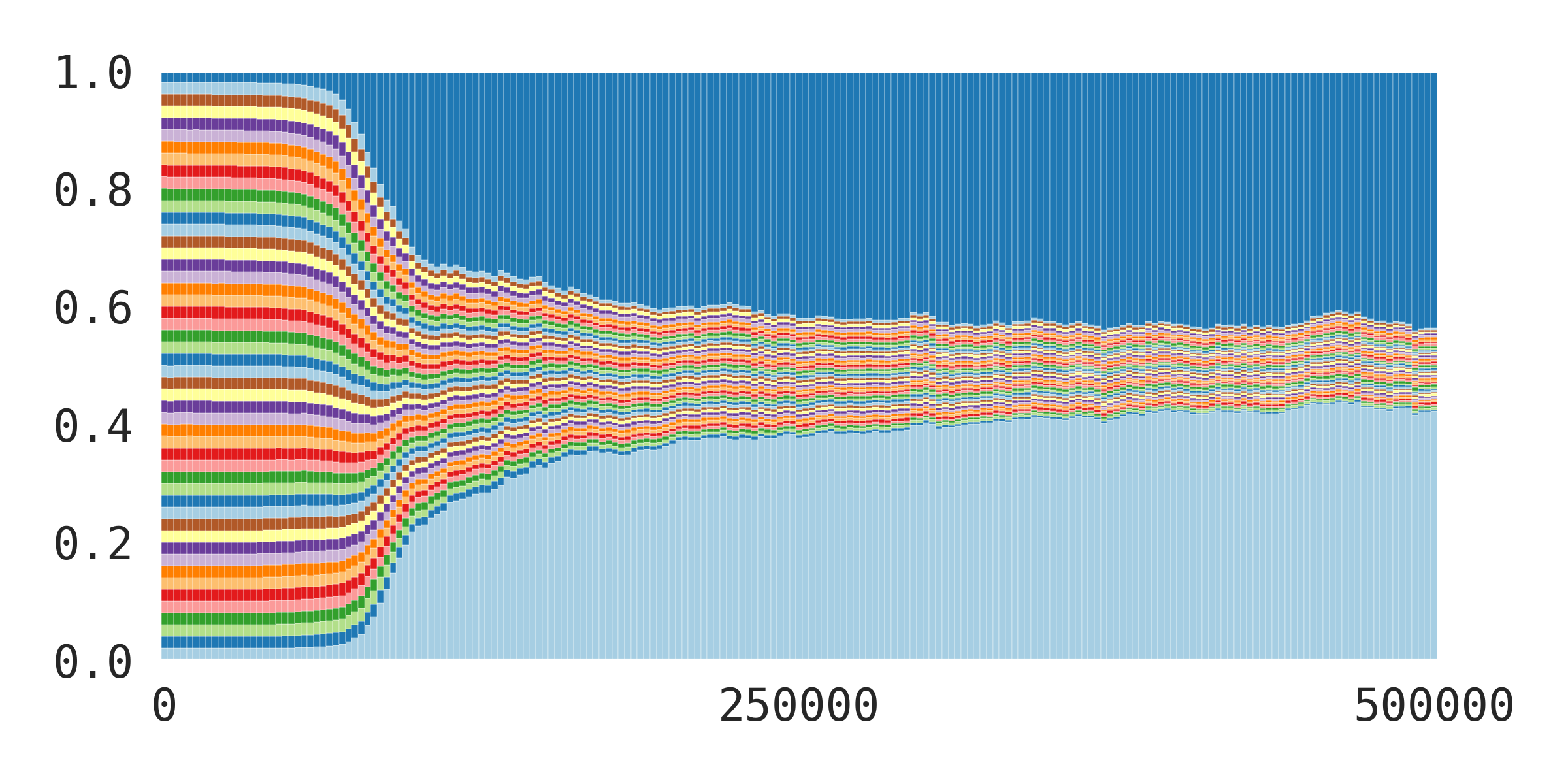}
		\caption{A2C histogram}
	\end{subfigure}
	\begin{subfigure}{.32\linewidth}
		\includegraphics[width=\linewidth]{./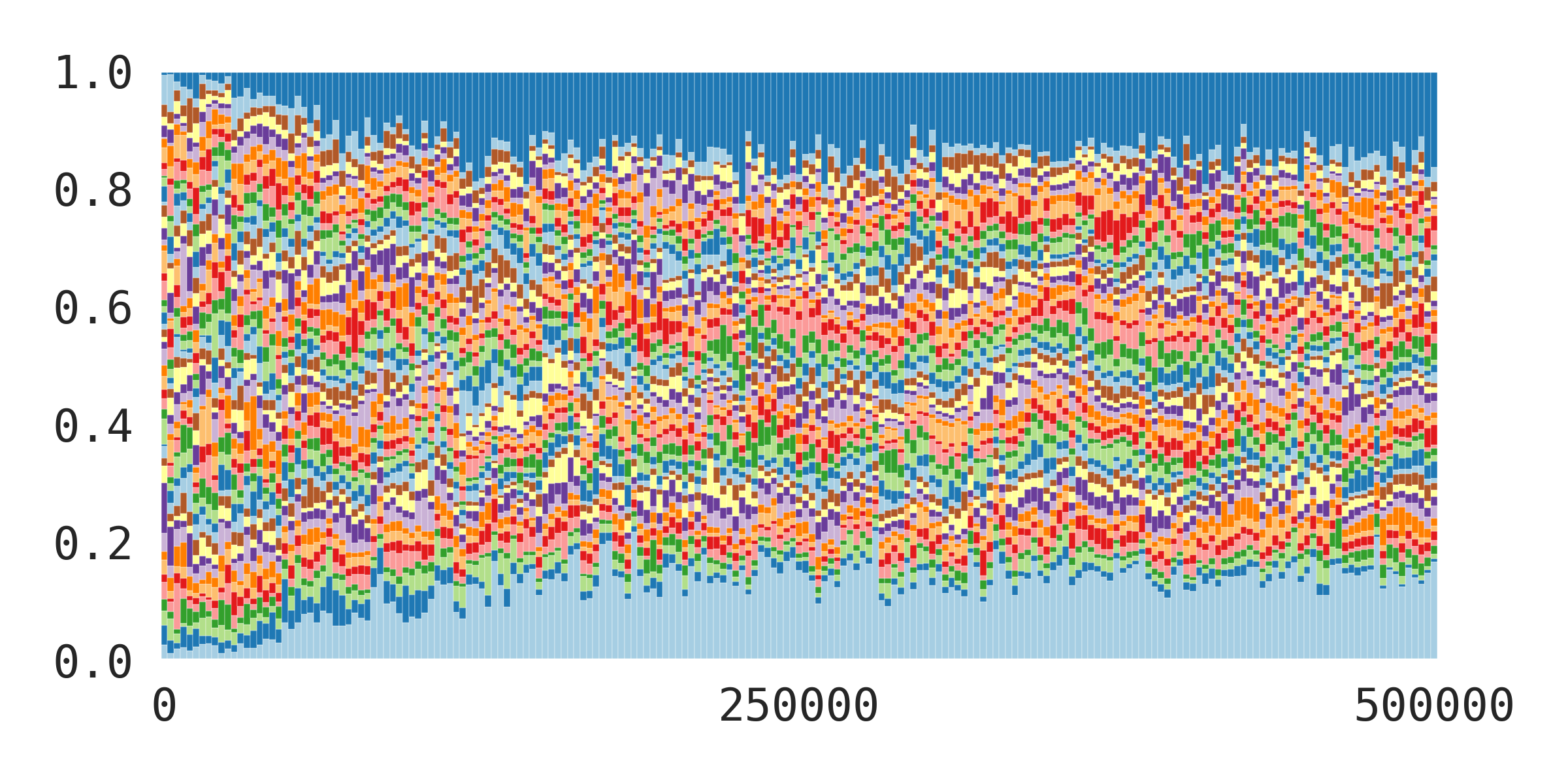}
		\caption{DQN histogram}
	\end{subfigure}
	\begin{subfigure}{.32\linewidth}
		\includegraphics[width=\linewidth]{./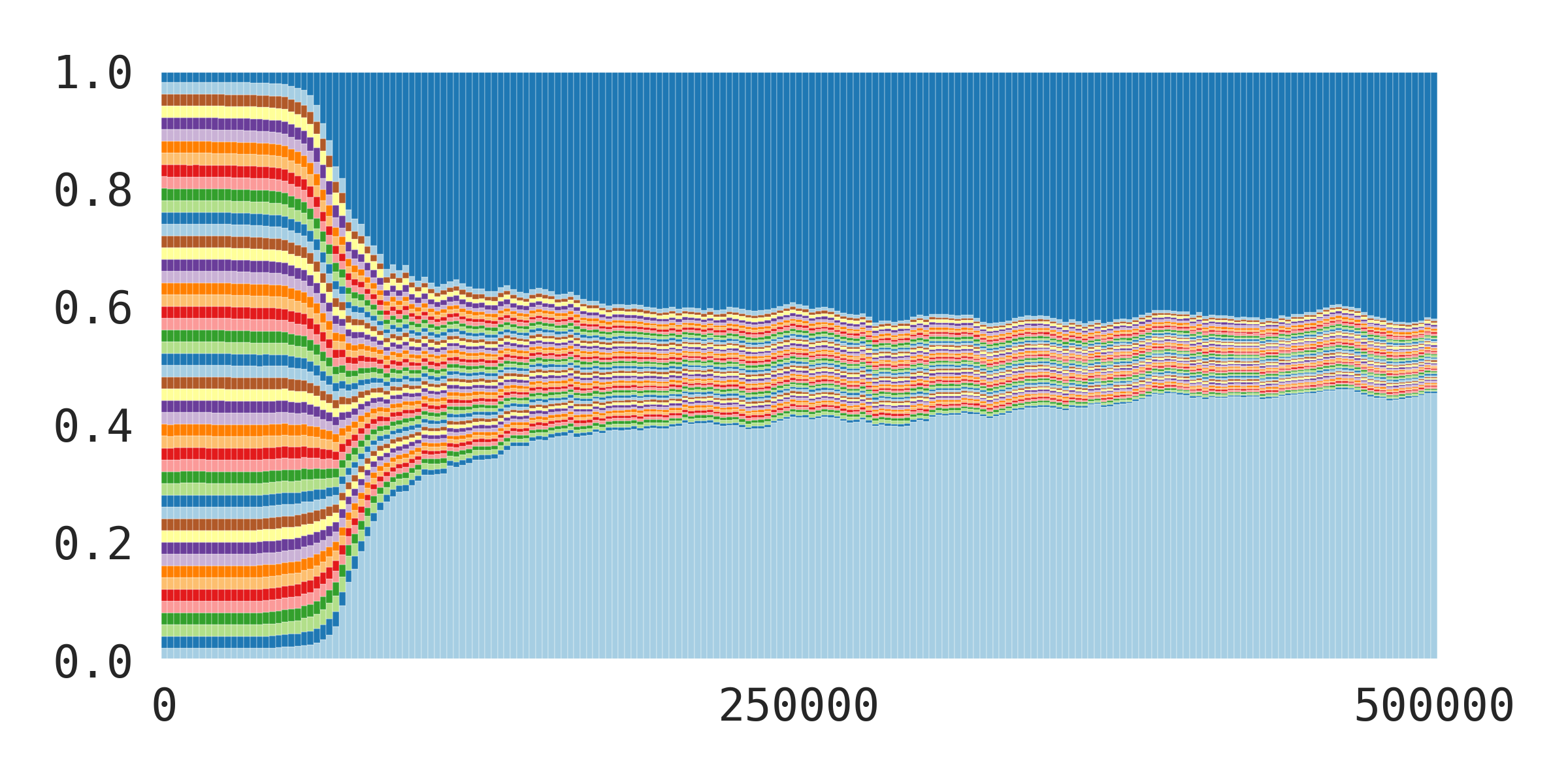}
		\caption{PPO histogram}
	\end{subfigure}
	\caption{Unsorted action selection histograms from the Online Advertisement Experiment.}
	\label{fig:recogym_dist_raw}
\end{figure}

\begin{table}[h]
	\centering
	\begin{tabular}{cccc}
		\toprule
		 & \multicolumn{3}{c}{Parameters}
		\\\cmidrule{2-4}
		Agent & Architecture & Batch size & Learning rate
		\\\midrule
		A2C & [512,512] & 32 & 1e-3
		\\
		DQN & [512,512] & 32 & 1e-3
		\\
		PPO & [512,512] & 32 & 1e-3
		\\\bottomrule
	\end{tabular}
	\caption{Hyperparameter selection for the Behavioral Preference Experiment.}
	\label{tab:personalization_params}
\end{table}

\begin{figure}[h]
	\centering
	\begin{subfigure}{.32\linewidth}
		\includegraphics[width=\linewidth]{./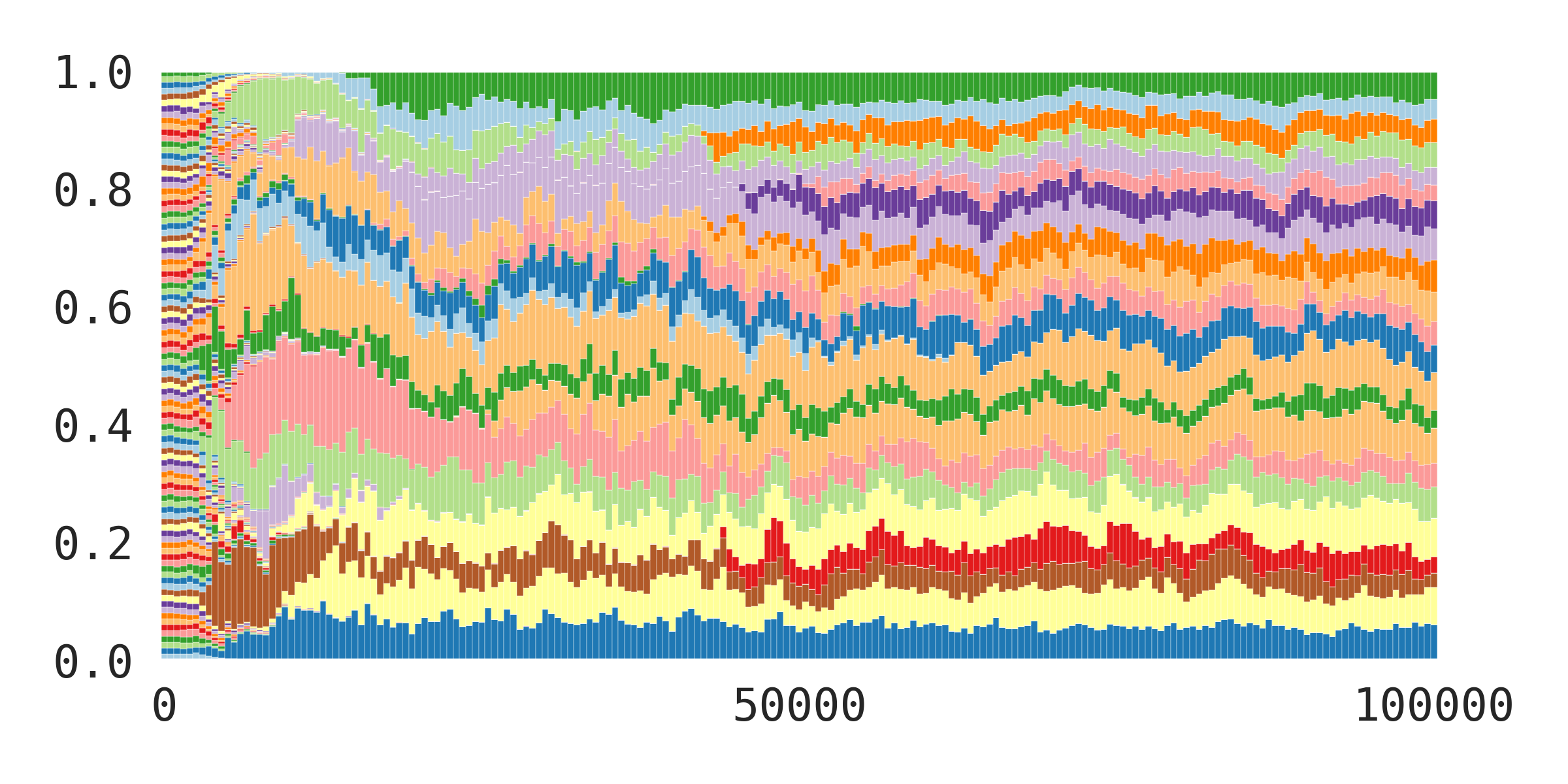}
		\caption{A2C histogram}
	\end{subfigure}
	\begin{subfigure}{.32\linewidth}
		\includegraphics[width=\linewidth]{./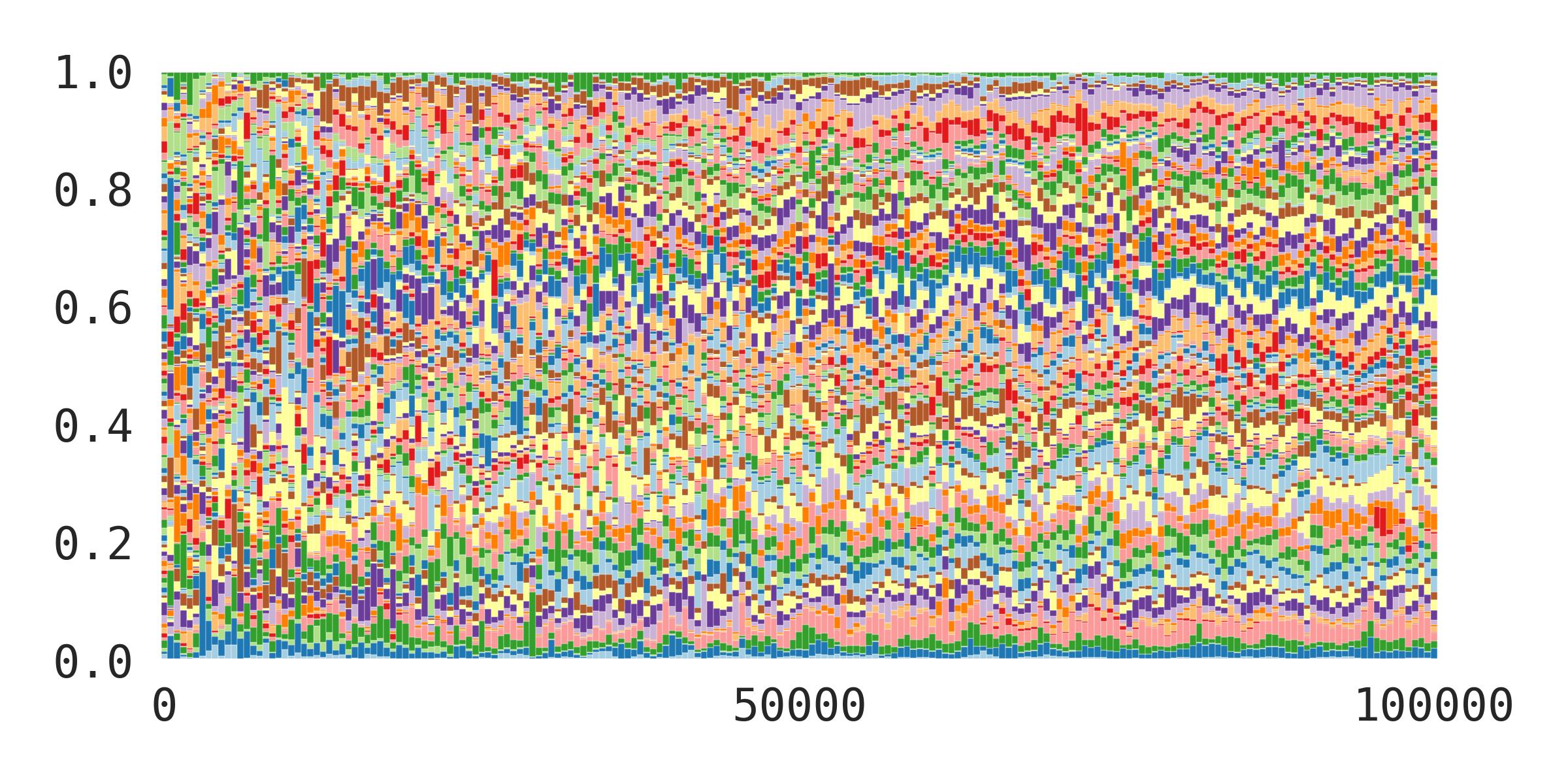}
		\caption{DQN histogram}
	\end{subfigure}
	\begin{subfigure}{.32\linewidth}
		\includegraphics[width=\linewidth]{./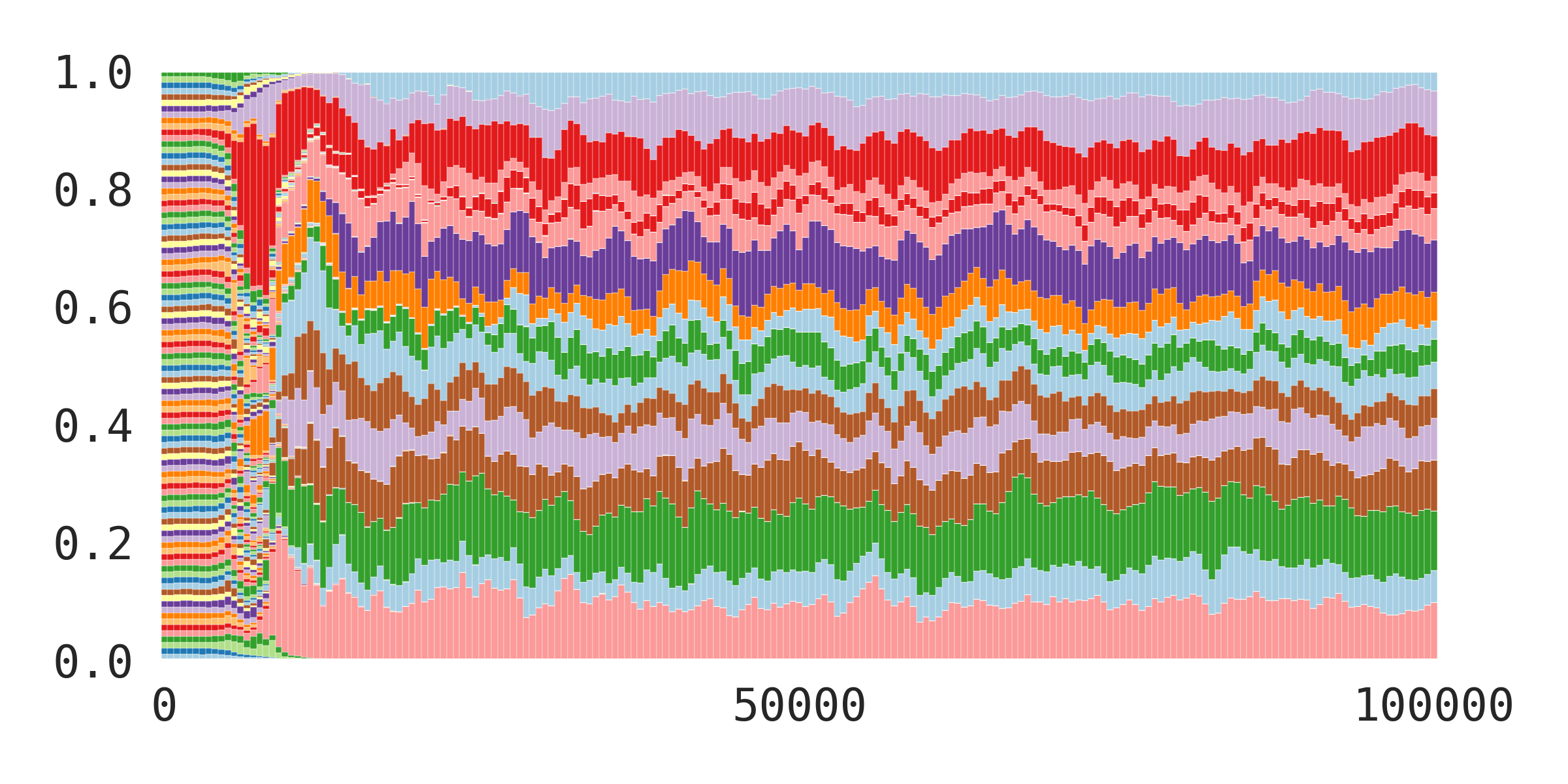}
		\caption{PPO histogram}
	\end{subfigure}
	\caption{Unsorted action selection histograms from the Behavioral Preference Experiment.}
	\label{fig:personalization_dist_raw}
\end{figure}

\section{Theoretical Analysis}\label{sec:theory_appendix}
In this section we provide the proofs for the theoretical results stated in Section~\ref{sec:theory}.
We begin with an auxiliary lemma.

\begin{lemma}\label{thm:grad_pi}
Let the policy $\pi = \pi(\Theta)$ be obtained as softmax-normalization of the outputs $\mathcal{Z} = \{z_1, \ldots, z_K\}$, i.e.
\[
	\pi(\,\cdot\,;\Theta) = \softmax(\mathcal{Z}(\,\cdot\,;\Theta)).
\]
Then
\[
	\nabla \pi(a|s) = \pi(a|s) \big[\mathbbm{1}(a=k) - \pi(k|s)\big]_{k=1}^K \times \nabla\mathcal{Z}(s).
\]
\end{lemma}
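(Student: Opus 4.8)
The plan is to compute the gradient of $\pi(a|s)$ directly from the definition of the softmax. Writing $\pi(a|s) = \softmax(\mathcal{Z}(s))_a = e^{z_a(s)} / \sum_{j=1}^K e^{z_j(s)}$, I would first apply the chain rule to express $\nabla \pi(a|s)$ in terms of the partial derivatives $\partial \pi(a|s) / \partial z_k$ composed with $\nabla z_k(s)$, so that $\nabla \pi(a|s) = \sum_{k=1}^K \frac{\partial \pi(a|s)}{\partial z_k} \nabla z_k(s)$. Collecting the $\nabla z_k(s)$ into the Jacobian $\nabla \mathcal{Z}(s)$, this is exactly the matrix--vector product $\big[\partial \pi(a|s)/\partial z_k\big]_{k=1}^K \times \nabla \mathcal{Z}(s)$, so the whole statement reduces to the standard identity $\partial \pi(a|s)/\partial z_k = \pi(a|s)\big(\mathbbm{1}(a=k) - \pi(k|s)\big)$.

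The core computation is then the elementary softmax derivative. I would split into the two cases $k = a$ and $k \neq a$. For $k = a$, using the quotient rule on $e^{z_a}/\sum_j e^{z_j}$ gives $\frac{e^{z_a}}{\sum_j e^{z_j}} - \frac{(e^{z_a})^2}{(\sum_j e^{z_j})^2} = \pi(a|s) - \pi(a|s)^2 = \pi(a|s)(1 - \pi(a|s))$. For $k \neq a$, only the denominator depends on $z_k$, so the derivative is $-\frac{e^{z_a} e^{z_k}}{(\sum_j e^{z_j})^2} = -\pi(a|s)\pi(k|s)$. Both cases are captured uniformly by $\pi(a|s)\big(\mathbbm{1}(a=k) - \pi(k|s)\big)$, and factoring out $\pi(a|s)$ yields the bracketed vector in the statement.

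There is no real obstacle here — the result is a bookkeeping identity — so the only thing to be careful about is notational consistency: making sure the Jacobian $\nabla \mathcal{Z}(s)$ is arranged so that the row vector $\big[\mathbbm{1}(a=k) - \pi(k|s)\big]_{k=1}^K$ multiplies it on the left in the sense intended by the paper (i.e. contracting over the output index $k$), and that the final object $\nabla \pi(a|s)$ has the same shape as each $\nabla z_k(s)$ (a gradient with respect to the parameters $\Theta$). I would state the chain rule step explicitly to fix this convention, then present the two-case softmax derivative, and conclude by substituting and factoring.
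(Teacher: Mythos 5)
Your proposal is correct and follows essentially the same route as the paper: the two-case softmax partial derivative $\partial \pi(a|s)/\partial z_k = \pi(a|s)\big(\mathbbm{1}(a=k) - \pi(k|s)\big)$ followed by the chain rule over the output coordinates to assemble the product with the Jacobian $\nabla\mathcal{Z}(s)$. The only difference is cosmetic — the paper works coordinate-wise over each trainable weight $\theta$ before stacking, whereas you state the contraction directly — so nothing further is needed.
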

\begin{proof}[Proof of Lemma~\ref{thm:grad_pi}]
For any state $s \in \mathcal{S}$ and action $a \in \mathcal{A} = \{1, 2, \ldots, K\}$ we have
\[
	\pi(a|s) = \softmax(\mathcal{Z}(s))\big|_a
	= \frac{\exp(z_a(s))}{\sum_{k=1}^K \exp(z_k(s))}.
\]
Then for any $k \in \{1, 2, \ldots, K\}$ we derive
\[
	\frac{\partial \pi(a|s)}{\partial z_k}
	= \frac{\partial}{\partial z_k} \frac{\exp(z_a(s))}{\sum_{k=1}^K \exp(z_k(s))}
	= \left\{\begin{array}{ll}
		\pi(a|s) (1 - \pi(k|s)) & \text{ if } k = a
		\\
		-\pi(a|s)\pi(k|s) & \text{ if } k \neq a
		\end{array}\right.
\]
and for any weight $\theta$ we write:
\begin{align*}
	\frac{\partial \pi(a|s)}{\partial \theta}
	= \sum_{k=1}^K \frac{\partial \pi(a|s)}{\partial z_k} \frac{\partial z_k(s)}{\partial \theta}
	&= \pi(a|s) \sum_{k=1}^K \big( \mathbbm{1}(a=k) - \pi(k|s) \big) \frac{\partial z_k(s)}{\partial \theta}
	\\
	&= \pi(a|s) \ub{\big[\mathbbm{1}(a=k) - \pi(k|s)\big]_{k=1}^K}{1 \times K}
	\times \ub{\frac{\partial\mathcal{Z}(s)}{\partial \theta}}{K \times 1}.
\end{align*}
Combining over all trainable weights $\Theta$, we obtain
\[
	\nabla \pi(a|s)
	= \pi(a|s) \ub{\big[\mathbbm{1}(a=k) - \pi(k|s)\big]_{k=1}^K}{1 \times K}
	\times \ub{\nabla\mathcal{Z}(s)}{K \times M}.
\]
\end{proof}

Next we prove our theoretical result regarding the linear network agents.

\begin{proof}[Proof of Theorem~\ref{thm:outputs_linear}]
Let $\mathcal{S} \subset \mathbb{R}^d$ and $\mathcal{A} \subset \mathbb{R}^K$, then the linear network $\mathcal{Z} : \mathcal{S} \to \mathbb{R}^K$ is given as
\[
	\mathcal{Z}(s) = \ub{W}{K \times d} \times \ub{s}{d} \in \mathbb{R}^K,
\]
where $W \in \mathbb{R}^{K \times d}$ is the matrix of trainable weights, that is updated via the gradient descent optimization as
\[
	W^\prime = W - \frac{\lambda}{N} \sum_{n=1}^N \nabla\mathcal{L}(s_n,a_n,r_n)
\]
on the batch of transitions $\{(s_n,a_n,r_n)\}_{n=1}^N$.
Then the network outputs $\mathcal{Z}$ change as follows
\[
	\mathcal{Z}^\prime(x)
	= W^\prime \times x
	= \mathcal{Z}(x) - \frac{\lambda}{N} \sum_{n=1}^N \nabla\mathcal{L}(s_n,a_n,r_n) \times x.
\]
The exact formulation of the loss gradient $\nabla\mathcal{L}$ differs depending on the type of agent we consider: policy gradient or q-learning.

For policy gradient agent the loss gradient $\nabla\mathcal{L}_{pg}$ is given as (see e.g.~\cite[Chapter~13]{sutton2018reinforcement})
\[
	\nabla\mathcal{L}_{pg}(s,a,r)
	= - r \nabla\log\pi(a|s)
	= -r \ub{\big[ \mathbbm{1}(a=k) - \pi_{pg}(k|s) \big]_{k=1}^K}{1 \times K}
	\times \ub{\nabla\mathcal{Z}(s)}{K \times K \times d},
\]
where the last equality is justified by Lemma~\ref{thm:grad_pi}.
For any $1 \le k \le K$ we write
\[
	\nabla z_k(s) = \nabla \langle W_k, s \rangle
	= \left[\begin{array}{c}
		0 \\ \vdots \\ s \\ \vdots \\ 0
	\end{array}\right] \in \mathbb{R}^{K \times d},
\]
where $W_k$ is the $k$-th row of the weight matrix $W \in \mathbb{R}^{K \times d}$ and the last matrix consists of the vector $s \in \mathcal{S}$ on the $k$-th row and $0$ everywhere else.
Then we rewrite the loss gradient as
\[
	\nabla\mathcal{L}_{pg}(s,a,r)
	= -r \Big[ \big(\mathbbm{1}(a=k) - \pi_{pg}(k|s) \big) \, s \Big]_{k=1}^K
	\in \mathbb{R}^{K \times d}
\]
and the outputs update as
\[
	\mathcal{Z}^\prime(x)
	= \mathcal{Z}(x)
	+ \frac{\lambda}{N} \sum_{n=1}^N r_n \Big[ \big(\mathbbm{1}(a_n=k) - \pi_{pg}(k|s_n) \big) \, \langle s_n, x \rangle \Big]_{k=1}^K.
\]

For q-learning agent the loss gradient $\nabla\mathcal{L}_{ql}$ is given as
\[
	\nabla\mathcal{L}_{ql}(s,a,r)
	= \nabla (z_a(s) - r)^2
	= 2 (z_a(s) - r) \nabla z_a(s)
	= 2 (z_a(s) - r)
	\left[\begin{array}{c}
		0 \\ \vdots \\ s \\ \vdots \\ 0
	\end{array}\right] \in \mathbb{R}^{K \times d}
\]
and thus the outputs update rule is
\[
	\mathcal{Z}^\prime(x)
	= \mathcal{Z}(x)
	- \frac{2\lambda}{N} \sum_{n=1}^N \big( z_{a_n}(s_n) - r_n \big)
	\left[\begin{array}{c}
		0 \\ \vdots \\ \langle s_n, x \rangle \\ \vdots \\ 0
	\end{array}\right]
\]
where the only non-zero coordinate of the column vector is at the row $a_n \in \mathcal{A} = \{1, 2, \ldots, K\}$.
\end{proof}

Finally, we prove the general result for the network update in the contextual bandit environment.
We note that the loss functions used throughout the proofs are taken from the Stable-Baselines3 implementation so that our theoretical analysis matches our numerical experiments.
To the best of our knowledge, such formulations of the loss functions are the most commonly used in practice.

\begin{proof}[Proof of Theorem~\ref{thm:outputs}]
The update rule for the vector of trainable weights $\Theta \in \mathbb{R}^M$ is given by
\[
	\Theta^\prime = \Theta - \frac{\lambda}{N} \sum_{n=1}^N \nabla\mathcal{L}(s_n,a_n,r_n;\Theta),
\]
where the loss gradient $\nabla\mathcal{L}$ depends on the type of learning algorithm of the agent.
In order to quantify the change of the agent's outputs $\mathcal{Z} = \{z_1, \ldots, z_K\}$ with the update of the trainable weights $\Theta \leftarrow \Theta^\prime$ we use Taylor expansion to obtain
\begin{align}
	\nonumber
	\mathcal{Z}(\,\cdot\,;\Theta^\prime)
	&= \ub{\mathcal{Z}(\,\cdot\,;\Theta)}{K \times 1}
	+ \ub{\nabla\mathcal{Z}(\,\cdot\,;\Theta)}{K \times M}
	\times \ub{(\Theta^\prime - \Theta)^T}{M \times 1}
	+ \mathcal{O}(\|\Theta^\prime - \Theta\|^2)
	\\
	\label{eq:z_update}
	&= \mathcal{Z}(\,\cdot\,;\Theta)
	- \nabla\mathcal{Z}(\,\cdot\,;\Theta)
	\times \frac{\lambda}{N} \sum_{n=1}^N \nabla\mathcal{L}(s_n,a_n,r_n;\Theta)
	+ \mathcal{O}(\|\Theta^\prime - \Theta\|^2).
\end{align}
Next, we derive the loss gradient $\nabla\mathcal{L}$ for each of the agents we are using.

For A2C the loss gradient $\nabla\mathcal{L}_{a2c}$, see e.g.~\cite{zahavy2020self}, consists of the policy gradient $\nabla\mathcal{L}_\pi$ and the value gradient $\nabla\mathcal{L}_v$ (often referred to as the actor and critic loss, respectively):
\[
	\nabla\mathcal{L}_{a2c} = \nabla\mathcal{L}_\pi + \frac{1}{2} \nabla\mathcal{L}_v.
\]
The policy gradient $\nabla\mathcal{L}_\pi$, see e.g.~\cite{sutton1999policy}, is given as
\[
	\nabla\mathcal{L}_\pi(s,a,r) = -(r - v(s)) \nabla\log \pi(a|s)
\]
and the value gradient $\nabla\mathcal{L}_v$ as
\[
	\nabla\mathcal{L}_v(s,a,r) = \nabla (r - v(s))^2 = -2 (r - v(s)) \nabla v(s).
\]
By Lemma~\ref{thm:grad_pi} we rewrite the policy gradient $\nabla\mathcal{L}_\pi$ as
\[
	\nabla\mathcal{L}_\pi(s,a,r) = -(r - v(s)) \big[\mathbbm{1}(a=k) - \pi(k|s)\big]_{k=1}^K \nabla\mathcal{Z}(s)
\]
and derive
\begin{equation}\label{eq:a2c_loss}
	\nabla\mathcal{L}_{a2c}(s,a,r) = -\big( r - v(s) \big)
	\cdot \Big( \ub{\nabla v(s)}{1 \times M}
	+ \ub{\big[\mathbbm{1}(a=k) - \pi(k|s)\big]_{k=1}^K}{1 \times K} \times \ub{\nabla\mathcal{Z}(s)}{K \times M} \Big).
\end{equation}

For DQN the gradient loss $\nabla\mathcal{L}_{dqn}$ is given as
\begin{equation}\label{eq:dqn_loss}
	\nabla\mathcal{L}_{dqn}(s,a,r)
	= \nabla (r - z_a(s))^2
	= -2 (r - z_a(s)) \ub{\nabla z_a(s)}{1 \times M}.
\end{equation}

For PPO the loss gradient $\nabla\mathcal{L}_{ppo}$ consists of the clip loss gradient $\nabla\mathcal{L}_{clip}$ and the value gradient $\nabla\mathcal{L}_v$ as
\[
	\nabla\mathcal{L}_{ppo} = \nabla\mathcal{L}_{clip} + \frac{1}{2} \nabla\mathcal{L}_v.
\]
The clip loss gradient $\nabla\mathcal{L}_{clip}$~\cite{schulman2017proximal} that we use in our implementation is given as
\[
	\nabla\mathcal{L}_{clip}(s,a,r)
	= -(r - v(s)) \nabla \clip\left\{ \frac{\pi(a|s)}{\widetilde{\pi}(a|s)}, 1 - \epsilon, 1 + \epsilon \right\}
	= -(r - v(s)) \left\{\begin{array}{ll}
		\frac{\nabla \pi(a|s)}{\widetilde{\pi}(a|s)}
		& \text{ if } \left| 1 - \frac{\pi(a|s)}{\widetilde{\pi}(a|s)} \right| < \epsilon
		\\
		0 & \text{ otherwise}
		\end{array}\right.
\]
where $\widetilde{\pi}$ is the previous snapshot of the policy, and $\epsilon$ is the clip ratio (in our experiments we use the default value $\epsilon = 0.2$).
By Lemma~\ref{thm:grad_pi} we rewrite the clip loss gradient $\nabla\mathcal{L}_{clip}$ as
\[
	\nabla\mathcal{L}_{clip}(s,a,r)
	= -(r - v(s)) \big[\mathbbm{1}(a=k) - \pi(k|s)\big]_{k=1}^K \times \nabla\mathcal{Z}(s)
	\cdot \left\{\begin{array}{ll}
		\frac{\pi(a|s)}{\widetilde{\pi}(a|s)}
		& \text{ if } \left| 1 - \frac{\pi(a|s)}{\widetilde{\pi}(a|s)} \right| < \epsilon
		\\
		0 & \text{ otherwise}
	\end{array}\right.
\]
and derive
\begin{multline}\label{eq:ppo_loss}
	\nabla\mathcal{L}_{ppo}(s,a,r)
	= - \big( r - v(s) \big)
	\cdot \Bigg( \ub{\nabla v(s)}{1 \times M}
	\\
	+ \ub{\big[\mathbbm{1}(a=k) - \pi(k|s)\big]_{k=1}^K}{1 \times K}
	\times \ub{\nabla\mathcal{Z}(s)}{K \times M}
	\cdot \left\{\begin{array}{ll}
		\frac{\pi(a|s)}{\widetilde{\pi}(a|s)}
		& \text{ if } \left| 1 - \frac{\pi(a|s)}{\widetilde{\pi}(a|s)} \right| < \epsilon
		\\
		0 & \text{ otherwise}
	\end{array}\right.\Bigg).
\end{multline}

Finally, we substitute~\eqref{eq:a2c_loss}, \eqref{eq:dqn_loss}, and~\eqref{eq:ppo_loss} into~\eqref{eq:z_update} to complete the proof.
\end{proof}

\end{document}